
\documentclass[11pt]{article}
\usepackage[a4paper,bindingoffset=0.2in,%
            left=0.9in,right=0.9in,top=1in,bottom=1in,%
            footskip=.25in]{geometry}

\usepackage[utf8]{inputenc} 
\usepackage[T1]{fontenc}    
\usepackage{hyperref}       
\usepackage{url}            
\usepackage{booktabs}       
\usepackage{amsfonts}       
\usepackage{nicefrac}       
\usepackage{microtype}      

\usepackage{lipsum}

\usepackage{xcolor}

\usepackage{graphicx}
\usepackage{subfig}

\usepackage{tikz}
\usetikzlibrary{bayesnet}
\usepackage[printonlyused]{acronym}
\acrodef{FCNN}{Fully Connected Neural Network}
\acrodef{MLP}{Multi-Layer Perceptron}
\acrodef{GE}{generalization error}
\acrodef{QFS}{Quotient Feature Space}


\usepackage{amsmath,amsfonts,bm}




\def\Figref#1{Fig.~\ref{#1}}





\def\eqref#1{eq.~\ref{#1}}









\def\floor#1{\lfloor #1 \rfloor}
\def\1{\bm{1}}




\def\rvx{{\mathbf{x}}}
\def\rvy{{\mathbf{y}}}





\def\va{{\bm{a}}}

\def\ve{{\bm{e}}}

\def\vw{{\bm{w}}}
\def\vx{{\bm{x}}}



\def\mB{{\bm{B}}}
\def\mC{{\bm{C}}}

\def\mI{{\bm{I}}}

\def\mU{{\bm{U}}}

\def\mW{{\bm{W}}}
\def\mX{{\bm{X}}}

\DeclareMathAlphabet{\mathsfit}{\encodingdefault}{\sfdefault}{m}{sl}
\SetMathAlphabet{\mathsfit}{bold}{\encodingdefault}{\sfdefault}{bx}{n}


\def\gF{{\mathcal{F}}}

\def\gP{{\mathcal{P}}}

\def\gS{{\mathcal{S}}}
\def\gT{{\mathcal{T}}}

\def\gV{{\mathcal{V}}}

\def\gX{{\mathcal{X}}}


\def\sC{{\mathbb{C}}}


\def\sH{{\mathbb{H}}}

\def\sN{{\mathbb{N}}}

\def\sR{{\mathbb{R}}}

\def\sU{{\mathbb{U}}}

\def\sW{{\mathbb{W}}}




\usepackage{amsthm}
\usepackage{amssymb}
\newtheorem{theorem}{Theorem}[section]

\newtheorem{lemma}[theorem]{Lemma}
\theoremstyle{definition}
\newtheorem{definition}[theorem]{Definition}

\theoremstyle{remark}
\newtheorem{remark}[theorem]{Remark}


\newcommand*{\paran}[1]{\left(#1\right)}
\newcommand*{\prob}[1]{\mathbb{P}}

\def\defeq{:=}


\newcommand*{\norm}[1]{\left\|#1\right\|}
\newcommand*{\card}[1]{\left|#1\right|}

\def\P{\mathbb{P}}

\newcommand*{\bigO}[1]{{\mathcal{O}}\left(#1\right)}
\newcommand*{\bigOapprox}[1]{\tilde{\mathcal{O}}\left(#1\right)}

\def\margin{\gamma}

\def\Ltrue{\mathcal{L}} 
\def\Lmargin{\mathcal{L}_\margin} 
\def\Lemp{\hat{\mathcal{L}}} 
\def\Lmarginemp{\hat{\mathcal{L}}_\margin} 
\def\loss{\mathtt{L}} 
\def\Shyp{\mathcal{H}} 
\def\Sin{\mathcal{X}} 
\def\Sout{\mathcal{Y}} 
\def\Strain{\mathcal{S}} 
\def\Pdata{\mathcal{D}}

\newcommand{\chan}[1]{\mathrm{c}_{ \mathrm{#1} } }




\def\gconv{\circledast_{G}}
\def\hconv{\circledast_{H}}
\def\group{{G}}

\def\GL{\mathrm{GL}}
\def\Hom{\mathrm{Hom}}

\def\diml{L} 
\def\dimS{m}
\def\N{\mathcal{N}}








\newcommand{\E}{\mathbb{E}}

\newcommand{\R}{\mathbb{R}}




\DeclareMathOperator{\Tr}{Tr}

\DeclareMathOperator{\diag}{diag}

\renewcommand{\O}[1]{\ensuremath{\operatorname{O}(#1)}}
\newcommand{\SO}[1]{\ensuremath{\operatorname{SO}(#1)}}

\newcommand{\D}[1]{\ensuremath{\operatorname{D}_{#1}}}
\newcommand{\C}[1]{\ensuremath{\operatorname{C}_{#1}}}
\newcommand{\DN}{\ensuremath{\operatorname{D}_{\!N}}}
\newcommand{\CN}{\ensuremath{\operatorname{C}_{\!N}}}

\newcommand{\Res}[2]{\ensuremath{\operatorname{Res}_{#1}^{#2}}}

\newcommand{\rQ}{\ensuremath{/}}



\usepackage{centernot}

\newcommand{\KL}[2]{D\left(#1\|#2\right)}

\newcommand{\Real}[1]{Re\left(#1\right)}
\newcommand{\Imag}[1]{Im\left(#1\right)}
\newcommand{\conj}[1]{\overline{{#1}}}

\hypersetup{
    colorlinks=true,
    linkcolor=blue,
    citecolor=blue,
    filecolor=magenta,      
    urlcolor=cyan,
}

\usepackage{selectp}
\title{A PAC-Bayesian Generalization Bound for Equivariant Networks}

\author{
Arash Behboodi\thanks{equal contribution}\\\textit{Qualcomm AI reasarch}\thanks{Qualcomm AI Research is an initiative of Qualcomm Technologies, Inc.} 
\and 
Gabriele Cesa$^*$ \\\textit{Qualcomm AI reasarch}$^\dagger$ 
\and 
Taco Cohen\\\textit{Qualcomm AI reasarch}$^\dagger$
}
\date{}

\begin{document}

\maketitle

\begin{abstract}
Equivariant networks capture the inductive bias about the symmetry of the learning task by building those symmetries into the model. In this paper, we study how equivariance relates to generalization error utilizing PAC Bayesian analysis for equivariant networks, where the transformation laws of feature spaces are determined by group representations. By using perturbation analysis of equivariant networks in Fourier domain for each layer, we derive norm-based PAC-Bayesian generalization bounds. The bound characterizes the impact of group size, and multiplicity and degree of irreducible representations on the generalization error and thereby provide a guideline for selecting them. In general, the bound indicates that using larger group size in the model improves the generalization error substantiated by extensive numerical experiments. 
\end{abstract}

\section{Introduction}
\label{sec:intro}

Equivariant networks are widely believed to enjoy better generalization properties than their fully-connected counterparts by including an inductive bias about the invariance of the task. A canonical example is a convolutional layer for which the translation of the input image (over $\R^2$) leads to translation of convolutional features. This construction can be generalized to actions of other groups including rotation and permutation. Intuitively, the inductive bias about invariance and equivariance helps to focus on features that matter for the task, and thereby help the generalization. 
Recently, many works provide theoretical support for this claim mainly for general equivariant and invariant models. 
In this work, however, we focus on a class of  equivariant neural networks built using representation theoretic tools. As in \cite{shawe-taylor_representation_1996,cohen_steerable_2016,3d_steerableCNNs}, this approach can build general equivariant networks using irreducible representations of a group.  
An open question is about how to design  feature spaces and combine representations. Besides, the question of impact of these choices on generalization remain. On the other hand, while it is {rather well understood} how to build a $G$ equivariant model when $G$ is finite, the choice of the group $G$ to consider is not always obvious.
Indeed, using a subgroup $H < G$ smaller than $G$ is usually sufficient to achieve significant improvements over a non-equivariant baseline.
Moreover, the symmetries $G$ of the data are often continuous, which means $G$-equivariance can not be built into the model using a group convolution design as in \cite{cohen_group_2016}.
In such cases, the most successful approaches approximate it with a discrete subgroup $H<G$, e.g. see \cite{Weiler2019} for the $G=O(2)$ case.
In addition, the choice of the equivariance group $H$ can affect either the complexity or the expressiveness of the model and, therefore, its generalization capability, depending on how the architecture is adapted.
For instance, if one can not freely increase the model size, using a larger group $H$ only increases the parameters sharing but reduces the expressiveness of the model.
For this reason, using the whole group $G$ might not be beneficial even when feasible. In that line, we consider how the choice of group size  affects generalization.

\paragraph{Contributions}
In this paper, we utilize PAC Bayes framework to derive generalization bounds on equivariant networks. Our focus is on equivariant networks built following \cite{3d_steerableCNNs,cohen_steerable_2016}. We combine the representation theoretic framework of \cite{Weiler2019} with PAC-Bayes framework of \cite{neyshabur_pac-bayesian_2018} to get generalization bounds on equivariant networks as a function of irreducible representations (irreps) and their multiplicities.
Different from previous PAC-Bayes analysis, we derive the perturbation analysis in Fourier domain. Without this approach, as we will show, the naive application of \cite{neyshabur_pac-bayesian_2018} 
lead to degenerate bounds. As part of the proof, we derive a tail bound on the spectral norm of random matrices characterized as direct sum of irreps. Note that for building equivariant networks, we need to work with real representations and not complex ones, which are conventionally studied in representation theory. The obtained generalization bound provide new insights about the effect of design choices on the generalization error verified via numerical results.
To the best of our knowledge, this is the first generalization bound for equivariant networks with an explicit equivariant structure.
We conduct extensive experiments to verify our theoretical insights, as well as some of the previous observations about neural networks.

\section{Related Works}
\label{sec:relatedworks}


\textbf{Equivariant networks.} In machine learning, there has been many works on incorporating  symmetries and invariances in neural network design. 
This class of models has been extensively studied from different perspectives  \cite{Shawe-Taylor_1993,Shawe-Taylor_1989,Kondor2018-GENERAL,generaltheory,mallat_group_2012,Dieleman2016-CYC,cohen_group_2016,cohen_steerable_2016,Worrall2017-HNET,Weiler2018-STEERABLE,3d_steerableCNNs,TensorFieldNets,bekkers2018roto,Weiler2019,bekkers2020bspline,defferrard_deepsphere_2019,finzi_generalizing_2020,weiler_coordinate_2021,cesa2022a} to mention only some. 
Indeed, the most recent developments in the field of equivariant networks suggest a design which enforces equivariance not only at a global level \cite{laptev2016ti} but also at each layer of the model. 
{An interesting question is how a particular choice for designing equivariant networks affect its performance and in particular generalization. A related question is whether including this inductive bias helps the generalization.} 
The authors in \cite{sokolic_generalization_2017} consider general invariant classifiers and obtain robustness based generalization bounds, as in \cite{sokolic_robust_2017}, assuming that data transformation changes the inputs drastically. For finite groups, they reported a scaling of generalization error with $1/\sqrt{|H|}$ where $|H|$ is the cardinality of underlying group.
In \cite{elesedy_group_2022}, the gain of  invariant or equivariant hypotheses was studied in PAC learning framework, where the gain in generalization is attributed to shrinking the hypothesis space to the space of orbit representatives. A similar argument can be found in \cite{sannai_improved_2021}, where it is shown that the invariant and equivariant models effectively operate on a shrunk space called \ac{QFS}, and the generalization error is proportional to its volume. They generalize the result of \cite{sokolic_generalization_2017} and relax the robustness assumption, although their bound shows suboptimal exponent for the sample size. Similar to \cite{sokolic_generalization_2017}, they focus on scaling improvement of the generalization error with invariance and do not consider computable architecture dependent bounds. 
The authors in \cite{lyle2020benefits} use PAC-Bayesian framework to study the effect of invariance on generalization, although do not obtain any explicit bound. \cite{bietti2019stability} studies the stability of models equivariant to compact groups from a RKHS point of view, relating this stability to a Rademacher complexity of the model and a generalization bound. The authors in \cite{elesedy_provably_2021} provided the generalization gain of equivariant models more concretely and reported VC-dimension analysis. Subsequent works also considered the connection of invariance and generalization \cite{zhu_understanding_2021}.  In contrast with these models, we consider the equivariant networks with representation theoretic construction. Beyond generalization, we are interested in getting design insights from the bound. 

\textbf{Generalization error for neural networks.} A more general study of generalization error in neural networks, however, has been ongoing for some time with huge body of works on the topic. We refer to some highlights here. A challenge for this problem was brought up in \cite{zhang_understanding_2017}, where it was shown using the image-net dataset with random labels, that the generalization error can be arbitrarily large for neural networks, as they achieve small training error but naturally cannot generalize  because of random labels of images. Therefore, any complexity measure for neural networks should be consistent with the above observation. 
As a result, uniform complexity measures like VC-dimension do not satisfy this requirement as they provide a uniform measure over the whole hypothesis space.  

In this light, recent works related the generalization errors to quantities like margin and different norms of weights for example in, among others, \cite{wei_improved_2019,sokolic_robust_2017,neyshabur_pac-bayesian_2018,arora_stronger_2018,bartlett_spectrally-normalized_2017,golowich_size-independent_2018,dziugaite_entropy-sgd_2018,long_size-free_2019,vardi_sample_2022,ledent_norm-based_2021,valle-perez_generalization_2020}. Some of these bounds are still vacuous or dimension-dependent (see \cite{jiang_fantastic_2020} for detailed experimental investigation and \cite{nagarajan_uniform_2019, koehler_uniform_2021,negrea_defense_2021} for follow-up discussions on uniform complexity measures). Class of convolutional neural networks are particularly relevant as they can be considered as a special case of equivariant models. These models are discussed  in \cite{pitas_limitations_2019,long_size-free_2019, vardi_sample_2022,ledent_norm-based_2021}. We choose PAC Bayesian framework for deriving generalization bounds. There are many works on PAC Bayesian bounds for neural networks \cite{neyshabur_pac-bayesian_2018,biggs_non-vacuous_2022,dziugaite_computing_2017,dziugaite_data-dependent_2018,dziugaite_entropy-sgd_2018,dziugaite_search_2020} ranging from randomized and deterministic bounds to choosing priors for non-vacuous bounds. Our method combines the machinery of \cite{neyshabur_pac-bayesian_2018}
with representation theoretic tools.

\section{Background}

\subsection{Preliminaries and notation}

We fix some notations first. 
We consider a classification task with input space $\Sin$ and output space $\Sout$.
The input is assumed to be bounded in $\ell_2$-norm by $B$.
The data distribution, over $\Sin\times\Sout$, is denoted by $\Pdata$. 
The hypothesis space, $\Shyp$, consists of all functions realized by a $\diml$-layer general neural network with $1$-Lipschitz homogeneous activation functions%
\footnote{
    The function $\sigma(\cdot)$ is homogeneous if and only if for all $\vx$ and $\lambda\in\R$, we have $\sigma(\lambda\vx)=\lambda \sigma(\vx)$.
}.
The network function, $f_\sW(\cdot)$ with fixed architecture, is specified by its parameters $\sW\defeq(\mW_1,\dots,\mW_\diml)$. 
The operations on $\sW$ are extended from the underlying vector spaces of weights. 
For any function $f(\cdot)$, the $k$'th component of the function is denoted by $f(\vx)[k]$ throughout the text. 
For a loss function $\loss:\Shyp\times\Sin\times\Sout\to\R$, the empirical~loss~is~defined~by $\Lemp(f) \defeq \frac{1}\dimS \sum_{i=1}^\dimS \loss (f,(\vx_i,y_i))$
and the true loss is defined by $\Ltrue(f) = \E_{(\vx,y)\sim\Pdata}[ \loss (f,(\vx,y))]$.
For classification, the margin loss is given by
\begin{equation}
    \Lmargin(f_\sW) = \P_{(\vx,y)\sim \Pdata}\paran{f_\sW(\vx)[y]\leq \margin+\max_{j\neq y}f_\sW(\vx)[j]}.
\label{eq:margin_loss}
\end{equation}
The true loss of a given classifier is then given by $\Ltrue(f_\sW)=\Ltrue_0(f_\sW)$. The empirical margin loss is similarly defined and denoted by $\Lmarginemp$. In this work, we consider invariance with respect to transformations modeled as the action of a compact group $\group$. 
See Supplementary~\ref{apx:group_theory} for a brief introduction to group theory and the concepts that we will use throughout this paper.

\subsection{Equivariant Neural networks}

We provide a concise introduction to equivariant networks here with more details given in Supplementary~\ref{apx:real_schur}. We adopt a representation theoretic framework for building equivariant models.

\textbf{Example.} Consider a square integrable complex valued function $f$ with bandwidth $B$ defined on 2D-rotation group $SO(2)$ and represented using its Fourier series as $f(\theta)=\sum_{n=0}^B a_n e^{in\theta}$. 
Consider an equivariant linear functional of $f$ mapping it to another function on $SO(2)$. In Fourier space, the  linear transformation is given by $\mW\va$ with $\va$ as the Fourier coefficient vector $\va = (a_0,\dots,a_B)$.
The action of rotation group on the input and output space is simply given by $f(\theta)\to f(\theta-\theta_0)$ with  $\theta_0$ as the rotation angle, or equivalently in Fourier space as a linear transformation $\va \to \rho(\theta_0){\va}$ with $\rho(\theta_0) = \diag{(1, e^{-i\theta_0}\dots e^{-iB\theta_0})}$. Since it holds for all $\va$, Equivariance condition means $\mW\rho(\theta_0) = \rho(\theta_0)\mW$ for all $\theta_0$, which implies that $\mW$ should be diagonal. A general feature space can be built by stacking multiple linear transformations $\mW$, and more flexibly, by letting the transformations acting only on some of the frequencies (for example, a transformation that acts only on $a_B$ for frequency $B$). This machinery can be generalized to represent equivariant networks with respect to any compact group, by using  irreducible representations (irreps) of the group in place of complex exponential $e^{in\theta}$ of this example. The key is to notice that $e^{in\theta}$'s are irreps of $SO(2)$.

\begin{figure}
    \centering
    \includegraphics[width=0.6\linewidth]{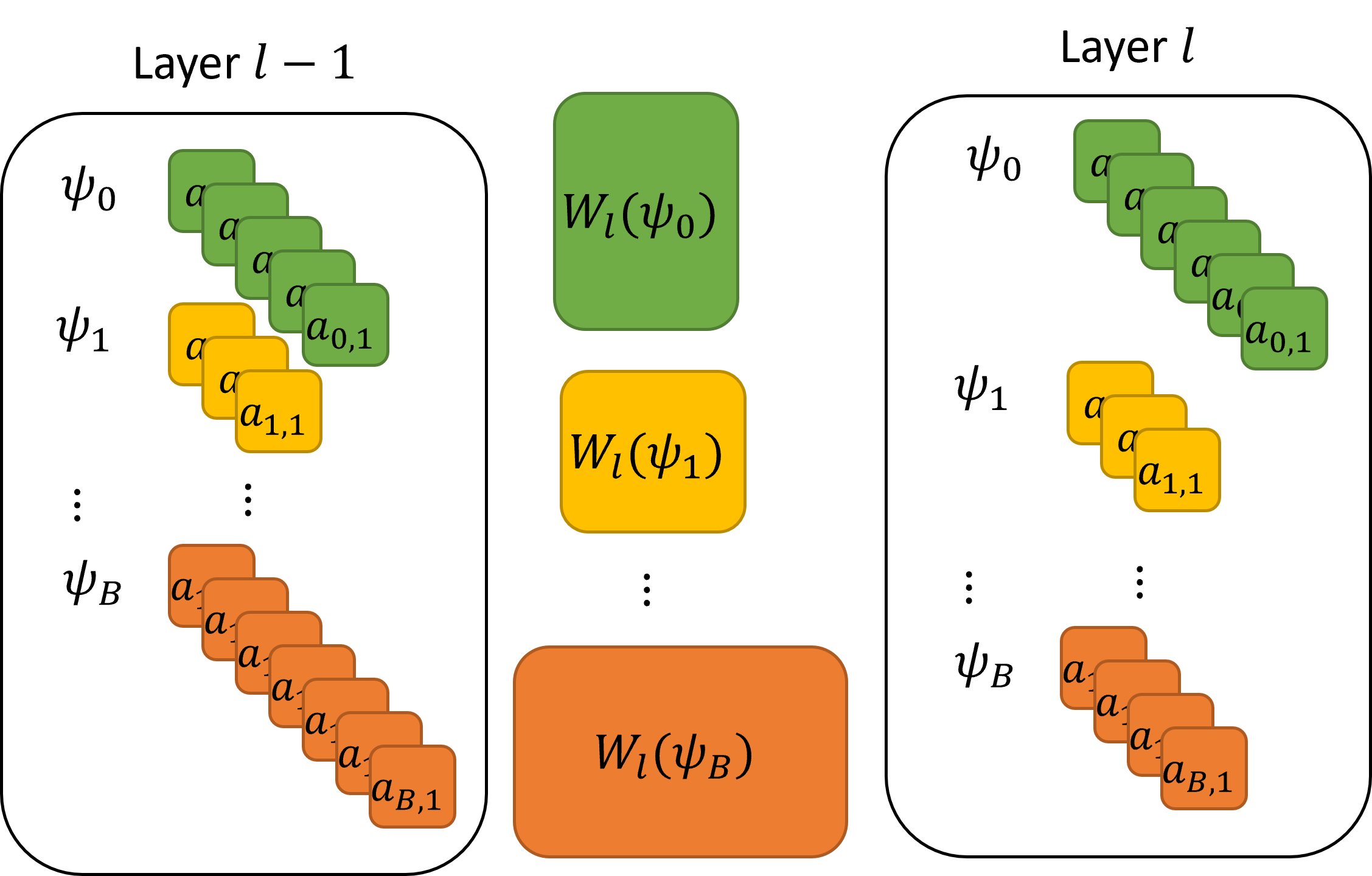}
    \caption{An equivariant layer maps the coefficients corresponding to each frequency (irrep) $\psi$ to a new set of coefficients based on the matrix $\widehat{\mW}_l(\psi)$, which  include blocks of ${\widehat{\mW}}_l(\psi,i,j)$ with $i\in[m_{l-1,\psi}]$ and $j\in[m_{l,\psi}]$.}
    \label{fig:equivariant-layer}
\vspace{-4mm}
\end{figure}

\textbf{General Model.}
Consider an \ac{MLP} with $L$ layers with the layer $l \in [L]$ given by the matrix $\mW_l \in \sR^{c_l \times c_{l-1}}$. The action of the group $H$ on layer $l$ is determined by a linear transformation $\rho_l$, which is called the group representation on $\R^{c_l}$
 (see section \ref{apx:real_schur} for more details). A group element $h\in H$ linearly transforms the layer $l$ by the  matrix $\rho_l(h)$. The matrix $\mW_l$ is equivariant w.r.t. the representations $\rho_l$ and $\rho_{l-1}$ acting on its input and output if and only if for all  $h \in H$, we have:
\[\mW_l \rho_{l-1}(h) = \rho_l(h) \mW_l \quad .\]
Representation theory provides a way of characterizing equivariant layers using Maschke's theorem and Schur's lemma. The key step is to start from characterizing the irreducible representations (irreps) $\{\psi\}$ of the group $H$. As we will see, irreps are closely related to generalization of Fourier analysis for functions defined on $H$. Maschke's theorem implies that the representation $\rho_{l}$ decomposes into direct sum of irreps as $\rho_{l} = Q_{l} \paran{ \bigoplus_\psi \bigoplus_{i=1}^{m_{l, \psi}} \psi } Q_{l}^{-1}$, where $m_{l, \psi}$ is the number of times the irrep $\psi$ is present in the representation $\rho_{l}$, that is, its multiplicity. Each $\psi$ is a $\dim_\psi\times \dim_\psi$  matrix, and the direct sum is effectively a block diagonal matrix with matrix irreps $\psi$ on the diagonal. Through this decomposition, we can parameterize equivariant networks in terms of irreps, that is in Fourier space. Defining $\widehat{\mW}_l = Q_{l}^{-1}\mW_l Q_{l-1}$, the equivariance condition writes as
\[
\widehat{\mW}_l \paran{ \bigoplus_\psi \bigoplus_{i=1}^{m_{{l-1}, \psi}} \psi } = 
   \paran{ \bigoplus_\psi \bigoplus_{i=1}^{m_{l, \psi}} \psi }\widehat{\mW}_l.
\]
The block diagonal structure of $\paran{ \bigoplus_\psi \bigoplus_{i=1}^{m_{l, \psi}} \psi }$ induces a similar structure on $\widehat{\mW}_l$ (see Figure \ref{fig:block_diagonal-kernel}). By $\widehat{\mW}_l(\psi_2, j, \psi_1, i)$ denote 
the block in $\widehat{\mW}$ that relates $i$'th instance of $\psi_1$ to $j$'th instance of $\psi_2$, with $i\in[m_{l-1,\psi_1}]$ and $j\in[m_{l,\psi_2}]$, as:
\[\forall h, \quad \widehat{\mW}_l(\psi_2, j, \psi_1, i)\psi_1(h) =\psi_2(h) \widehat{\mW}_l(\psi_2, j, \psi_1, i). \]
Schur's lemma helps us to characterize the equivariant kernels. Note for neural network implementation, we need to work with a version of Schur's lemma for real-valued representations given in Supplementary~\ref{apx:real_schur}. Schur's lemma implies that if $\psi_1 \neq \psi_2$, the block needs to be zero to guarantee equivariance. Otherwise, it needs to have one of the three forms in  Supplementary~\ref{apx:real_schur}, depending on the \textit{type} of $\psi_1$. To simplify the notation, we write $\widehat{\mW}_l(\psi, j, \psi, i)$ as $\widehat{\mW}_l(\psi, j, i)$. As it is shown in Supplementary~\ref{apx:real_schur}, the matrix $\widehat{\mW}_l(\psi, j, i)$ can be written as the linear sum 
\begin{equation}
    \widehat{\mW}_l(\psi, j, i) = \sum_{k=1}^{c_\psi} {\widehat{w_{l,i,j}}(\psi)}_{k} \mB_{l,\psi,i,j,k}
\label{eq:equivariant_kernels}
\end{equation}
    
where $\{{\widehat{w_{l,i,j}}(\psi)}_{k}\}_{k=1}^{c_\psi}$ are learnable parameters with fixed matrices $\mB_{l,\psi,i,j, k}$.
The value of $c_\psi$ is either 1,2 or 4. The structure of $\mB_{l,\psi,i,j, k}$ changes according to each type $c_\psi$. 
 
Each layer is parameterized by the matrices $\widehat{\mW}_l(\psi)$, which include $c_\psi$ parameters  ${\widehat{\vw_{l,i,j}}}(\psi)\in\R^{c_\psi}$ with $i\in[m_{l-1,\psi}]$ and $j\in[m_{l,\psi}]$. {Therefore, for each layer, there will be $\sum_{\psi} m_{l,\psi} m_{l-1,\psi}c_\psi$ parameters with the width of the layer $l$ given as $c_l=\sum_{\psi}m_{l,\psi}\dim_\psi$.} Note that, if $\vx_l = \mW_l \vx_{l-1}$, then $\widehat{\vx}_l = \widehat{\mW}_l \widehat{\vx}_{l-1}$, where $\widehat{\vx}_{l} = Q_{l}^{-1} \vx_l$. The non-linearities should satisfy equivariance condition to have full end-to-end equivariance. 
An important question is which representation $\rho_l$ of $H$ should be used for each layer. For the input, the representation is fixed by the input space and its structure. However, the intermediate layers can choose $\rho_l$ by selecting irreps and their multiplicity. We explain how this choice impacts generalization bounds. We consider a general equivariant network denoted by $f_\sW$ with the parameters of $l$'th layer given by ${\widehat{\vw_{l,i,j}}}(\psi)$ for all irreps $\psi$, $i\in[m_{l-1,\psi}]$ and $j\in[m_{l,\psi}]$.

\section{PAC-Bayesian bound}
\label{sec:pac_bayes_bound}

We derive PAC-Bayes bounds for equivariant networks $f_\sW$ presented in the previous section. Although the proof follows a similar strategy as  in   \cite{neyshabur_pac-bayesian_2018}, it differs  in two important aspects. First, the weights of equivariant \acp{MLP} have specific structure requiring reworking proof steps. Second, we carry out the analysis in Fourier domain. In Supplementary~\ref{apx:comparison_neyshabur}, we also show that naively applying norm-bounds of \cite{neyshabur_pac-bayesian_2018} cannot explain the generalization behaviour of equivariant networks. A simple version of our result is given below.

\begin{theorem} [Homogeneous Bounds for Equivariant Networks] 
For any equivariant network, with high probability we have:
\begin{align*}
&\Ltrue(f_\sW) 
\leq \Lmarginemp(f_\sW) + 
\bigOapprox{
\sqrt{
 \frac{
 \prod_l\norm{\mW_l}_2^2 }{\gamma^2 m\eta}
 \paran{
 \sum_{l=1}^\diml { 
          \sqrt{M(l, \eta)}
          }
 }^2
 \paran{
\sum_{l}\frac {\sum_{\psi,i,j}\norm{\widehat{\mW_l}(\psi,i,j)}_F^2 / \dim_\psi}{\norm{\mW_l}_2^2}
}
}
}
\end{align*}
where $\eta\in(0,1)$ and 
\begin{align}
    M(l, \eta) := \log\paran{
    \frac{
    \sum_{l=1}^L \sum_\psi m_{l, \psi} }
    {1-\eta}} \max_\psi \paran{5 m_{l-1, \psi} m_{l, \psi} c_{\psi}}.
\end{align}
\label{thm:pac-bayesian-equivariant-simple}
\end{theorem}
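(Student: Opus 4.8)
The plan is to follow the PAC‑Bayesian margin‑bound strategy of \cite{neyshabur_pac-bayesian_2018}, but to carry out the prior/posterior construction and, crucially, the perturbation analysis entirely in the Fourier (irrep) domain, where the equivariant weights are genuinely low‑dimensional. Fix the learned network $f_\sW$ with Fourier‑domain parameters $\{{\widehat{\vw_{l,i,j}}}(\psi)\}$ and take the prior $P$ and posterior $Q$ to be isotropic Gaussians on this parameter vector, $P=\N(0,\sigma^2\mI)$ and $Q=\N(\sW,\sigma^2\mI)$, with $\sigma$ chosen later. By McAllester's PAC‑Bayes bound together with the derandomization lemma of \cite{neyshabur_pac-bayesian_2018}, it suffices to (i) exhibit $\sigma$ for which the perturbed network satisfies $\max_{\vx:\,\norm{\vx}_2\le B}\norm{f_{\sW+\sU}(\vx)-f_\sW(\vx)}_\infty<\margin/4$ with probability at least $\tfrac12$ over $\sU\sim Q$, and (ii) bound $\KL{Q}{P}=\norm{\sW}^2/(2\sigma^2)$.

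\textbf{Perturbation in the Fourier domain.} Since every $\rho_l$ may be taken orthogonal, the change of basis $Q_l$ is orthogonal, so $\norm{\mW_l}_2=\norm{\widehat{\mW}_l}_2$, and a Gaussian perturbation of the coefficients induces a perturbation $\widehat{\mU}_l$ of $\widehat{\mW}_l$ that is block‑diagonal over the irreps, each block $\widehat{\mU}_l(\psi)$ being a direct sum over multiplicities of sub‑blocks $\widehat{\mU}_l(\psi,i,j)=\sum_{k=1}^{c_\psi} u_{l,i,j}(\psi)_k\,\mB_{l,\psi,i,j,k}$ with i.i.d.\ $u_{l,i,j}(\psi)_k\sim\N(0,\sigma^2)$. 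Applying the telescoping Lipschitz argument of \cite{neyshabur_pac-bayesian_2018} in this basis (the $1$‑Lipschitz, homogeneous activations are what keep the argument tight and make the statement invariant under layerwise rescaling): provided $\norm{\widehat{\mU}_l}_2\le\tfrac1\diml\norm{\widehat{\mW}_l}_2$ for all $l$, one obtains $\norm{f_{\sW+\sU}(\vx)-f_\sW(\vx)}_2\le eB\paran{\prod_l\norm{\mW_l}_2}\sum_l \norm{\widehat{\mU}_l}_2/\norm{\widehat{\mW}_l}_2$.

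\textbf{Tail bound on $\norm{\widehat{\mU}_l}_2$ — the main obstacle.} This is the new technical ingredient, and the reason a naive application of \cite{neyshabur_pac-bayesian_2018} to the dense $c_l\times c_{l-1}$ matrix $\mW_l$ produces a degenerate bound. Because $\widehat{\mU}_l$ is block‑diagonal, $\norm{\widehat{\mU}_l}_2=\max_\psi\norm{\widehat{\mU}_l(\psi)}_2$, and we union‑bound over the $\sum_{l,\psi}m_{l,\psi}$ irrep blocks. For each $\psi$ we identify $\widehat{\mU}_l(\psi)$ with a random $m_{l,\psi}\times m_{l-1,\psi}$ matrix over $\R$, $\mathbb{C}$ or $\mathbb{H}$ according to the real Schur type ($c_\psi\in\{1,2,4\}$), whose ``entries'' are the $c_\psi$‑dimensional Gaussian vectors ${\widehat{\vw_{l,i,j}}}(\psi)$; using the orthogonality of the basis matrices $\mB_{l,\psi,i,j,k}$ (so that $\norm{\widehat{\mU}_l(\psi,i,j)}_F^2=\dim_\psi\norm{u_{l,i,j}(\psi)}_2^2$) together with a $\chi^2$ tail and an $\varepsilon$‑net over the (real/complex/quaternionic) unit spheres, we get $\norm{\widehat{\mU}_l(\psi)}_2^2 \lesssim \sigma^2\, c_\psi\, m_{l-1,\psi}m_{l,\psi}\log\!\big(\textstyle\sum_{l,\psi}m_{l,\psi}/(1-\eta)\big)$ simultaneously over $\psi$, i.e.\ $\norm{\widehat{\mU}_l}_2\lesssim\sigma\sqrt{M(l,\eta)}$, the $1-\eta$ being the confidence budgeted to these events. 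The delicate part is to obtain a \emph{spectral} (not merely Frobenius) bound that is uniform over the three Schur types and does not pick up a spurious $\dim_\psi$ factor — this is precisely the ``tail bound for direct sums of irreps'' the paper advertises.

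\textbf{Assembling the bound.} Compute $\KL{Q}{P}=\tfrac1{2\sigma^2}\norm{\sW}^2=\tfrac1{2\sigma^2}\sum_{l,\psi,i,j}\norm{\widehat{\mW}_l(\psi,i,j)}_F^2/\dim_\psi$. Then choose $\sigma$ — via a union bound over a data‑independent grid of candidate values of $\beta=(\prod_l\norm{\mW_l}_2)^{1/\diml}$, exactly as in \cite{neyshabur_pac-bayesian_2018} (adding only lower‑order $\log$ factors absorbed into $\bigOapprox{\cdot}$) — so as to meet both the constraint $\norm{\widehat{\mU}_l}_2\le\tfrac1\diml\norm{\widehat{\mW}_l}_2$ and the requirement that the output perturbation be $\le\margin/4$, which forces $\sigma \propto \margin/(eB\prod_l\norm{\mW_l}_2\sum_l\sqrt{M(l,\eta)})$. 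Substituting this $\sigma$ into $\KL{Q}{P}/\dimS$ inside the PAC‑Bayes inequality produces the factor $\prod_l\norm{\mW_l}_2^2$ from the perturbation product, $\margin^2\dimS$ in the denominator, $\paran{\sum_l\sqrt{M(l,\eta)}}^2$ from the sum of perturbation norms, and $\sum_l\sum_{\psi,i,j}\norm{\widehat{\mW_l}(\psi,i,j)}_F^2/(\dim_\psi\norm{\mW_l}_2^2)$ from $\norm{\sW}^2$; the split of the overall confidence between the spectral‑norm tail events and the PAC‑Bayes step, parametrized by $\eta$, accounts for the remaining $1/\eta$ prefactor and the $1/(1-\eta)$ inside $M(l,\eta)$.
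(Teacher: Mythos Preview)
Your proposal is correct and follows the same architecture as the paper's proof: Gaussian prior/posterior on the Fourier coefficients, the standard layerwise perturbation Lipschitz bound, a spectral tail bound on the equivariant perturbation $\widehat{\mU}_l$, and the covering over $\beta$ to deal with the data-dependence of $\sigma$.

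The one substantive difference is in how the spectral tail on $\norm{\widehat{\mU}_l}_2$ is obtained. The paper does \emph{not} use an $\varepsilon$-net over real/complex/quaternionic spheres. Instead it proves a deterministic inequality (Lemma~\ref{lem:perturbation_bounds}, eq.~\eqref{eq:bound_spectral_repr}) via Cauchy--Schwarz,
\[
\norm{\mU_l}_2^2 \le \max_{\psi}\max_{i\le m_{l-1,\psi}} \; m_{l-1,\psi}\,\frac{1}{\dim_\psi}\sum_{j=1}^{m_{l,\psi}} \norm{\widehat{\mU_l}(\psi,j,i)}_F^2,
\]
and then observes that the inner sum is $\sigma^2$ times a $\chi^2$ with $m_{l,\psi}c_\psi$ degrees of freedom, to which the Laurent--Massart inequality is applied; a union bound over the $\sum_\psi m_{l,\psi}$ pairs $(\psi,i)$ (and then over layers) produces the $\log\bigl(\sum_{l,\psi}m_{l,\psi}/(1-\eta)\bigr)$ and the product $5\,m_{l-1,\psi}m_{l,\psi}c_\psi$ in $M(l,\eta)$. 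Your $\varepsilon$-net route is a legitimate alternative and would in fact yield the sharper dependence $c_\psi(m_{l-1,\psi}+m_{l,\psi})$ rather than the product $c_\psi\,m_{l-1,\psi}m_{l,\psi}$ --- so the bound you quote is looser than what your own method would give, and matches the paper only because the paper's simpler Cauchy--Schwarz reduction is what actually produces the product. Also, a small clarification on $\eta$: in the paper $\eta=Q(\gS_{f_\sW})$ is the probability mass the posterior puts on the $\gamma/4$-tube, and the $1/\eta$ factor comes from the KL identity $\eta\,\KL{\hat Q}{P}\le \KL{Q}{P}$ used in the derandomization lemma, not from a separate confidence split with the PAC-Bayes step.
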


The complete version of the theorem is given in Theorem \ref{thm:pac-bayesian-equivariant}. 

We comment briefly on the proof steps. PAC-Bayesian generalization bounds start with defining a prior $P$ and posterior $Q$ over the network parameters. The network is drawn randomly using $Q$, which is also used to evaluate the average loss and generalization error. The PAC-Bayesian bound provides a bound on the average generalization error of networks randomly drawn from $Q$ (Section \ref{subsec:pac_bayes_random}). The next step is to de-randomize the bound and to obtain the generalization error for a specific instance $f_\sW\in\Shyp$. Among different strategies, we follow the perturbation based method of \cite{neyshabur_pac-bayesian_2018}. The main idea is to carefully define a posterior $Q$, such that the networks randomly drawn from $Q$ have small \textit{distance} with the specific $f_\sW$, and therefore the loss averaged over $Q$ can be used to bound the loss of $f_\sW$. One way to choose $Q$ is to define a Gaussian distribution around the parameters $\sW$ and choose the variance to control the output perturbation (Section \ref{subsec:pac_bayes_perturbation}).

In what follows we provide more details about these steps.

\subsection{PAC-Bayesian Bounds for Randomized Networks}
\label{subsec:pac_bayes_random}
The starting point is the PAC-Bayes theorem \cite{langford_pac-bayes_2002,mcallester_pac-bayesian_1998}, which provides a generalization bound on randomized classifiers. If the function is chosen randomly from a distribution $Q$, then define the average losses as $\Ltrue(Q)=\E_{f\sim Q}\Ltrue(f)$ and $\Lemp(Q)=\E_{f\sim Q}\Lemp(f)$. We use the following version from \cite{germain_pac-bayesian_2009}. 

\begin{lemma}
For any hypothesis space $\Shyp$, let $P$ be a probability distribution on $\Shyp$. Then for all $\delta\in(0,1]$, with probability $1-\delta$, for all $Q$ on $\Shyp$, we have:
\begin{equation}
    \Ltrue (Q) \leq \Lemp(Q) +\sqrt{
    \frac{D(Q\|P)+\log(\xi(\dimS)/\delta)}{2\dimS}
    }.
\end{equation}
where $\xi(\dimS)\defeq \sum_{k=0}^\dimS\binom{\dimS}{k}(k/\dimS)^{k}(1-k/\dimS)^{\dimS-k}$.
\label{lem:pac_bayes}
\end{lemma}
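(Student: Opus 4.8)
The plan is to prove the stated square-root bound in two stages: first establish the sharper relative-entropy (``$\mathrm{kl}$'') form of the PAC-Bayes inequality through a change-of-measure argument, then relax it with Pinsker's inequality to reach the advertised form. Throughout I write $\mathrm{kl}(q\|p) := q\log\frac{q}{p} + (1-q)\log\frac{1-q}{1-p}$ for the relative entropy between two Bernoulli distributions, and I use that the loss is bounded in $[0,1]$ (as the margin and $0$-$1$ losses of interest are), so that for each fixed $f$ the empirical loss $\Lemp(f)$ is an average of $\dimS$ i.i.d.\ $[0,1]$-valued random variables with mean $\Ltrue(f)$.

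\textbf{Change of measure.} I would start from the Donsker--Varadhan / Gibbs variational inequality: for any measurable $\phi:\Shyp\to\R$ and any $Q\ll P$,
\[
\E_{f\sim Q}[\phi(f)] \leq D(Q\|P) + \log\E_{f\sim P}\!\left[e^{\phi(f)}\right],
\]
which is one line of Jensen applied to the concave logarithm: $\log\E_{f\sim P}[e^{\phi}] = \log\E_{f\sim Q}[e^{\phi}\,dP/dQ] \geq \E_{f\sim Q}[\phi] - D(Q\|P)$. Choosing $\phi(f) = \dimS\cdot\mathrm{kl}(\Lemp(f)\|\Ltrue(f))$ and using the joint convexity of $\mathrm{kl}$ with Jensen (to pull the $Q$-average inside the $\mathrm{kl}$), I obtain
\[
\dimS\cdot\mathrm{kl}(\Lemp(Q)\|\Ltrue(Q)) \leq D(Q\|P) + \log\E_{f\sim P}\!\left[e^{\dimS\cdot\mathrm{kl}(\Lemp(f)\|\Ltrue(f))}\right],
\]
where $\Lemp(Q)=\E_{f\sim Q}\Lemp(f)$ and $\Ltrue(Q)=\E_{f\sim Q}\Ltrue(f)$. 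The key point is that the right-hand expectation over $P$ does \emph{not} depend on $Q$, so this single inequality holds simultaneously for all posteriors, as required by the ``for all $Q$'' quantifier.

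\textbf{Exponential moment.} Let $Z$ denote the $P$-expectation on the right, a random variable depending only on the sample $S$. Markov's inequality gives $Z\leq \E_S[Z]/\delta$ with probability at least $1-\delta$, and Fubini (nonnegative integrand) lets me interchange: $\E_S[Z] = \E_{f\sim P}\,\E_S[e^{\dimS\cdot\mathrm{kl}(\Lemp(f)\|\Ltrue(f))}]$. The central ingredient is Maurer's moment estimate: for fixed $f$, since $\dimS\Lemp(f)$ is a sum of $\dimS$ independent $[0,1]$ variables with mean $\Ltrue(f)$,
\[
\E_S\!\left[e^{\dimS\cdot\mathrm{kl}(\Lemp(f)\|\Ltrue(f))}\right]\leq \xi(\dimS),
\]
with $\xi(\dimS)$ exactly the combinatorial sum in the statement (attained in the Bernoulli case, to which the general $[0,1]$ case reduces). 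Thus $\E_S[Z]\leq\xi(\dimS)$, and combining with the previous display yields, with probability at least $1-\delta$ and for all $Q$,
\[
\mathrm{kl}(\Lemp(Q)\|\Ltrue(Q)) \leq \frac{D(Q\|P) + \log(\xi(\dimS)/\delta)}{\dimS}.
\]

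\textbf{Relaxation.} I would finish with Pinsker's inequality $\mathrm{kl}(q\|p)\geq 2(q-p)^2$ applied at $q=\Lemp(Q)$, $p=\Ltrue(Q)$ (when $\Ltrue(Q)\leq\Lemp(Q)$ the claim holds trivially), giving $2(\Ltrue(Q)-\Lemp(Q))^2 \leq [D(Q\|P)+\log(\xi(\dimS)/\delta)]/\dimS$ and hence the stated bound after taking square roots. I expect the only genuinely hard step to be Maurer's exponential-moment estimate, which requires the combinatorial Bernoulli computation that produces the precise constant $\xi(\dimS)$; the change of measure, the convexity/Jensen step, Markov, and Pinsker are all short and standard. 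A cruder alternative would bound the moment via Hoeffding's lemma, losing the exact $\xi(\dimS)$ but preserving the $\sqrt{\cdot}$ shape, so matching the exact $\xi(\dimS)$ as written is exactly what pins the difficulty on Maurer's lemma.
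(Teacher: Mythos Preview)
Your proof is correct and follows the standard route due to Germain et al.\ and Maurer: Donsker--Varadhan change of measure, Jensen via joint convexity of $\mathrm{kl}$, Markov on the sample-dependent moment, Maurer's exact bound $\E_S[e^{\dimS\cdot\mathrm{kl}(\Lemp(f)\|\Ltrue(f))}]\leq\xi(\dimS)$, and Pinsker to pass to the square-root form. The paper does not give its own proof of this lemma; it simply cites it from \cite{germain_pac-bayesian_2009}, and your argument is precisely the proof that reference contains. One cosmetic remark: your one-line justification of Donsker--Varadhan writes $dP/dQ$, which presupposes $P\ll Q$ rather than the stated $Q\ll P$; the cleaner derivation introduces the Gibbs measure $dG\propto e^{\phi}\,dP$ and notes $D(Q\|G)\ge 0$, but the inequality itself is of course correct whenever $D(Q\|P)<\infty$.
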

In PAC-Bayes vernacular, the distributions $P$ and $Q$ are called \textit{a} prior and  \textit{a} posterior distribution on $\Shyp$. This indicates that the distribution $P$ and $Q$ are chosen independently with $Q$ typically chosen based on the training data and the obtained model, and $P$ chosen independently based on any information available before training. Choosing an appropriate prior plays a central role in getting non-vacuous bounds. For instance in \cite{dziugaite_computing_2017}, the prior $P$ is chosen using a separate dataset not used during training. Lemma \ref{lem:pac_bayes} provides immediately a generalization bound for randomized equivariant classifiers.
The hypothesis space is parameterized by  
the network parameters $\sW$, which contains kernel parameters $\widehat{\vw_{l,i,j}}(\psi)$.
We choose a prior distribution $P$ as zero-mean normal with covariance matrix $\sigma^2\mI$ over the parameters. 
The posterior $Q$  is chosen as normal distribution with the final parameters
$\widehat{\vw_{l,i,j}}(\psi)$ as mean value
and the same covariance matrix $\sigma^2\mI$.
The KL-divergence in the PAC-Bayesian theorem is then given by:
\begin{equation}\label{eq:pac_bound_kl}
    D(Q\| P) = \frac{\sum_{l,\psi,i,j} \norm{\widehat{\vw_{l,i,j}}(\psi)}_2^2 }{2\sigma^2}.
\end{equation}
The generalization bound depends on the sum of the norm of kernels. 

In the next step, we de-randomize the bound to get a bound on the generalization error of $f_\sW$. 
\subsection{De-Randomization and Perturbation Bounds}
\label{subsec:pac_bayes_perturbation}
Our de-randomization technique follows closely that of \cite{neyshabur_pac-bayesian_2018}. We provide the sketch of derivations here, and the full proof is relegated to the supplementary materials.  To de-randomize the previous bound, a common step is to choose $Q$ such that the probability of margin violation is controlled. Let $\gS_\sW$ be defined as:
\begin{equation}
\gS_\sW\defeq\{h\in\Shyp: \norm{h-f_\sW}_\infty\leq \gamma/4\}.
\end{equation}
Any function on this set can change the margin of $f_\sW$ at most by $\gamma/2$. Therefore, for any $h$, $\Ltrue(f_\sW)\leq \Ltrue_{\gamma/2}(h)$, which implies $\Ltrue(f_\sW)\leq \Ltrue_{\gamma/2}(Q)$ if $Q$ is supported only on this set.
Similarly $\Lemp_{\gamma/2}(Q) \leq \Lemp_\gamma(f_\sW)$. 
To choose $Q$, first, we characterize the output perturbation of equivariant networks for a given input perturbation. Next, we determine the variance $\sigma$ such that the output perturbation is bounded by $\margin/4$ with probability%
\footnote{
    This value can be changed to any probability with proper adjustments. See the supplementary materials.
}
$1/2$.
The first step is the perturbation analysis. 

\begin{lemma}[Perturbation Bound] For a neural network $f_\sW(\cdot)$ with input space of $\ell_2$-norm bounded by $B$, and for any weight perturbations $\sU=(\mU_1,\dots,\mU_\diml)$, we have:
\begin{equation}
\norm{f_{\sW+\sU}(\vx)-f_\sW(\vx)}_2 \leq eB\paran{\prod_{i=1}^\diml{\norm{\mW_i}_2}}\sum_{i=1}^\diml \frac{\norm{\mU_i}_2}{\norm{\mW_i}_2}.
\label{eq:perturbation_bound}
\end{equation}
For equivariant networks, where the weights are parameterized as in \eqref{eq:equivariant_kernels}, the spectral norm of the perturbation $\mU_l$ is bounded as
\begin{align}
    \label{eq:bound_spectral_repr}
    \norm{\mU_l}_2 \leq \sqrt{\max_\psi \max_{1\leq i \leq m_{l-1, \psi}}
        m_{l-1, \psi} {\frac{1}{\dim_\psi}} \sum_{j=1}^{m_{l, \psi}} \norm{\widehat{\mU_l}(\psi, j, i)}_F^2
    }
\end{align}
\label{lem:perturbation_bounds}
\end{lemma}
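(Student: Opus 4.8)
The plan is to prove the two claims separately. For the general perturbation bound \eqref{eq:perturbation_bound}, I would proceed by a standard layer-by-layer telescoping argument. Write $f_\sW(\vx) = \mW_\diml \sigma(\mW_{\diml-1} \sigma(\cdots \sigma(\mW_1 \vx)))$ and let $\vx_l$ (resp.\ $\vx_l'$) denote the pre-activation at layer $l$ under weights $\sW$ (resp.\ $\sW+\sU$). Using $1$-Lipschitzness and homogeneity of $\sigma$ (the latter gives $\norm{\sigma(\vz)}_2 \le \norm{\vz}_2$, so $\norm{\vx_l}_2 \le B\prod_{i\le l}\norm{\mW_i}_2$), I would show by induction that $\norm{\vx_l' - \vx_l}_2 \le B\bigl(\prod_{i\le l}\norm{\mW_i}_2\bigr)\bigl(\prod_{i\le l}(1+\norm{\mU_i}_2/\norm{\mW_i}_2) - 1\bigr)$, splitting $\mW_l'\sigma(\vx_{l-1}') - \mW_l\sigma(\vx_{l-1})$ into $\mW_l(\sigma(\vx_{l-1}')-\sigma(\vx_{l-1})) + \mU_l\sigma(\vx_{l-1}')$ and bounding each term with the triangle inequality. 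Finally I would invoke the elementary inequality $\prod_i(1+a_i) - 1 \le e^{\sum_i a_i} - 1 \le e\sum_i a_i$ valid when $\sum_i a_i \le 1$, which yields \eqref{eq:perturbation_bound}. (If $\sum_i \norm{\mU_i}_2/\norm{\mW_i}_2 > 1$ the bound can be made to hold trivially or the hypothesis restricted; this is the routine caveat from \cite{neyshabur_pac-bayesian_2018}.)

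For the spectral-norm bound \eqref{eq:bound_spectral_repr}, the key observation is that the Fourier-domain change of basis is orthogonal, so $\norm{\mU_l}_2 = \norm{Q_l^{-1}\mU_l Q_{l-1}}_2 = \norm{\widehat{\mU_l}}_2$ (assuming, as is standard in this construction, that $Q_l$ can be chosen orthogonal). Now $\widehat{\mU_l}$ is block-diagonal across distinct irreps $\psi$, and within the $\psi$-block it consists of the sub-blocks $\widehat{\mU_l}(\psi,j,i)$ arranged as an $m_{l,\psi}\times m_{l-1,\psi}$ grid of $\dim_\psi\times\dim_\psi$ matrices; moreover each such sub-block, by Schur's lemma for real representations, is a scalar multiple of an isometry (up to the $c_\psi$-dependent structure), which is the fact that lets a Frobenius norm on the block be converted into a spectral-type quantity with the $1/\dim_\psi$ factor. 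The spectral norm of a block-diagonal matrix is the max over blocks, so $\norm{\widehat{\mU_l}}_2 = \max_\psi \norm{\widehat{\mU_l}(\psi)}_2$. For a fixed $\psi$, I would bound $\norm{\widehat{\mU_l}(\psi)}_2$ by the square root of one of its operator norms against the other — e.g.\ $\norm{A}_2^2 \le \norm{A}_1\norm{A}_\infty$ for the induced $\ell_1/\ell_\infty$ operator norms, applied at the level of the block grid — which produces the $\max_i$ over input-multiplicity index and the sum $\sum_j$ over output-multiplicity index, while the extra factor $m_{l-1,\psi}$ comes from relating a column-wise sum back to a genuine spectral norm (Cauchy--Schwarz over the $m_{l-1,\psi}$ columns). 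Combining with the per-sub-block identity relating $\norm{\cdot}_2$ on a Schur block to $\norm{\cdot}_F/\sqrt{\dim_\psi}$ gives \eqref{eq:bound_spectral_repr}.

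The main obstacle is the second part: carefully accounting for the block structure imposed by real Schur's lemma and getting the exact constants $1/\dim_\psi$ and the multiplicity factor $m_{l-1,\psi}$ right. One must be precise about which operator norm inequality is used on the grid of blocks, and must use the fact (developed in Supplementary~\ref{apx:real_schur}) that each Schur block $\widehat{\mU_l}(\psi,j,i)$ has a rigid form — a scalar times a fixed isometry when $c_\psi=1$, and block-scalar analogues when $c_\psi\in\{2,4\}$ — so that its operator norm equals $\norm{\widehat{\mU_l}(\psi,j,i)}_F/\sqrt{\dim_\psi}$. Handling the $c_\psi = 2, 4$ types (complex and quaternionic) uniformly, rather than case-by-case, is where the bookkeeping is most delicate; I would isolate that as a separate sub-lemma about the operator norm of a matrix built as a grid of such rigid blocks. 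The first part, by contrast, is a direct adaptation of the now-standard telescoping perturbation argument and should go through with only cosmetic changes.
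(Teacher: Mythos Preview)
Your plan for the first inequality \eqref{eq:perturbation_bound} is correct and matches the paper, which simply cites \cite{neyshabur_pac-bayesian_2018} for this step.

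For the spectral bound \eqref{eq:bound_spectral_repr}, you have the right ingredients (orthogonality of the change of basis, block-diagonality across irreps, the Schur-block identity $\norm{\widehat{\mU_l}(\psi,j,i)}_2 = \norm{\widehat{\mU_l}(\psi,j,i)}_F/\sqrt{\dim_\psi}$), but your organization via the interpolation inequality $\norm{A}_2^2 \le \norm{A}_1\norm{A}_\infty$ is an unnecessary detour and does not obviously land on the stated bound. That inequality produces a \emph{product} of two different max--sums (one over input indices, one over output indices), not the single $\max_i$ with the multiplicative $m_{l-1,\psi}$ factor that appears in \eqref{eq:bound_spectral_repr}; your patching step (``Cauchy--Schwarz over the $m_{l-1,\psi}$ columns'') is too vague to see how the constants come out right. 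The paper's argument is more direct and avoids this: it bounds $\norm{\mU_l\vx}_2^2 = \norm{\widehat{\mU_l}\widehat{\vx}}_2^2$ by expanding the block structure as
\[
\sum_\psi \sum_{j=1}^{m_{l,\psi}} \Bigl\lVert \sum_{i=1}^{m_{l-1,\psi}} \widehat{\mU_l}(\psi,j,i)\,\widehat{\vx}(\psi,i) \Bigr\rVert_2^2,
\]
applies Cauchy--Schwarz \emph{once} to the inner sum over $i$ (this is where the factor $m_{l-1,\psi}$ enters), bounds each $\norm{\widehat{\mU_l}(\psi,j,i)\,\widehat{\vx}(\psi,i)}_2^2$ by $\norm{\widehat{\mU_l}(\psi,j,i)}_2^2\norm{\widehat{\vx}(\psi,i)}_2^2$, invokes the Schur-block identity to replace $\norm{\cdot}_2^2$ by $\norm{\cdot}_F^2/\dim_\psi$, swaps the $i,j$ sums, and pulls out $\max_{\psi,i}$. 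Dividing by $\norm{\vx}_2^2$ gives \eqref{eq:bound_spectral_repr} with no case analysis over $c_\psi$. I would replace your interpolation step with this direct computation; it is shorter and the constants fall out automatically.
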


The first inequality is already given in \cite{neyshabur_pac-bayesian_2018}. The inequality \ref{eq:bound_spectral_repr} will be shown in supplementary materials. 

The perturbation bound already suggests the benefit of equivariant Kernels. As it will be shown in Supplementary materials, the equivariant kernels  w.r.t. $\psi$ satisfy
\begin{equation}
    \norm{\mW}_{2} = \frac{1}{\sqrt{\dim_\psi}} \norm{\mW}_F,
\end{equation}
which is the smallest possible norm (compare with general relation for $\dim_\psi\times\dim_\psi$ matrices: $ \norm{\mW}_{2} \geq  \frac{1}{\sqrt{\dim_\psi}} \norm{\mW}_F$). Therefore, the term ${\frac{1}{ \dim_\psi}}\norm{\widehat{\mU_l}(\psi, j, i)}_F^2$ in the perturbation bound is already tight. 

The perturbation bound helps defining a posterior $Q$ by adding zero-mean Gaussian perturbations ${\widehat{u_{l,i,j}}(\psi)}_k$ of variance $\sigma^2$ to the learnable parameters ${\widehat{w_{l,i,j}}(\psi)}_k$.
The output perturbation is given by the above theorem in terms of norm of $\mU_i$. The following lemma controls the norm of this random matrix, and thereby the output perturbation.  

\begin{lemma}
Consider a random equivariant matrix $\mU_l$ defined by i.i.d. Gaussian $N(0,\sigma^2)$ choice of ${\widehat{u_{l,i,j}}(\psi)}_k$
in \eqref{eq:equivariant_kernels}. We have:
\begin{align}
\P\paran{
        \norm{\mU_l}_2
        \geq  
        \sigma \sqrt{ \max_\psi \paran{5 m_{l-1, \psi} m_{l, \psi} c_{\psi} t}}
    }
    & \leq
    \left(\sum_\psi m_{l, \psi}\right) e^{-t}
\end{align}
\label{lem:tail_circular}
\end{lemma}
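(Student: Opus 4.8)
The plan is to bound $\norm{\mU_l}_2$ one irrep at a time in the Fourier (block-diagonal) domain and then take a union bound over the irreps appearing in $\rho_l$. By Schur's lemma the Fourier representation $\widehat{\mU}_l = Q_l^{-1}\mU_l Q_{l-1}$ is block diagonal with one block $\widehat{\mU}_l(\psi)$ per irrep $\psi$, and since $Q_l$ is orthogonal we have $\norm{\mU_l}_2 = \max_\psi \norm{\widehat{\mU}_l(\psi)}_2$ (alternatively one can simply invoke \eqref{eq:bound_spectral_repr} for $\mU_l$ and for $\mU_l^\top$, which already reduces to per-irrep Frobenius quantities).

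First I would bound a single block $\norm{\widehat{\mU}_l(\psi)}_2$. Viewing $\widehat{\mU}_l(\psi)$ as a matrix with block rows $R_j = \big[\widehat{\mU}_l(\psi,j,1),\dots,\widehat{\mU}_l(\psi,j,m_{l-1,\psi})\big]$ for $j\in[m_{l,\psi}]$ gives $\norm{\widehat{\mU}_l(\psi)}_2^2 \le \sum_{j=1}^{m_{l,\psi}} \norm{R_j}_2^2$. The structural fact, recorded just before the lemma and proved in Supplementary~\ref{apx:real_schur}, is that every equivariant block satisfies $\norm{\widehat{\mU}_l(\psi,j,i)}_2 = \tfrac{1}{\sqrt{\dim_\psi}}\norm{\widehat{\mU}_l(\psi,j,i)}_F$; since this block is square, all its singular values must coincide, so it equals $s_{ji} := \tfrac{1}{\sqrt{\dim_\psi}}\norm{\widehat{\mU}_l(\psi,j,i)}_F$ times an orthogonal matrix, and by the parameterization \eqref{eq:equivariant_kernels} one has $s_{ji}^2 = \sum_{k=1}^{c_\psi}\big(\widehat{u_{l,i,j}}(\psi)_k\big)^2$. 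Consequently $R_j R_j^\top = \big(\sum_i s_{ji}^2\big)\mI_{\dim_\psi}$, hence $\norm{R_j}_2^2 = \sum_i s_{ji}^2$ and
\[
\norm{\widehat{\mU}_l(\psi)}_2^2 \;\le\; \sum_{i=1}^{m_{l-1,\psi}}\sum_{j=1}^{m_{l,\psi}}\sum_{k=1}^{c_\psi}\big(\widehat{u_{l,i,j}}(\psi)_k\big)^2 .
\]
Routing through block-wise orthogonality, rather than a crude Frobenius bound on the whole block, is what collapses all $m_{l-1,\psi}m_{l,\psi}c_\psi$ Gaussian parameters of the $\psi$-block into one sum of squares with no spurious $\dim_\psi$ factor.

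Next, the right-hand side is $\sigma^2$ times a chi-square variable with $n_\psi := m_{l-1,\psi}m_{l,\psi}c_\psi$ degrees of freedom, because the coefficients $\widehat{u_{l,i,j}}(\psi)_k$ are i.i.d.\ $N(0,\sigma^2)$. I would then combine the Laurent--Massart tail bound $\P(\chi^2_n \ge n + 2\sqrt{nt} + 2t) \le e^{-t}$ with the elementary inequality $n + 2\sqrt{nt} + 2t \le 5nt$ (valid for $n,t\ge1$, and $n_\psi\ge1$ always) to get $\P\big(\norm{\widehat{\mU}_l(\psi)}_2^2 \ge 5\sigma^2 n_\psi t\big) \le e^{-t}$. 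Since $\max_\psi 5 m_{l-1,\psi}m_{l,\psi}c_\psi \ge 5 n_\psi$ for every $\psi$, a union bound over the irreps of $\rho_l$ (at most $\sum_\psi m_{l,\psi}$ of them, using $m_{l,\psi}\ge1$) gives
\[
\P\!\left(\norm{\mU_l}_2 \ge \sigma\sqrt{\max_\psi\big(5 m_{l-1,\psi}m_{l,\psi}c_\psi\big)\,t}\,\right) \;\le\; \sum_\psi e^{-t} \;\le\; \Big(\sum_\psi m_{l,\psi}\Big)e^{-t},
\]
which is the claim; the case $t<1$ makes the bound trivial as soon as $\sum_\psi m_{l,\psi}\ge3$, and the remaining small cases are immediate.

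The main obstacle is not really probabilistic: orthogonality of $Q_l$, the block-diagonal Schur decomposition, and the isometry identity $\norm{\cdot}_2 = \tfrac1{\sqrt{\dim_\psi}}\norm{\cdot}_F$ for real Schur blocks are all available from earlier in the paper. The delicate point is the observation in the second paragraph: that block-wise orthogonality --- which is exactly where the three real types $c_\psi\in\{1,2,4\}$ genuinely enter through the real version of Schur's lemma --- lets one merge all $m_{l-1,\psi}m_{l,\psi}c_\psi$ parameters of the $\psi$-block into a single $\chi^2_{n_\psi}$ without paying $\dim_\psi$, plus the bookkeeping to land on the explicit constant $5$ and the prefactor $\sum_\psi m_{l,\psi}$.
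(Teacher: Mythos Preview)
Your argument is correct, and it reaches the stated bound, but it is organized differently from the paper's proof. The paper does not bound $\norm{\widehat{\mU}_l(\psi)}_2$ directly; instead it invokes the already-proved spectral bound \eqref{eq:bound_spectral_repr}, which carries a Cauchy--Schwarz factor $m_{l-1,\psi}$ and a $\max$ over $i\in[m_{l-1,\psi}]$. The paper then treats, for each fixed $(\psi,i)$, the sum $\tfrac{1}{\dim_\psi}\sum_{j}\norm{\widehat{\mU}_l(\psi,j,i)}_F^2$ as $\sigma^2$ times a $\chi^2$ with only $m_{l,\psi}c_\psi$ degrees of freedom, applies Laurent--Massart, multiplies by $m_{l-1,\psi}$, and takes a union bound over the pairs $(\psi,i)$ before applying the same $5nt$ simplification you use. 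Your route replaces this two-step (Cauchy--Schwarz then union over $i$) maneuver with the observation that each Schur block is a scalar multiple of an orthogonal matrix, so the whole $\psi$-block satisfies $\norm{\widehat{\mU}_l(\psi)}_2^2\le\sum_{i,j,k}(\widehat{u_{l,i,j}}(\psi)_k)^2$, a single $\chi^2$ with $m_{l-1,\psi}m_{l,\psi}c_\psi$ degrees of freedom. This is slightly sharper in principle (fewer events in the union bound and better $\chi^2$ concentration), even though both approaches collapse to the same displayed inequality after the $5nt$ relaxation and the loose count $\sum_\psi 1\le\sum_\psi m_{l,\psi}$. The parenthetical remark about invoking \eqref{eq:bound_spectral_repr} ``for $\mU_l$ and for $\mU_l^\top$'' is a bit off---that would recover the paper's route, not your block-diagonal equality---but it does not affect your main line. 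Your handling of $t<1$ matches the paper's implicit assumption; neither proof covers that range except through triviality of the right-hand side.
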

The proof is given in supplementary materials. With Lemma~\ref{lem:tail_circular}, we can choose the variance $\sigma^2$ such that the output perturbation does not violate the intended margin. Using an union bound over all layers, one finds that by choosing $t = \log \left(2 \sum_l \sum_\psi m_{l, \psi}\right)$, with probability at least $\frac{1}{2}$ the following inequality holds for every layer $l \in [L]$:
\begin{align}\label{eq:general_derandom_spectral_bound}
    \norm{\mU_l}_2 
        &< \sigma \sqrt{\max_\psi  5 m_{l-1, \psi} m_{l, \psi} c_{\psi}}  \sqrt{\log \left(2 \sum_l \sum_\psi m_{l, \psi}\right)}
\end{align}

We can use this bound jointly with the perturbation bound \eqref{eq:perturbation_bound} to choose $\sigma$ such that the margin is below $\gamma/4$ with high probability. Note that the $\sigma$ chosen in this way is a function of learned weights. This cannot 
be the case because of the prior $P$ should be fixed prior to learning. 
A trick for circumventing this issue is to select many priors that can adequately cover the space of possible weights, and take the union bound over it. 
The complete proof of the theorem is given in Supplementary~\ref{apx:pac_proof_irreps}, where we also consider special cases of this generalization bound for group convolutional networks \cite{cohen_group_2016}.

\textbf{Remarks on the generalization error from the theory.} The generalization error of Theorem \ref{thm:pac-bayesian-equivariant-simple} contains multiple terms. First, the term $\eta$ is a hyperparameter, which will be fixed to $1/2$ for the rest. The term $M(l,\eta)$ encodes the multiplicity and type of used irreps and their impact on the generalization error.  Let the hidden layer dimension $c_l=\sum_\psi \dim_\psi m_{l,\psi}$ be fixed to a constant. For Abelian groups, all real irreps are at most 2-dimensional, therefore, restricting the network to 2-dimensional irreps, we have:
\[
 M(l, \eta) = \log\paran{
 \frac{\sum_{l=1}^L c_l }{2(1-\eta)}
 } \max_\psi \paran{5 m_{l-1, \psi} m_{l, \psi} c_{\psi}}.
\]
The bound can be controlled further by appropriate choice of  $m_{l-1, \psi}, m_{l, \psi}, c_{\psi}$, which favors smaller multiplicity and type $c_\psi$. Therefore, the bound favors using different irreps instead of repeating them. For non-abelian cases, irreps can have  larger $\dim_\psi$, which  keeping the layer dimension $c_l$ fixed, amounts to smaller multiplicity. This can improve the sum $\sum_{l=1}^L \sum_\psi m_{l, \psi}$ in $M(l,\eta)$ and indicates a potential improved generalization for non-abelian groups. For dihedral groups, which is in general non-abelian, the real irreps are also at most two dimensional, however, of type $c_\psi=1$. 

Note that for finite groups, there is a connection between irrep degrees and group order $|H|=\sum_{\psi}\dim_\psi^2/c_\psi$. The impact of group size manifests itself in the whole generalization error term. We numerically verify this in the experimental results. The inverse dependence on margin $\gamma$ is expected and similar to other works for instance \cite{neyshabur_pac-bayesian_2018}. The rest of the terms contain different norms of neural network kernels. We discuss them more in connection with \cite{neyshabur_pac-bayesian_2018}.

\textbf{Comparison to \cite{neyshabur_pac-bayesian_2018}.} There are two main differences with the bound in \cite{neyshabur_pac-bayesian_2018}. First, their bound has a norm dependence of $\bigOapprox{(\prod_{l}\norm{\mW}_2^2)(\sum_l \norm{\mW}_F^2/\norm{\mW}_2^2)}$, while in our result, the Frobenius norm includes an additional scaling of $1/\dim_\psi$ in $\norm{\widehat{\mW}_l(\psi,i,j)}_F^2/\dim_\psi$. This term is strictly smaller in our bound for $\dim_\psi>1$. The second difference is the term $\paran{\sum_l M(l,\eta)}^2$, which is replaced with $L^2h \ln(Lh)$ where $h$ is the layer width. In our case, the layer width is $c_l = \sum_{\psi}\dim_\psi m_{l,\psi}$. Choosing all layer widths $c_l$ equal to $h$, we recover the $\ln (Lh)$ term from \cite{neyshabur_pac-bayesian_2018} in our bound. The remaining term is $O\paran{\paran{\sum_l \sqrt{m_{l-1,\psi}m_{l,\psi}c_\psi} }^2}$. As an example, assume the same multiplicity $m_{l, \psi} = m_l$ and same dimension $\dim_\psi = \dim$ for all irreps, that is, $h=N \dim m_{l}$ with $N$ being the number of used irreps. Then, we get a term as $O\paran{h^2L^2/(N\dim)^2}$. As long as the multiplicity $m_{l}$ is smaller than $N\dim$, i.e. number of used irreps times their dimension, our bound is strictly tighter. Intuitively, this means that using more diverse irreps leads to better generalization. See Section \ref{apx:comparison_neyshabur} for more detailed discussions.

\section{Experiments}
\label{sec:experiments}
\begin{figure*}
  \centering
  \includegraphics[width=\linewidth]{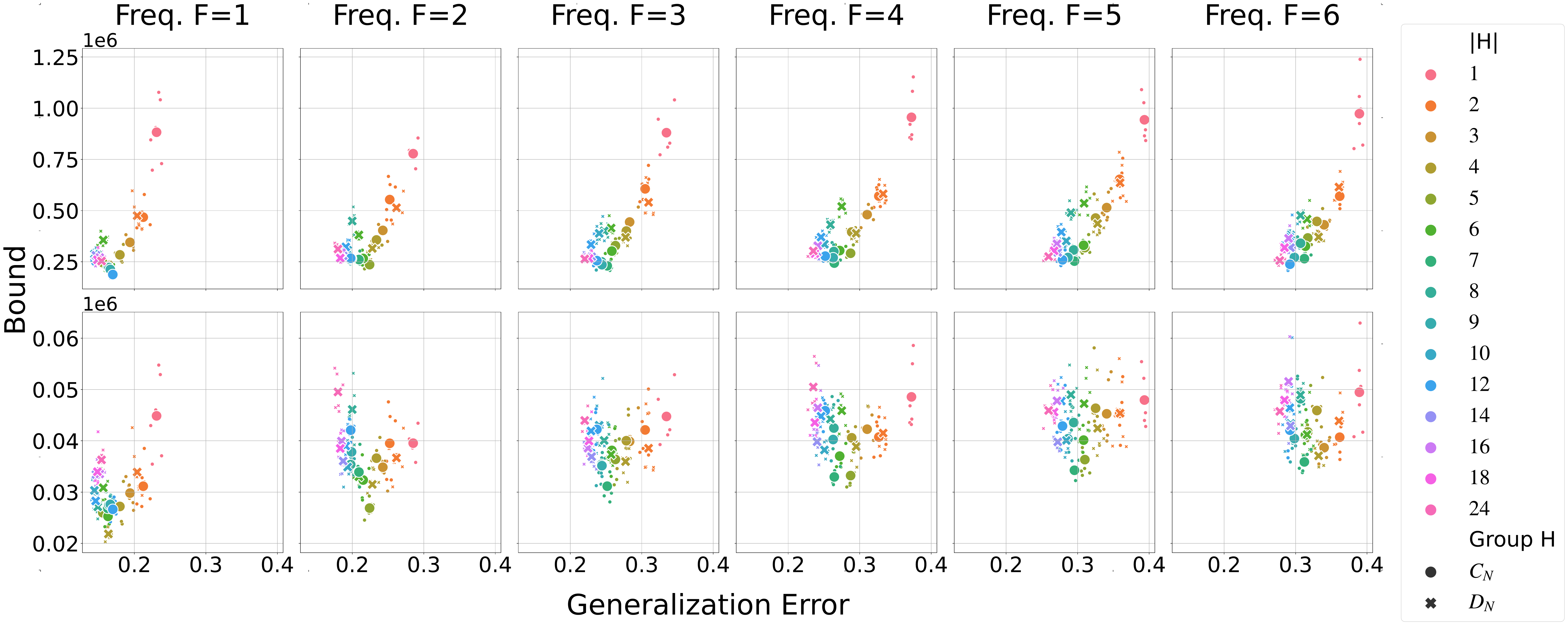}
  \caption{Comparison between the bound in Theorem~\ref{thm:pac-bayesian-equivariant-simple} (1st row) and the one in Supplementary~\ref{apx:comparison_neyshabur} (2nd row) on the synthetic $\O2$ datasets with different frequencies. 
  The bound from Supplementary~\ref{apx:comparison_neyshabur} clearly does not capture the effect of group equivariance.
  }
  \label{fig:hom_o2}
\end{figure*}%

\begin{figure*}
  \centering
  \includegraphics[width=\linewidth]{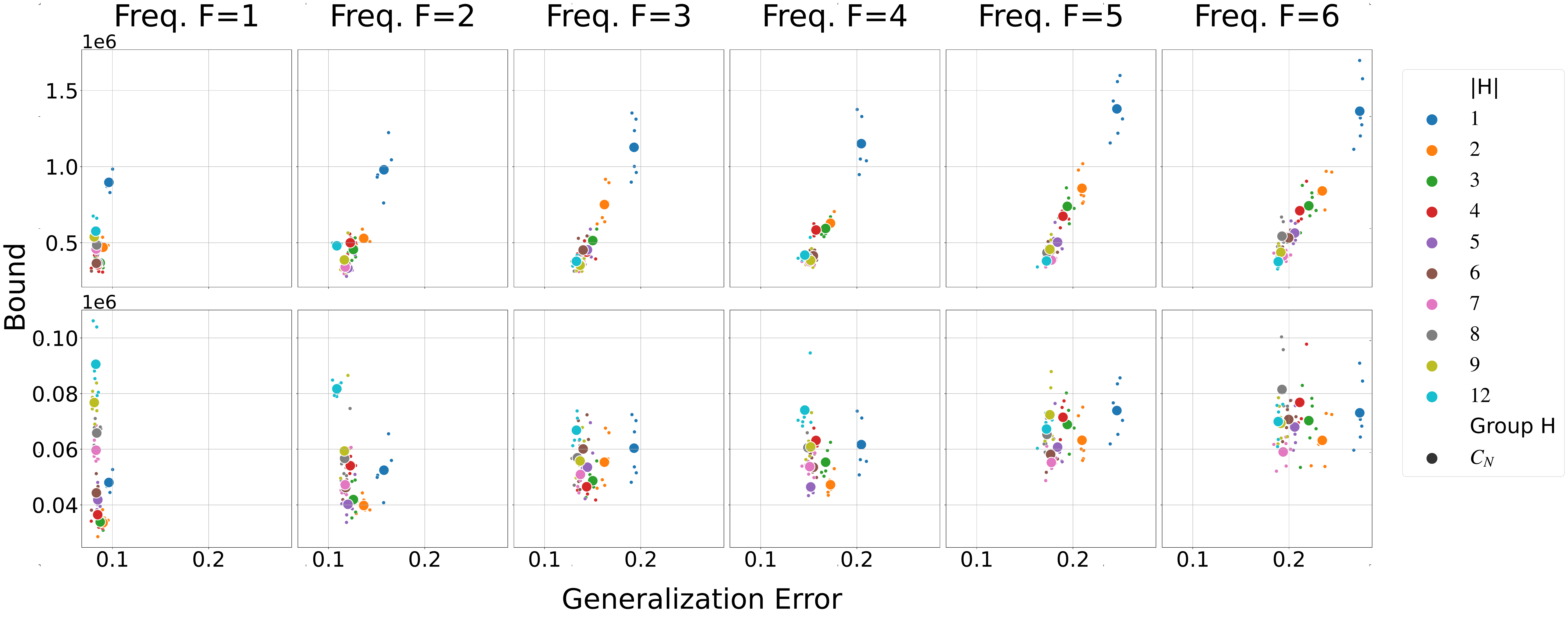}
  \caption{Comparison between Theorem~\ref{thm:pac-bayesian-equivariant-simple} (1st row) and the one in Supplementary~\ref{apx:comparison_neyshabur} (2nd row) on the synthetic $\SO2$ datasets. 
  }
  \label{fig:hom_so2}
\end{figure*}%

In this section, we also validate the theoretical insights derived in the previous section. More experiments and discussions are included in 
Supplementary~\ref{apx:experiments}.

We have used datasets based on  natural images and synthetic data. 
In all experiments, we consider data symmetries $G$ which are subgroups of the group $\O2$ of planar rotations and reflections. 
This includes a number of finite/continuous and commutative/non-commutative groups, in particular $\CN$ ($N$ rotations), $\DN$ ($N$ rotations and reflections), $\SO2$ (continuous planar rotations) and $\O2$ itself (see Supplementary~\ref{apx:group_theory}).
Moreover, all models are based on group convolution over a finite subgroup $H < G$ \cite{cohen_group_2016}.
This means that the features of the layer $l$ have shape $|H| \times c^H_l$, that is, $c^H_l$ channels of dimension $|H|$. 
For each layer $l$, we keep the total number of features 
(approximatively) constant when changing the group $H$.

Models are trained until $99\%$ of the training set is correctly classified with at least a margin $\gamma$.
We used $\gamma=10$ in the synthetic datasets and $\gamma=2$ in the image ones.

First, we compare our result with the bound derived in Supplementary~\ref{apx:comparison_neyshabur} using naive application of \cite{neyshabur_pac-bayesian_2018}.
In Fig.~\ref{fig:hom_o2} and Fig.~\ref{fig:hom_so2}, we compare these two bounds on $6$ synthetic datasets with $\O2$ and $\SO2$ symmetries (one per column). 
Each dataset lives on high dimensional tori and $\O2$ (or $\SO2$) rotates the circles composing them, with different frequencies. We consider different maximum frequencies $F \in \{1, 2, \dots, 6\}$ in the experiments. For example, in Fig.~\ref{fig:hom_o2}, the first plot on the left corresponds to a synthetic dataset with $\O2$ symmetry generated only using frequency 1.   
See Supplementary~\ref{sec:synthetic_data} for more details and visualization of the synthetic data.

The results in Fig.~\ref{fig:hom_o2} and~\ref{fig:hom_so2} indicate that the bound built using the strategy from \cite{neyshabur_pac-bayesian_2018} cannot explain the effect of equivariance on generalization.
Conversely, the bound in Theorem~\ref{thm:pac-bayesian-equivariant-simple}, which uses the parametrization in the Fourier domain, correlates with the measured generalization error, suggesting that it can account for the effect of different groups on generalization.
In addition, note that it also correlates with the saturation effect observed when using larger discrete groups $H$ to approximate the continuous group $G=\O2$.
Indeed, on low frequency datasets, the bound tends to saturate faster as we increase the group size $|H|$.
Conversely, on high frequency datasets, choosing a larger group $H$ results in larger improvements in the estimated bound.

\begin{remark}
\label{remark:neyshabur-bound}
As it can be seen in our experiments in Fig \ref{fig:hom_o2} and \ref{fig:hom_so2}, our bound is larger that the one in \cite{neyshabur_pac-bayesian_2018}. This is partially due to coarse approximation of Frobenius norm and the presence of the term $h^2$ in our bound. This dependency could be alleviated if we increase the number of used irreps in the group convolutional network, namely the larger group size. It is an interesting research direction to explore the optimal dependency on the width for equivariant networks. \end{remark}

\begin{figure}[h]
\centering
\subfloat[$H\!=\!C_N$, $G\!=\!\SO2$]{
    \includegraphics[width=0.3\linewidth]{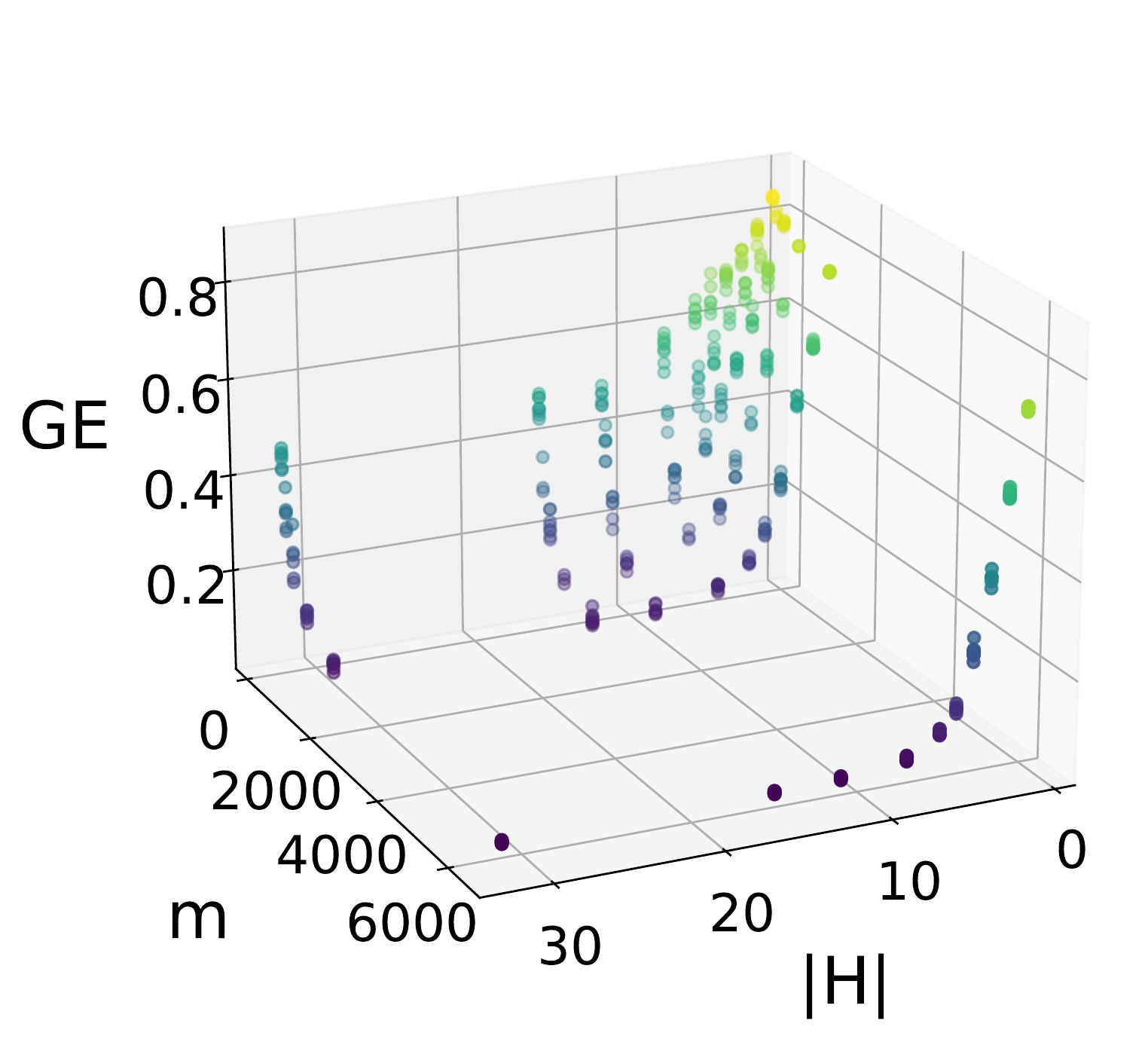}
}
\hfill
\subfloat[$H\!=\!C_N$, $G\!=\!\O2$]{
    \includegraphics[width=0.3\linewidth]{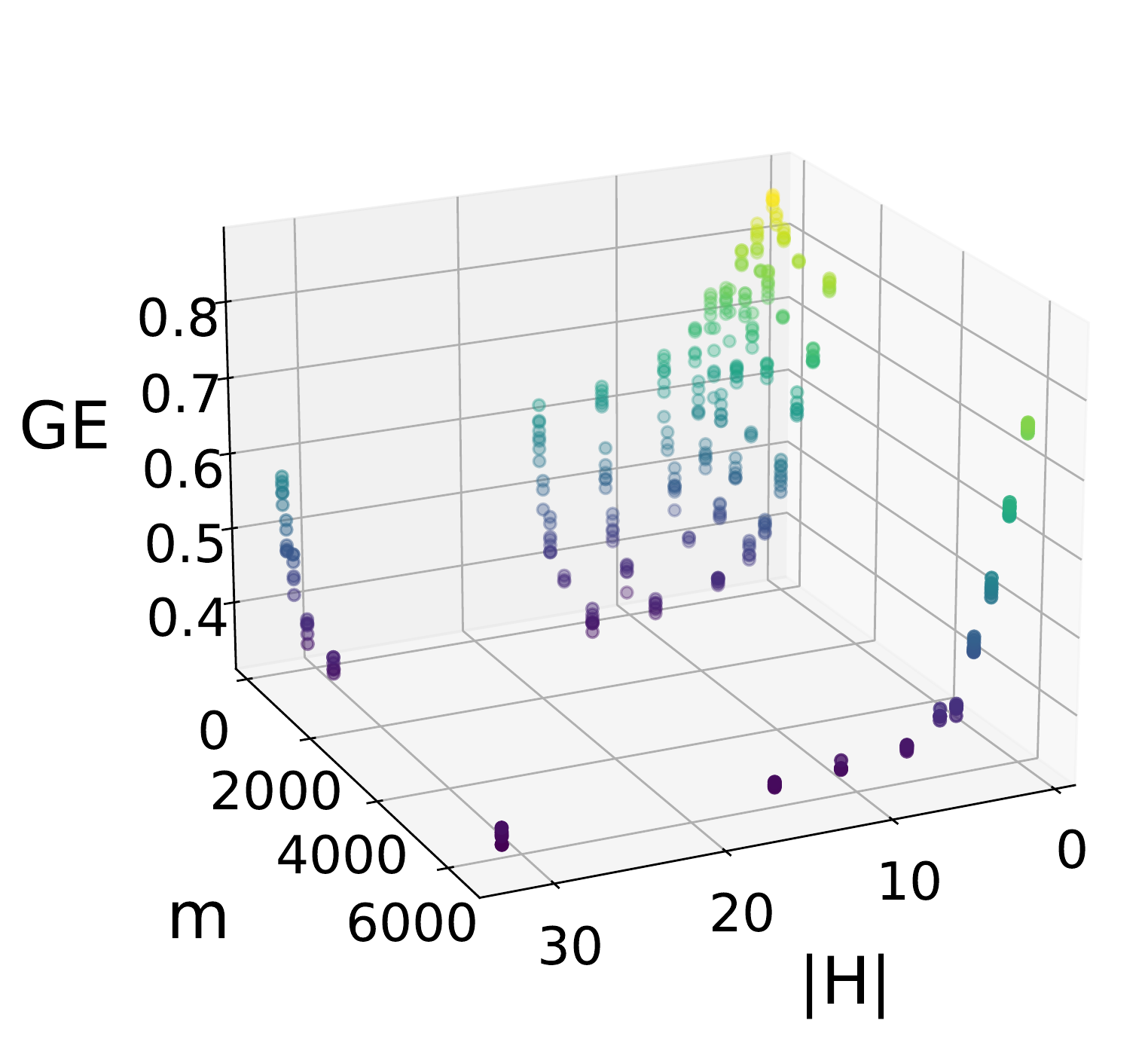}
}
\hfill
\subfloat[$H\!=\!D_N$, $G\!=\!\O2$]{
    \includegraphics[width=0.3\linewidth]{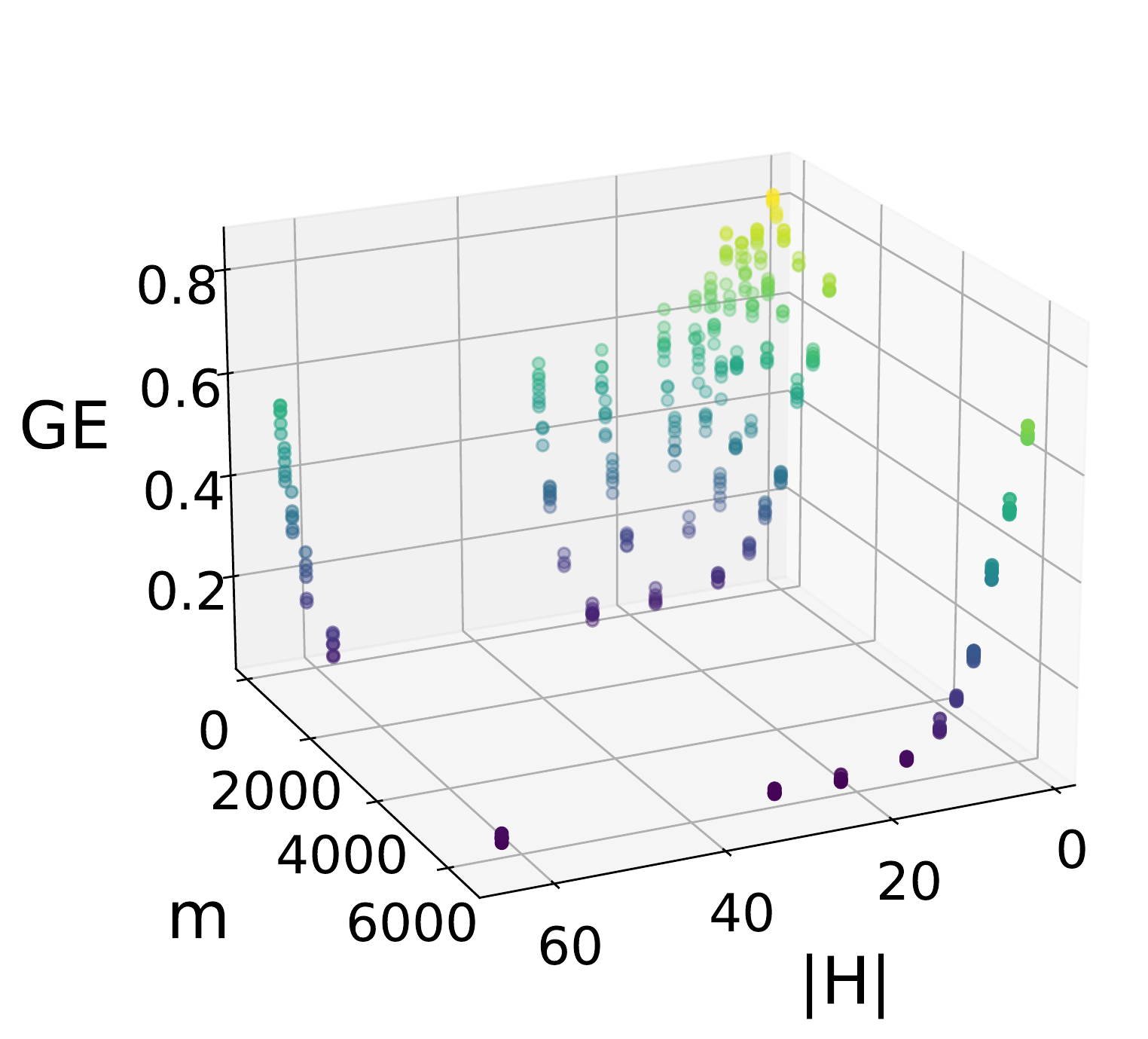}
}

\subfloat[$H\!=\!C_N$,\,$G\!=\!\SO2$(augment)]{
    \includegraphics[width=0.3\linewidth]{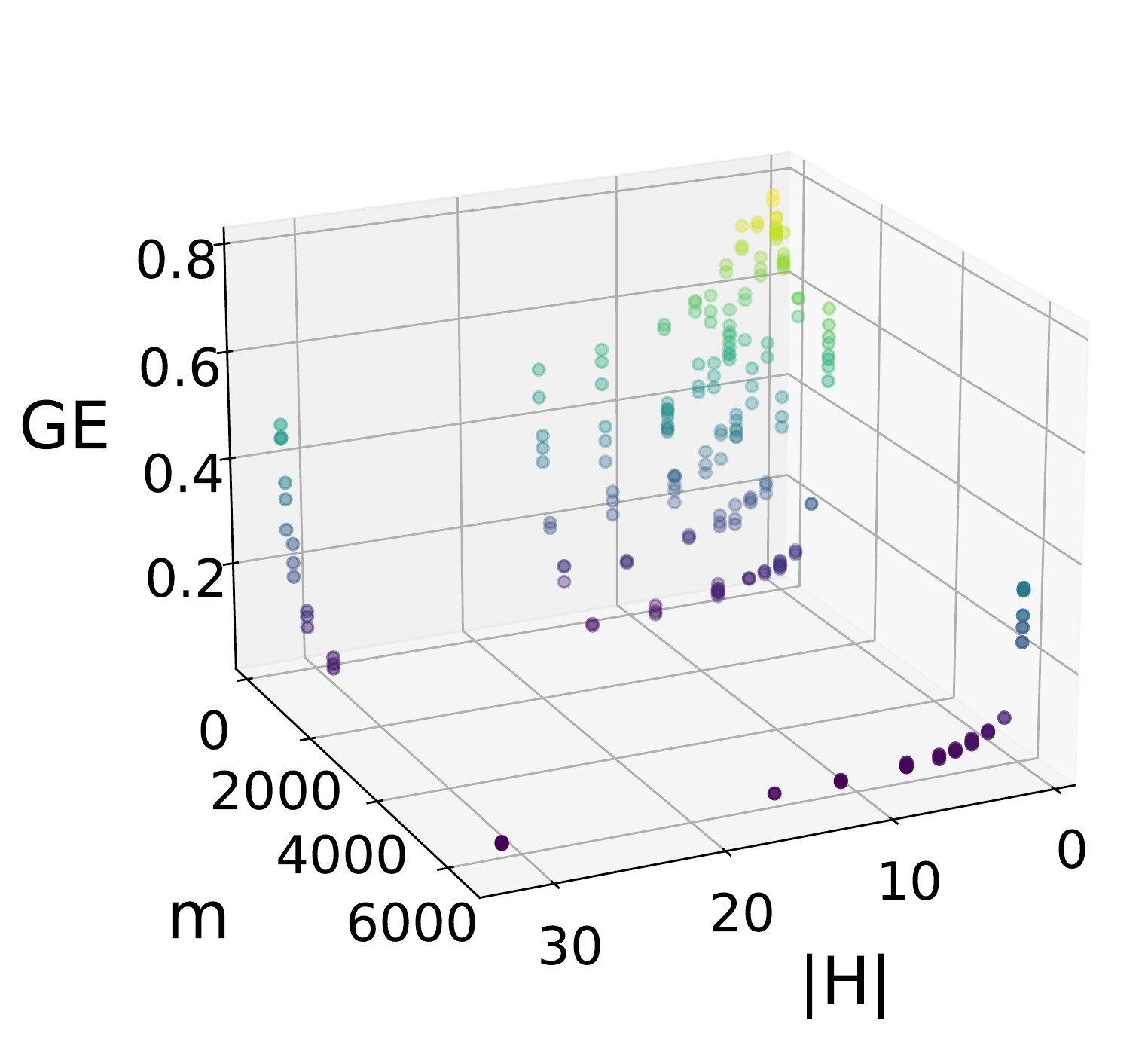}
}
\hfill
\subfloat[$H\!=\!C_N$, $G\!=\!\O2$ (augment)]{
    \includegraphics[width=0.3\linewidth]{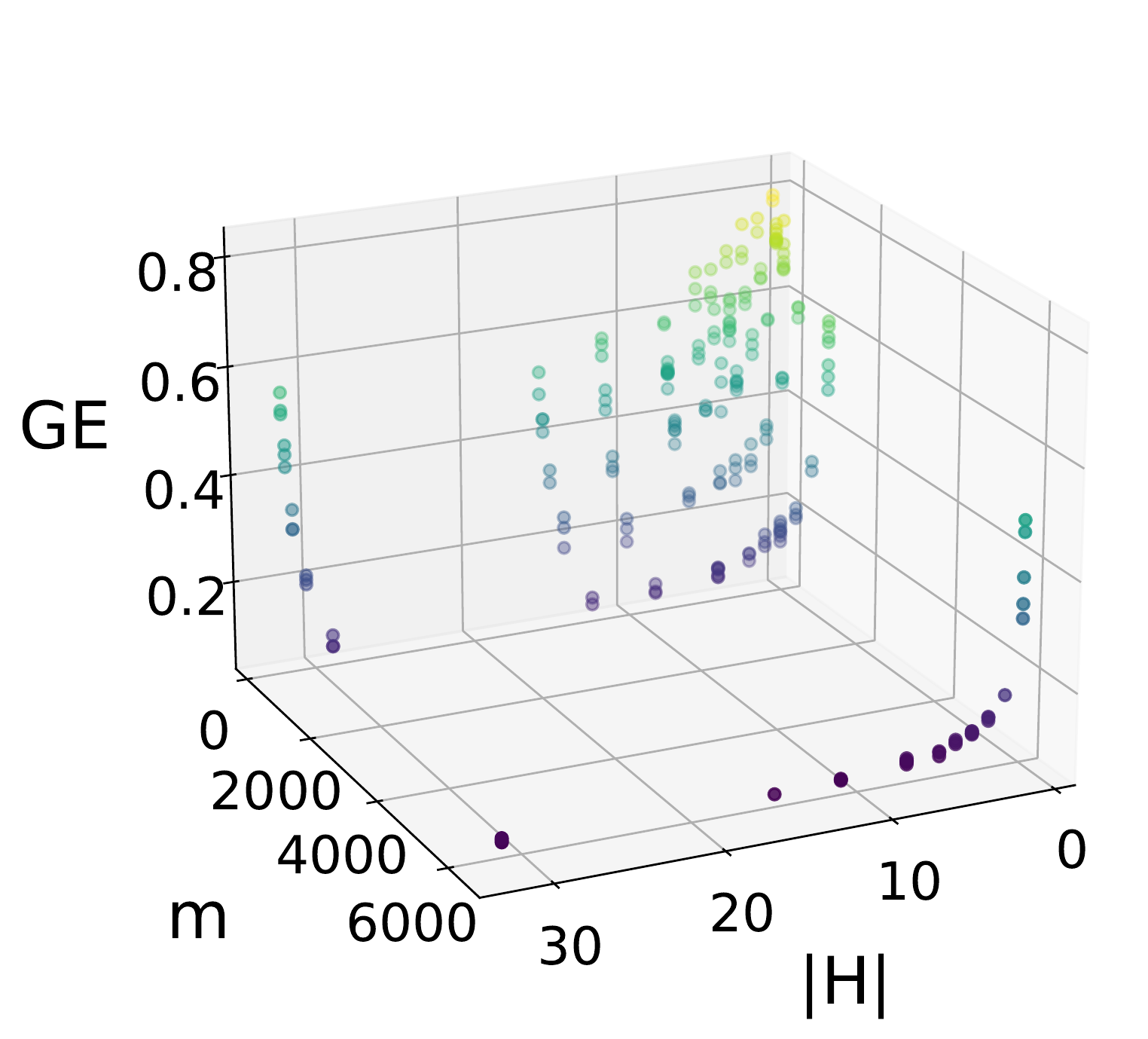}
}
\hfill
\subfloat[$H\!=\!D_N$, $G\!=\!\O2$ (augment)]{
    \includegraphics[width=0.3\linewidth]{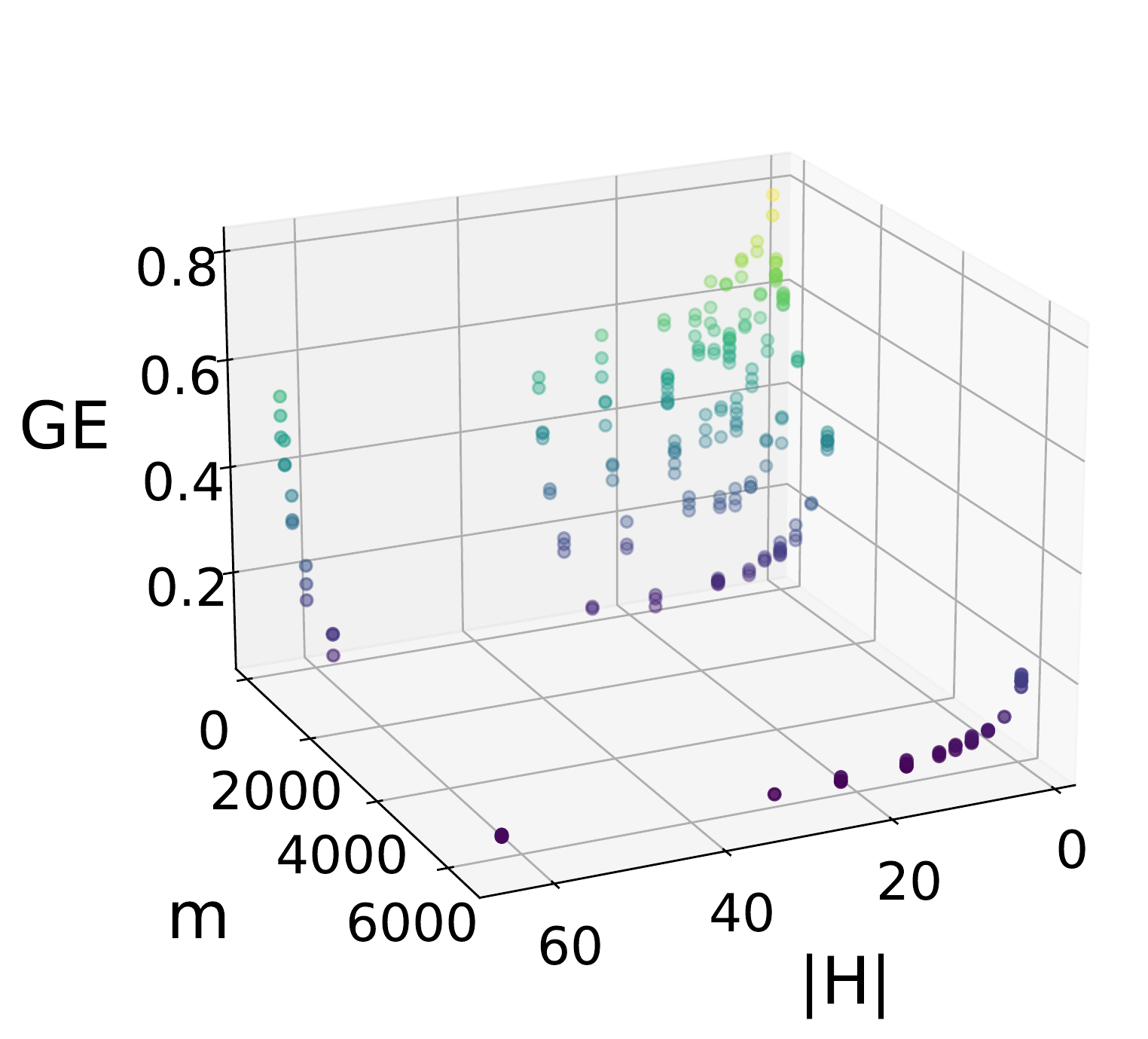}
}
\caption{
    Generalization Error (GE) of different $H$-equivariant models ($H = C_N$ or $H = D_N$) on different $G$-MNIST datasets ($G = \SO2$ or $G=\O2$) for different training set sizes $m$.
}
\label{fig:3d_mnist}
\end{figure}

In Fig.~\ref{fig:3d_mnist}, we perform a larger study on the transformed MNIST datasets to investigate the simultaneous effect of the group size $|H|$, data augmentation and training set size $m$. For each training data size, the generalization error improves with  larger equivariance group $H$.

\begin{figure}[htp]
\centering
\subfloat[$G=\O2$, Frequency $F=1$]{
    \includegraphics[width=0.48\linewidth]{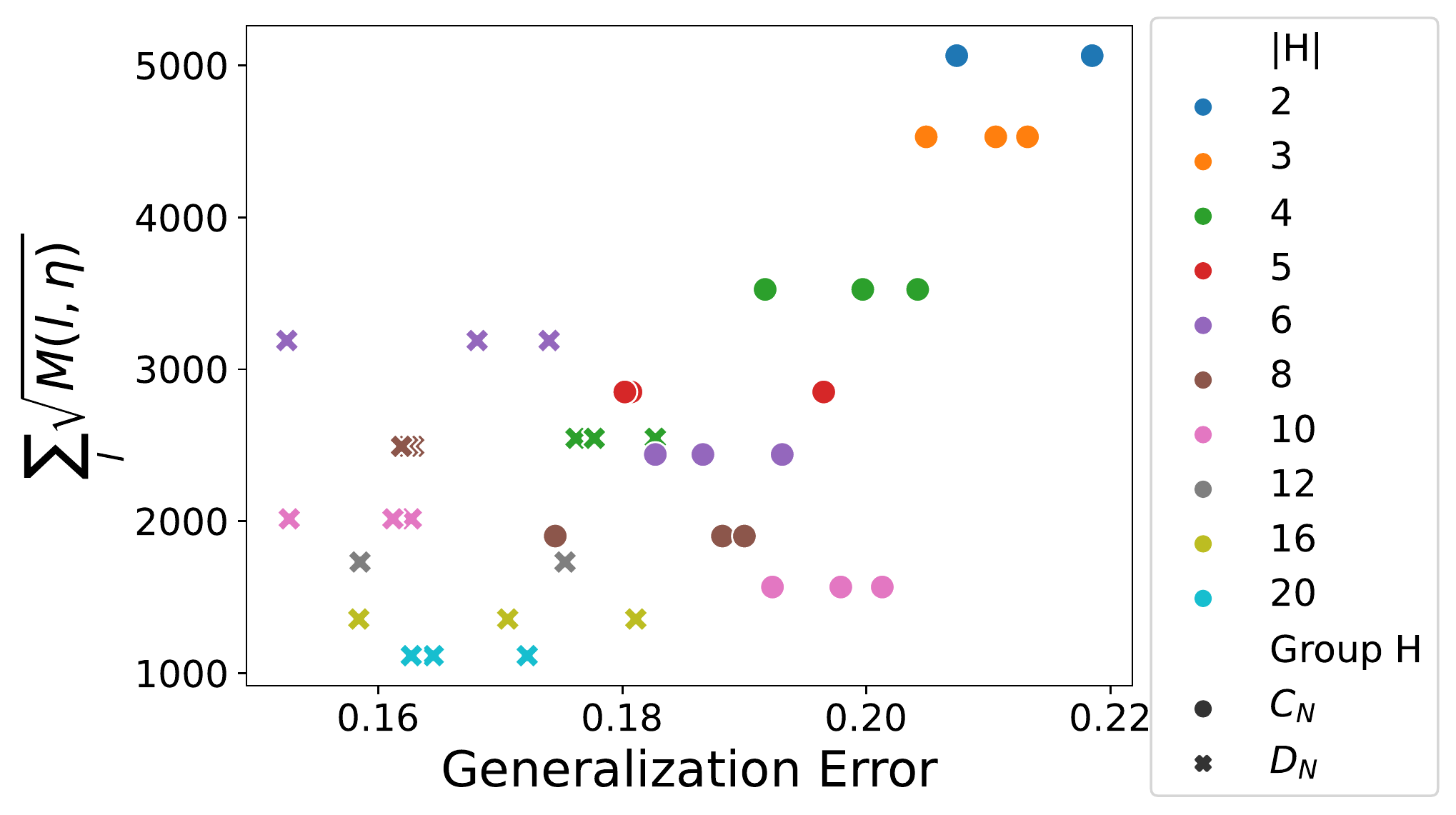}
}
\hfill
\subfloat[$G=\O2$, Frequency $F=6$]{
    \includegraphics[width=0.48\linewidth]{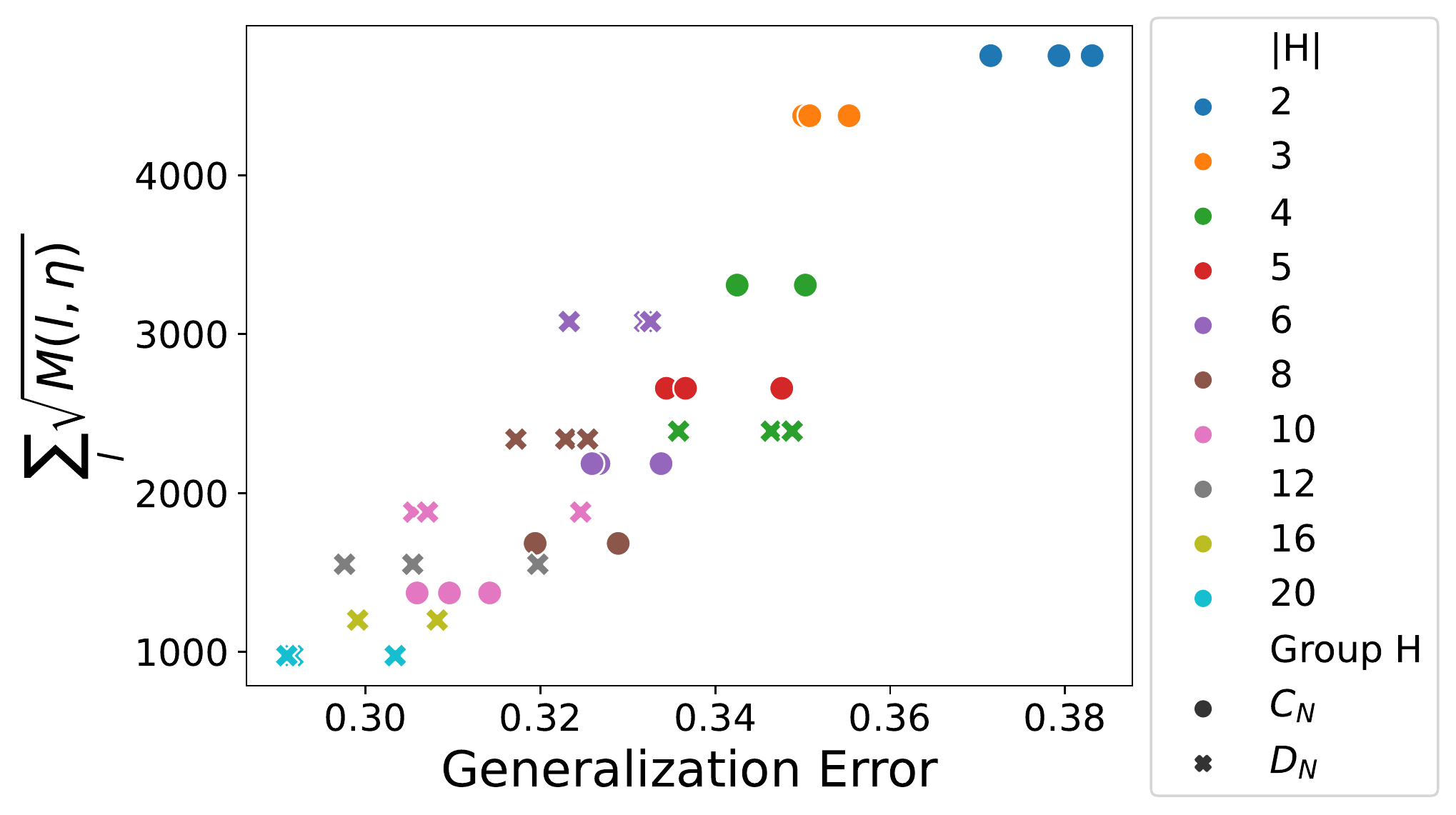}
}
\caption{
    Correlation between the term $\sum_l \sqrt{M(l, \eta)}$ and the Generalization Error (GE) of different $H$-equivariant models ($H = C_N$ or $H = D_N$) on the synthetic $\O2$ dataset with frequency $F=1$ and $F=6$, with a training set size $m=3200$.
}
\label{fig:M_factor}
\end{figure}

We study the relation between the term $\sum_l \sqrt{M(l, \eta)}$ and the generalization error observed.
Note that this term depends only on the architecture design, i.e. it is data and training agnostic.
In Fig.~\ref{fig:M_factor}, we show the correlation between these two quantities for two synthetic datasets.
Both datasets have $G=\O2$ symmetry, but the data rotates with frequencies up to $F=1$ in the first case and $F=6$ in the second.
We observe that the term $\sum_l \sqrt{M(l, \eta)}$ correlates strongly with the generalization in the high-frequency dataset ($F=6$), where the generalization benefits from increasing the size of the equivariance group.
Conversely, in the low-frequency case ($F=1$), we observe a saturation of the performance, as increasing the group size does not improve generalization. 
In this setting, the term $\sum_l \sqrt{M(l, \eta)}$ is only a weak predictor of generalization.
From these observations, we interpret this term as an indicator of the expressiveness of a model.
Particularly, when the irreps are chosen compatible with the dataset symmetries, the term  $\sum_l \sqrt{M(l, \eta)}$ provides a good guideline for the architecture choice. However, similar to general neural networks, the generalization error is dependent on other factors as well, which can be for example norms of network weights.

\section{Conclusion}
\label{sec:conclusion}

We provided learning theoretic frameworks to study the effect of invariance and equivariance on the generalization error and provide guidelines for design choices. 
The obtained bounds provide useful insights and guidelines for understanding these architectures. As limitation of our work, we show experimentally in  supplementary materials that the obtained bounds do not decrease with larger training sizes $m$ as expected. Nevertheless, for a fixed training set, the trend of the generalization bound {correlates} with the generalization error.

\clearpage
\bibliographystyle{alpha}
\bibliography{bibliography}
\clearpage
\section*{Supplementary Material}

\section{Elements of Group and Representation Theory}
\label{apx:group_theory}

In this section, we provide a brief introduction to the concepts from Group Theory which we need in our derivations.

\begin{definition}[Group]
    A \textit{group} is a pair $(G, \cdot)$ containing a set $G$ and a binary operation $\cdot : G \times G \to G, (h, g) \mapsto h \cdot g$ which satisfies the group axioms:
    \begin{itemize}
        \item Associativity: \quad $\forall a, b, c \in G \quad a \cdot (b \cdot c) = (a \cdot b) \cdot c$
        \item Identity: \quad $\exists e \in G: \forall g \in G \quad g \cdot e = e \cdot g = g$
        \item Inverse: \quad $\forall g \in G \ \ \exists g^{-1} \in G: \quad g \cdot g^{-1} = g^{-1} \cdot g = e $
    \end{itemize}
    The operation $\cdot$ is the \emph{group law} of $G$.
\label{def:group}
\end{definition}
The inverse elements $g^{-1}$ of an element $g$, and the identity element $e$ are \emph{unique}. In addition, if the group law is also \textit{commutative}, the group $G$ is an \textit{abelian group}. To simplify the notation, we commonly write $ab$ instead of $a \cdot b$. It is also common to denote the group $(G, \cdot)$ just with the name of its underlying set $G$.

The \textbf{order} of a group $G$ is the \emph{cardinality} of its set and is indicated by $|G|$. 
A group $G$ is \textbf{finite} when $|G| \in \sN$, i.e., when it has a finite number of elements. A \textbf{compact} group is a group that is also a compact topological space with continuous group operation.

\begin{definition}[Group Action]
    Given a group $G$, its \textit{action} on a set $\gX$ is a map $.: G \times \gX \to \gX, \ (g, x) \mapsto g.x$ which satisfies the axioms:
    \begin{itemize}
        \item identity: $\forall x \in \gX \quad e.x = x$
        \item compatibility: $\forall a, b \in G \ \ \forall x \in \gX \quad a.(b.x) = (ab).x$
    \end{itemize}
\label{def:group_action}
\end{definition}

A simple example of group action is the group law itself $\cdot : G \times G \to G$ which defines an action of $G$ on its own elements ($\gX = G$).
Another important action is the one defined on signals overs the group $G$.
Given a signal $x : G \to \R$, the action of an element $g \in G$ maps $x \mapsto g.x, [g.x](h) := x(g^{-1}h)$.

The \textbf{orbit} of $x \in \gX$ through $G$ is the set  $G.x := \{g.x | g \in G\}$. The orbits of the elements in $\gX$ form a \textit{partition} of $\gX$. By considering the equivalence relation $\forall x, y \in \gX \quad x \sim_G y \iff x \in G.y$ (or, equivalently, $y \in G.x$), one can define the \textbf{quotient space} $\gX \rQ G := \{G.x | x \in \gX\}$, i.e. the set of all different orbits.

\begin{definition}[Subgroup]
    Given a group $(G, \cdot)$, a non-empty subset $H \subseteq G$ is a \textbf{subgroup} of $G$ if it forms a group $(H, \cdot)$ under the same group law, restricted to the elements of $H$.
    This is usually denoted as $H \leq G$.
\end{definition}
$H$ is a subgroup of $G$ if and only if the subset $H$ is \textit{closed} under the group law and the inverse operations, i.e.:
\begin{itemize}
    \item $\forall a, b \in H \quad a \cdot b \in H$
    \item $\forall h \in H \quad h^{-1} \in H$
\end{itemize}
Note that a subgroup $H$ needs to contain the identity element $e \in G$.

\paragraph{Linear representations}
\begin{definition}[Linear representation]
Given a group $G$ and a vector space $V$, a linear representation of $G$ is a homomorphism $\rho: G \to \GL(V)$ which associates to each element of $g\in G$ an element of general linear group $\GL(V)$, i.e. an invertible matrix acting on $V$, such that the condition below is satisfied:
\begin{align*}
    \forall g,h\in G, \quad \rho(gh) = \rho(g) \rho(h).
\end{align*}
\end{definition}

The most simple representation is the trivial representation $\psi: G \to \R, g \mapsto 1$, mapping all element to the multiplicative identity $1 \in \R$.
The common $2$-dimensional rotation matrices are an example of representation on $\R^2$ of the group $\SO2$.

For a finite group $G$, an important representation is the \textit{regular representation} $\rho_\text{reg}$.
It acts on the space $V = \R^{|G|}$ of vectors representing signals over the group $G$.
The regular representation $\rho_\text{reg}(g)$ of an element $g \in G$ is a $|G| \times |G|$ matrix which permutes the entries of vectors $\vx \in \R^{|G|}$. Each vector in $\R^{|G|}$ represents a function over the group, $x:G\to \R$, with $x(g_i)$ being $i$-th entries of $\vx$. Then, the group action $\rho_\text{reg}(g) \vx$ represents the function $g.x$, i.e. the signal $x$ shifted by $g$.
In other words, $\rho_\text{reg}(g)$ is a permutation matrix moving the $i$-th entry of $\vx$ to the $j$-th entry such that $g_j = g g_i$.
Regular representations are of high importance, because they describe the features of group convolution networks.

Given two representations $\rho_1: G \to \GL(\R^{n_1})$ and $\rho_2: G \to \GL(\R^{n_2})$, their \textit{direct sum} $\rho_1 \oplus \rho_2: G \to \GL(\R^{n_1 + n_2})$ is a representation obtained by stacking the two representations as follow:
\[
    (\rho_1 \oplus \rho_2)(g) = 
    \begin{bmatrix}
    \rho_1(g) & 0        \\
    0         & \rho_2(g)\\
    \end{bmatrix} \ .
\]
Note that this representation acts on $\R^{n_1 + n_2}$ which contains the concatenation of the vectors in $\R^{n_1}$ and $\R^{n_2}$.


\paragraph{Fourier Transform}

Fourier analysis can be generalized for square integrable complex functions $f: G \to \sC$ defined over a compact group $G$. Fourier components in this case are a particular set of complex representations of $G$ called \textit{irreducible representations} (or \textit{irreps}) of $G$. Irreps are defined as representation with no $G-$invariant subspaces (see  \cite{serre1977linear} for rigorous details). The key role of irreps is clear from the following theorem. 

\begin{theorem}[Maschke's Theorem] Every representation of a finite group $G$ on a nonzero, finite dimensional complex vector space is a direct sum of irreducible representations.
\end{theorem}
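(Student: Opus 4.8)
The plan is to reduce the statement to the following \emph{complementation property}: for every $G$-invariant subspace $W\subseteq V$ there is a $G$-invariant subspace $W'\subseteq V$ with $V = W\oplus W'$. Granting this, the theorem follows by (strong) induction on $\dim V\geq 1$. If $V$ has no proper nonzero $G$-invariant subspace — in particular if $\dim V = 1$ — then $V$ is irreducible by definition and we are done. Otherwise pick a proper nonzero invariant subspace $W$; by the complementation property $V = W\oplus W'$ with both summands $G$-invariant and of strictly smaller dimension, so the induction hypothesis applies to each and exhibits both as direct sums of irreps, hence so is $V$.

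To establish the complementation property I would use the averaging trick (the finite-group version of Weyl's unitary trick). Starting from an arbitrary Hermitian inner product $\langle\cdot,\cdot\rangle_0$ on $V$, define
\[
\langle u, v\rangle \;:=\; \frac{1}{|G|}\sum_{g\in G}\langle \rho(g)u,\ \rho(g)v\rangle_0 .
\]
This is again a positive-definite Hermitian form, being a finite average of such, and it is $G$-invariant: for any $h\in G$, the substitution $g\mapsto gh$ in the sum yields $\langle \rho(h)u,\rho(h)v\rangle = \langle u,v\rangle$. Now set $W' := W^\perp$, the orthogonal complement of $W$ with respect to $\langle\cdot,\cdot\rangle$. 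Non-degeneracy gives $V = W\oplus W'$, and $W'$ is $G$-invariant because, for $v\in W'$, $w\in W$ and $g\in G$, we have $\langle \rho(g)v, w\rangle = \langle v, \rho(g)^{-1}w\rangle = 0$ since $\rho(g)^{-1}w\in W$ and $v\perp W$. (An equivalent route: average any linear projection $p:V\to W$ to the $G$-equivariant idempotent $\tilde p = \frac{1}{|G|}\sum_{g\in G}\rho(g)\,p\,\rho(g)^{-1}$ and take $W' = \ker\tilde p$.)

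The only real content is this averaging step, which is exactly where finiteness of $G$ is used in an essential way: it is what permits replacing an arbitrary inner product by a $G$-invariant one, after which the problem collapses to elementary linear algebra. I expect that to be the main point to get right; the induction and the verification that $W^\perp$ is invariant are routine. I would additionally remark that the same argument goes through verbatim for compact groups — the setting actually relevant to this paper — by replacing $\frac{1}{|G|}\sum_{g\in G}$ with integration $\int_G \cdot\, d\mu(g)$ against the normalized Haar measure, with finite-dimensionality of $V$ ensuring convergence of all the integrals involved.
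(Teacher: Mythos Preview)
Your proof is correct and is the standard textbook argument for Maschke's theorem: reduce to the existence of an invariant complement, establish that complement either via an averaged invariant Hermitian form or via an averaged projection, and finish by induction on $\dim V$. The remark about the compact case via Haar integration is also accurate and appropriate for this paper's setting.

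However, there is nothing to compare against: the paper does not prove Maschke's theorem. It is stated in the background section on group and representation theory purely as a quoted result, with the surrounding text immediately moving on (``Maschke's theorem can be generalized to compact groups. Irreps are, therefore, simple blocks for decomposing representations.''). The paper's reference for this material is Serre's \emph{Linear Representations of Finite Groups}, and your argument is essentially the proof given there. So your proposal is fine as a self-contained proof, but the paper itself offers no alternative approach to contrast it with.
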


Maschke's theorem can be generalized to compact groups. Irreps are, therefore, simple blocks for decomposing representations.  Irreps are denoted by $\psi$ with dimension $\dim_\psi$, namely, $\psi: G \to \GL(\sC^{\dim_\psi})\}_\psi$.
The set of irreps is denoted by $\hat{G} = \{\psi: G \to \GL(\sC^{\dim_\psi})\}$. The matrix coefficients of the irreps (i.e. each entry $\psi_{ij}: G \to \sC$ of each irrep $\psi$, interpreted as a function over $G$) form an orthogonal basis for square integrable functions over $G$.
Therefore, we can project a function $f: G \to \sC$ on this basis and get the Fourier components as:
\begin{align*}
    \hat{f}(\psi) := \int_{g \in G} f(g) \psi(g) \ d\mu(g),
\end{align*}
where $\mu$ is the Haar measure over $G$. 
Note that $\hat{f}(\psi)$ is a $\dim_\psi \times \dim_\psi$ matrix containing a coefficient for each entry of $\psi$. The inverse Fourier transform is  defined as:
\begin{align*}
    f(g) = \sum_{\psi \in \hat{G}} \dim_\psi \Tr(\hat{f}(\psi) \psi(g^{-1}))
\end{align*}
Note that the trace is equivalent to computing the elementwise product between the matrices $\hat{f}(\psi)$ and $\psi(g^{-1})^T = \overline{\psi(g)}$ and then summing over all entries. 

If $G$ is the cyclic group, $G = \CN$, one recovers the usual \textit{discrete Fourier transform}, because the complex irrep of $\CN$ are $1$-dimensional and correspond to the the complex exponentials up to order $N-1$. When $G$ is finite, the number of Fourier coefficients $\sum_{\psi \in \hat{G}} \dim_\psi^2$ is equal to $|G|$.
Indeed, the Fourier transform of a signal $x: G \to \sC$ can be interpreted as a $|G|\times|G|$ change of basis which maps the vector $\vx \in \sC^{|G|}$ representing $x$ in the group domain to $\hat{\vx} \in \sC^{|G|}$ containing all the Fourier coefficients.

Given two signals $w, x : G \to \sC$, the following property (similar to the common Fourier transform) holds:
\begin{align}
    \label{eq:fourier_inv_subspace}
    \widehat{g.x}(\psi) &= \psi(g) \hat{x}(\psi)
\end{align}
Eq.~\ref{eq:fourier_inv_subspace} guarantees that a transformation by $g$ of $x$ does not mix the coefficients associated to different irreps. 
Moreover, the coefficients associated with the irrep $\psi$ are mixed precisely by the matrix $\psi(g)$, i.e. they transform according to $\psi$.
Indeed, if all the coefficients associated to the same irrep are grouped together, one can decompose the regular representation $\rho_\text{reg}$ of $G$ in a \textit{direct sum} of the irreps of $G$, up to the change of basis $B$ as above.
More precisely, the irreps decomposition of $\rho_\text{reg}$ contains $\dim_\psi$ copies of $\psi$, one for each column%
\footnote{
In Eq.~\ref{eq:fourier_inv_subspace}, $\psi(g)$ acts on each column of the matrix $\hat{f}(\psi)$ independently.
}
of $\hat{f}(\psi)$, i.e.:
\[
    \rho_\text{reg}(g) = B^{-1} \bigoplus_{\psi \in \hat{G}} \paran{\underbrace{\psi(g) \oplus \psi(g) \oplus \dots}_{\dim_\psi \text{ times}}} B \ .
\]
This is the key insight used in the analysis of group convolution in this work. 


\paragraph{Group Convolution}

Given a space $\gX$ associated with the action of a group $G$ and an invariant inner product $\langle \cdot, \cdot \rangle: \gX \times \gX \to \R$, we define the \textit{group convolution} $w \gconv x : G \to \R$ of two elements $w, x \in \gX$ as
\begin{align}
    \forall g \in G \quad  (w \gconv x)(g) := \langle g.w, x \rangle \ .
\label{def:group_conv}
\end{align}
Group convolution satisfies the following equality:
\begin{align}
    \label{eq:fourier_conv_theorem}
    \widehat{w \gconv x}(\psi) &= \hat{x}(\psi) \hat{w}(\psi)^T
\end{align}

The classical definition of \textit{group convolution} between two signals $w, x : G \to \R$ is a special case of the one above in the case $\gX$ is the space of square integrable functions over $G$ and the invariant inner product is defined as:
\begin{align*}
    \langle w, x \rangle := \int_{g \in G} w(g) x(g) \ d\mu(g)
\end{align*}
where $\mu: G \to \R$ is a Haar measure over $G$.
The group convolution, then, becomes 
\[
    (w \gconv x)(g) := \int_{h \in G} w(g^{-1} h) x(h) \ d\mu(h) =\int_{h \in G} w(h) x(g.h) \ d\mu(h) \ .
\]
Note that what we defined is technically a \textbf{group cross-correlation}, and so it differs from the usual definitions of convolution over groups.
We still refer to is as group convolution to follow the common terminology in the deep learning literature. 

When $G$ is finite, the Haar measure is the counting measure and the integral becomes a sum. Fix an ordering of group elements as  $(g_0, g_1, \dots, g_i, \dots, g_{|G|-1})$ with $g_0=e$ the identity element. The signals $x, w : G \to \R$ can be stored as $|G|$ dimensional vectors $\vx, \vw \in \R^{|G|}$, where the $i$-th entry contains the value of the function evaluated on $g_i$:
\begin{equation}
    \vx=
    \begin{pmatrix}
    x(g_0)\\
    \vdots\\
    x(g_{|G|-1})
    \end{pmatrix},\quad 
        \vw=
    \begin{pmatrix}
    w(g_0)\\
    \vdots\\
    w(g_{|G|-1})
    \end{pmatrix}.
\end{equation}
The same holds for the output signal $w \gconv x$. Define matrix $\mW$ as follows:
\begin{equation}
    \mW=\begin{pmatrix}
    w(g_0)& \dots & w(g_{|G|-1})\\
    w(g_1^{-1}g_0) & \dots&   w(g_1^{-1}g_{|G|-1})\\
    \vdots &\ddots &\vdots \\
    w(g_{|G|-1}^{-1}g_0) & \dots&   w(g_{|G|-1}^{-1}g_{|G|-1})\\
    \end{pmatrix}
\label{eq:group_circulant}
\end{equation}
Then, the group convolution can then be expressed as a matrix multiplication between the vector $\vx$ and a matrix $\mW$ containing at position $(i, j)$ the entry of $w_{g_i^{-1} g_j}$:
\begin{equation}
    (w \gconv h.x)(g_i) = (\mW\vx)[i]
\end{equation}
The matrix $\mW$ contains permuted copies of its first row, where the respective permutation matrix is determined by group actions.
Assuming $g_0 = e$ is the identity, the first row of $\mW$ contains $\vw^T$ while the $i$-th row contains $(g_i.\vw)^T$, i.e. the vector representing the signal $g_i.w$. The matrix $\mW$ is a $G$-\textbf{circulant matrix}.

\paragraph{Equivariance.} For two vector spaces $V$ and $V'$ with group representations respectively $\rho$ and $\rho'$, the linear transformation $T:V'\to V$ is equivariant if
\[
\forall g\in G: T\circ \rho'(g)=\rho(g)\circ T.
\]

Now, since an invariant inner product satisfies $\langle g.w, g.x \rangle = \langle w, x \rangle$ for any element $g \in G$, one can show that the group convolution above is \textbf{equivariant}:
\begin{align*}
    (w \gconv h.x)(g) &= \langle g.w, h.x \rangle \\
                     &= \langle (h^{-1}g).w, (h^{-1}h).x \rangle \\
                     &= \langle (h^{-1}g).w, x \rangle \\
                     &= (w \gconv x)(h^{-1}g) \ .
\end{align*}
By definition of action of $G$ on signals over $G$, it follows that $(w \gconv h.x) = h.(w \gconv x)$.

We finish the part by presenting Schur's lemma, \cite[Section 2.2]{serre1977linear} which helps characterizing equivariant maps.
\begin{theorem}[Schur's lemma]
Let $\rho^1$ and $\rho^2$ be irreducible representations of $G$ respectively on  vector spaces $V_1$ and $V_2$. Suppose that the linear transformation $T:V_1\to V_2$ is equivariant. Then:
\begin{enumerate}
    \item If $\rho^1$ and $\rho^2$ are non-isomorphic, then $T=0$.
    \item If $V_1=V_2$ and $\rho^1=\rho^2$, then $T$ is a homothety, i.e, a scalar multiple of identity. 
\end{enumerate}
\end{theorem}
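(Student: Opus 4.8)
The plan is to run the two standard kernel-and-image arguments. First I would note that equivariance makes both $\ker T \subseteq V_1$ and $\operatorname{im} T \subseteq V_2$ into $G$-invariant subspaces: if $v \in \ker T$ then $T(\rho^1(g)v) = \rho^2(g)T(v) = 0$, so $\rho^1(g)v \in \ker T$; and any $w = T(v) \in \operatorname{im} T$ satisfies $\rho^2(g)w = \rho^2(g)T(v) = T(\rho^1(g)v) \in \operatorname{im} T$. Since $\rho^1$ is irreducible, $\ker T$ is either $\{0\}$ or $V_1$; since $\rho^2$ is irreducible, $\operatorname{im} T$ is either $\{0\}$ or $V_2$.

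For claim~1, assume $T \neq 0$. Then $\ker T \neq V_1$, so $\ker T = \{0\}$ and $T$ is injective; and $\operatorname{im} T \neq \{0\}$, so $\operatorname{im} T = V_2$ and $T$ is surjective. Hence $T$ is a linear isomorphism, and $T^{-1}$ is again equivariant (compose $T^{-1}$ on both sides of $T\rho^1(g) = \rho^2(g)T$ to get $T^{-1}\rho^2(g) = \rho^1(g)T^{-1}$), so $\rho^1 \cong \rho^2$, contradicting the hypothesis; therefore $T = 0$.

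For claim~2, write $V := V_1 = V_2$ and $\rho := \rho^1 = \rho^2$, and use that $V$ is a complex vector space. The characteristic polynomial of $T$ has a root $\lambda \in \mathbb{C}$, so $T - \lambda\,\mathrm{id}_V$ has a nonzero kernel. Because $\lambda\,\mathrm{id}_V$ commutes with every $\rho(g)$, the map $T - \lambda\,\mathrm{id}_V$ is still equivariant from $V$ to $V$, and the dichotomy above applied to it forces its kernel to be all of $V$; hence $T = \lambda\,\mathrm{id}_V$ is a homothety.

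The only genuinely delicate point is the appeal to an eigenvalue in claim~2: it relies on the ground field being algebraically closed, which is why this version of Schur's lemma is stated over $\mathbb{C}$. Over $\mathbb{R}$ the commutant of an irrep is merely a real division algebra ($\mathbb{R}$, $\mathbb{C}$, or $\mathbb{H}$), which is precisely the refinement treated separately in Supplementary~\ref{apx:real_schur}; apart from this, every step is elementary linear algebra, and the main bookkeeping is simply tracking which of $\rho^1$, $\rho^2$ acts on which space.
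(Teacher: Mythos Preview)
The paper does not actually prove Schur's lemma; it states it as a classical result and cites \cite[Section 2.2]{serre1977linear} for the proof. Your argument is precisely the standard kernel--image proof found in that reference (and in essentially every textbook), and it is correct; your closing remark about the role of algebraic closure and the $\mathbb{R}/\mathbb{C}/\mathbb{H}$ trichotomy over the reals also matches exactly how the paper later uses and refines the lemma in Supplementary~\ref{apx:real_schur}.
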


\paragraph{Equivariant neural networks.}

To build equivariant networks, without loss of generality, we assume the feature $\vx \in \R^{\chan{l}}$ of the neural network at layer $l$ transforms according to a generic representation $\chan{l}$-dimensional representation $\rho_l$ and that it is decomposed%
\footnote{
   The decomposition can include an orthogonal change of basis $Q$.
}
in a number of subvectors $\hat{\vx}_i \in \R^{c_{l, i}}$, with $\sum_i c_{l, i} = \chan{l}$, each transforming according to an irrep $\psi_i: G \to \GL(\sC^{c_{l,i}})$. To see this more precisely, consider the layer $l$ given by the matrix $\mW_l \in \sR^{c_l \times c_{l-1}}$. The group representation is $\rho_l$ acting on $\R^{c_l}$. The equivariance relation is given by:
\[
\mW_l \rho_{l-1}(h) = \rho_l(h) \mW_l \quad .
\]
We first decompose representations using Maschke's theorem. The representation $\rho_{l}$ can be written direct sum of irreps as 
\begin{align*}
\rho_{l} &= Q_{l} \paran{ \bigoplus_\psi \bigoplus_{i=1}^{m_{l, \psi}} \psi } Q_{l}^{-1}\\
&= Q_{l}\times \text{block-diagonal}
\paran{\psi \in \hat{G}:\underbrace{
\begin{pmatrix}
\psi&\dots& 0\\
 \vdots&\ddots&\vdots\\
 0&\dots&\psi
\end{pmatrix}
}_{m_{l,\psi} \text{ blocks}}} \times
Q_{l}^{-1}
\end{align*}
The multiplicity of the irrep $\psi$ in $\rho_{l}$ is $m_{l, \psi}$.
Using $\widehat{\mW}_l = Q_{l}^{-1}\mW_l Q_{l-1}$, we have
\[
\widehat{\mW}_l \paran{ \bigoplus_\psi \bigoplus_{i=1}^{m_{{l-1}, \psi}} \psi } = 
   \paran{ \bigoplus_\psi \bigoplus_{i=1}^{m_{l, \psi}} \psi }\widehat{\mW}_l.
\]
Since $\paran{ \bigoplus_\psi \bigoplus_{i=1}^{m_{l, \psi}} \psi }$ is block diagonal, we can partition $\widehat{\mW}_l$ similarly with the sub-block $(j,i)$ of blocks relating $\psi_1$ to $\psi_2$, denoted by $\widehat{\mW}_l(\psi_2, j, \psi_1, i)$. The block relates $\psi_1$ to $\psi_2$ as:
\[
\widehat{\mW}_l(\psi_2, j, \psi_1, i)\psi_1 =\psi_2\widehat{\mW}_l(\psi_2, j, \psi_1, i). 
\]
\begin{figure}
    \centering
    \includegraphics[width=\linewidth]{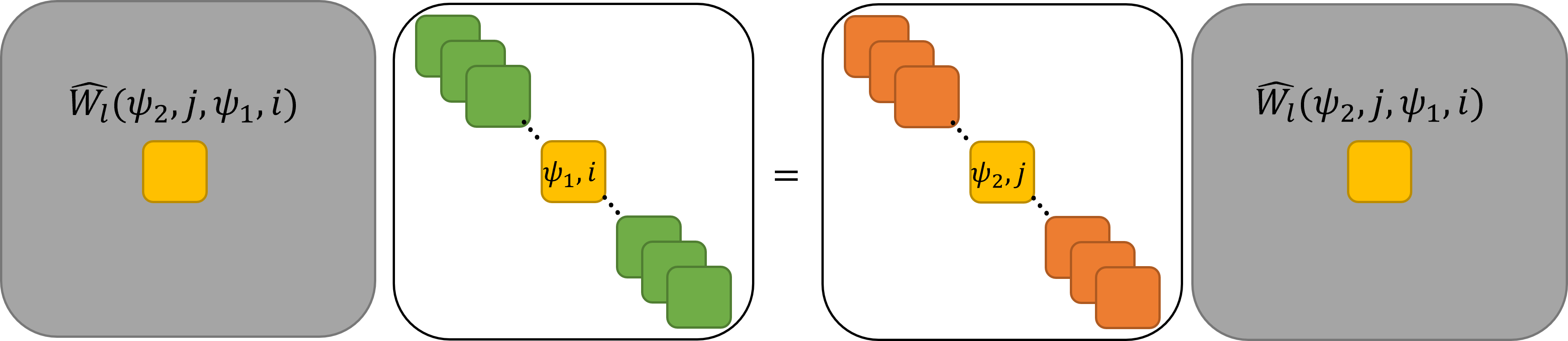}
    \caption{The block diagonal structure in $\mW_l \rho_{l-1}(h) = \rho_l(h) \mW_l$}
    \label{fig:block_diagonal-kernel}
\end{figure}

This setup follows the premise of Schur's lemma, exept that in practice, a neural network uses real valued features and weights.
Hence, we need to consider \textit{real} irreps, which we denote with $\{\psi_i\}$.
This requires some adaptation to the Fourier analysis and Schur's lemma. This is explained more precisely in Supplementary~\ref{apx:real_schur}. If $\psi_1 \neq \psi_2$, the block needs to be zero. Otherwise, each block $\mW$ equivariant to irrep $\psi$ can be expressed as
\[
    \mW = \sum_{k=1}^{c_\psi} w_k \mB_{\psi, k}
\]
where matrices $\mB_{\psi, k}$ are given in Supplementary~\ref{apx:real_schur}, $c_\psi$ is either 1,2 or 4, and $\{w_k\}_{k=1}^{c_\psi}$ are the $c_\psi$ free parameters. Therefore, each block $\widehat{\mW}_l(\psi, j, \psi, i)$ is characterized by $c_\psi$ free parameters. 

Note that, the main difference of real irreps is that only the matrix coefficients in a fraction $\frac{1}{c_\psi}$ of the columns of $\psi$ are part of the orthogonal basis for real functions on $G$, while the other coefficients are redundant.
It follows that, when computing the Fourier transform of a function $x: G \to \R$, only the first $\frac{\dim_\psi}{c_\psi}$ columns of $\hat{x}(\psi)$ needs to be considered and, therefore, the regular representation of $H$ only contains $\frac{\dim_\psi}{c_\psi}$ copies of $\psi$.


\paragraph{Group convolutional networks.}

Group convolutional networks is an important special case of the setup presented above. Group convolution based architectures of \cite{cohen_group_2016} can be obtained by choosing multiple copies of regular representations at each layer. 

Consider an input vector space $\gX$ 
associated with an action $H \times \gX \to \gX $ of $H$. We use group convolution as defined in \eqref{def:group_conv}. Each convolutional kernel can be parametrized by a kernel $\vw$, and its action is represented by a group circulant matrix $\mW$ as in \eqref{eq:group_circulant}.

This is the building blocks of the linear layers of group equivariant networks. Each intermediate convolutional layer maps a feature space with $c_{l-1}$ channels of dimension equal to group size $|H|$ to a new feature space with $c_l$ channels of dimension $|H|$. 
Any such layer can be visualized as follows:

\begin{equation}
\scalebox{0.65}{$
    \begin{pmatrix}
        \rvy_{1} \in \R^{|H|} \\ \hline
        \rvy_{2} \in \R^{|H|}\\ \hline
        \vdots \\ \hline
        \rvy_{c_l} \in \R^{|H|}\\
    \end{pmatrix}
    =
    \underbrace{
        \left(
        \begin{array}{c|c|c|c}
            \mW_{l, (1,1)} & \mW_{l, (2,1)} & \dots & \mW_{l, (c_{l-1}, 1)} \\ \hline
            \mW_{l, (1,2)} & \mW_{l, (2,2)} & \dots & \mW_{l, (c_{l-1}, 2)} \\ \hline
            \vdots         & \ddots         &       &         \vdots          \\ \hline
            \mW_{l, (1,c_l)} & \mW_{l, (2,c_l)} & \dots & \mW_{l, (c_{l-1}, c_l)} \\ 
        \end{array}
        \right)
        }_{
        \textstyle\begin{array}{c}
            \mW_l
        \end{array}
    }
        \!\cdot\!
    \underbrace{
        \begin{pmatrix}
            \rvx_{1} \in \R^{|H|}\\ \hline
            \rvx_{2} \in \R^{|H|}\\ \hline
            \vdots \\ \hline
            \rvx_{c_{l-1}} \in \R^{|H|}\\
        \end{pmatrix}
        }_{
        \textstyle\begin{array}{c}
            \vx
        \end{array}
    }
$}
\label{eq:equivariant_layer}
\end{equation}
Here, any $\mW_{l, (i, j)} \in \R^{\card{H} \times \card{H}}$ block an group circulant matrix corresponding to $H$, which encodes a $H$-convolution parametrised by a filter $\vw_{l, i, j} \in \R^{\card{H}}$. Note that the output of the $l$-th layer is seen as a $\card{H}\times c_l$ signal, i.e. it stores a different value $(\vw_1 \hconv \vx)(h, i)$ for each group element $h$ and channel $i$.

Usually, the group action on the input space $\Sin$ is given by the task.
The first linear layer transforms the input space to multiple signals (channels) over $H$. From that moment, we can use the convolution in Eq.~\ref{def:group_conv} in the following layers. 

Finally, the last layer produces an invariant output for each class. 
This can be interpreted as a group convolution with constant filters or, equivalently, as an average pooling within the $\card{H}$ channels in each block to produce a $c_{L-1}$ dimensional invariant vector followed by a linear classifier.

The resulting architecture can be written as a \ac{MLP} with structured weights similar to \cite{Wood_Shawe-Taylor_1996, universalgroupmlp}.
Such network, denoted by $f(\vx;\sW)$, is an \ac{MLP} with $\diml$ layers, input $\vx$ and the set of effective parameters $\sW$.
The $l$-th layer consists of a linear map $\mW_l$ from a vector space $\gV_{l-1}$ to another vector space $\gV_l$ followed by an activation function $\sigma_l(\cdot)$.
The network output at the layer $l$, denoted by $f_l(\vx;\sW)$, is given by $f_l(\vx;\sW) = \sigma_l(\mW_l f_{l-1}(\vx;\sW)).$

This design guarantees that each linear layer commutes with the group action.
If also all non-linearities commute with it, i.e.
\[
\sigma_l(\mW_l h.\vx) = \sigma_l(h.(\mW_l \vx)) = h.\sigma_l(\mW_l \vx)\  ,
\]
it follows that the output of the model is invariant to $H$.

Note that the number of \textit{effective channels} after the $l$-th hidden layer is $\card{H}\chan{l}$ but its number of trainable parameters is $\card{H}\chan{l}\chan{l-1}$.
In total, if number of channels across layers are all equal to $\chan{}$, then the number trainable parameters is equal to $\bigO{\diml|H|\chan{}^2}$.
In our experiments, we always consider fixed architectures where the number of \textit{effective channels} is kept constant when changing the group, i.e. different group choices only affect the number of learnable parameters and the structure of the weight matrices but not their size and, therefore, not the computational cost of the model.

\paragraph{Groups considered in this work.}
In the experiments, we consider $4$ different families of compact groups.
The special orthogonal group $\SO2$ is the group of all planar rotations. 
It is an abelian (commutative) group and contains any rotation by an angle $\theta \in [0, 2\pi)$.
$\SO2$ contains infinite elements.
The orthogonal group $\O2$ is the group of all planar rotations and reflections.
Indeed, $\SO2$ is a subgroup of $\O2$.
The elements of $\O2$ are planar rotations, or planar rotations combined with a reflection along a fixed axis.
It is not commutative and contains infinite elements.
The cyclic group $\CN$ of order $N$ is a finite group containing the $N$ rotations by angles multiple of $\frac{2\pi}{N}$.
It is a subgroup of both $\SO2$ and $\O2$ and it is abelian.
Finally, the dihedral group $\DN$ is a finite group containing $2N$ elements.
It contains the $N$ rotations in $\CN$ and but also their composition with a reflection along a fixed axis.
It is a subgroup of $\O2$ and it is not commutative. If $G=\CN$, the matrix $\mW$ is a circulant matrix in the classical sense, where each row is a cyclic shift by one step of the previous one. 
\section{Schur's Lemma for intertwiners between real representations}
\label{apx:real_schur}

Given two \textit{complex} irreps $\sigma_i: G \to \GL(\sC^{\dim_{\psi_i}})$ and $\sigma_j: G \to \GL(\sC^{\dim_{\sigma_j}})$, Schur's Lemma guarantees that the set of equivariant matrices (\textbf{intertwiners})
\[
\Hom_{G,\sC}(\psi_i, \psi_j):=\{M \ |\ M\psi_i(g) = \psi_j(g)M \quad \forall g \in G \} 
\]
is either zero dimensional containing only zero function  (if $\psi_i \ncong \psi_j$), or 1-dimensional and contains only scalar multiples of the identity (if $\psi_i = \psi_j$), i.e. 
\[
\forall g \in G \quad M\psi(g) = \psi(g)M \iff \exists \lambda \in \sC\text{ s.t. } M = \lambda I,
\]
where $\cong$ denotes isomorphism. In other words, there is a change of basis matrix such that the two representations are equal. The space $\Hom_{G,\sC}(\psi_i, \psi_j)$ is called the space of homomorphisms. The subscript $\sC$ indicates the complex-valued intertwiners are intended.
This lemma is the core of equivariant (complex valued) linear layers and provides a way to parametrize the space of all such operations.

In this section, we discuss Schur's lemma for real irreps and characterize the space of homomorphisms $\Hom_{G,\R}(\psi_i, \psi_j)$ for real valued representations.
We still use $\sigma$ to denote \textit{complex irreps} and will refer to \textit{real irreps} with $\psi$.
First we note that if $\psi_i \ncong \psi_j$ are different \textit{real} irreps, by using the complex version of Schur's Lemma, one can show that $\Hom_{G,\R}(\psi_i, \psi_j)$ still contains only the zero matrix.

When we have $\psi_i \cong \psi_j$, the derivations are more tricky. 
For this cases, $\psi_i = \psi_j = \psi$, we have to consider $\Hom_{G,\R}(\psi, \psi)$. For complex-valued representations, as seen above from Schur's lemma, this space is one dimensional spanned by the identity matrix $\mI$. However, for real representations, this space can be either 1, 2 or 4 dimensional, corresponding to three types: real $\R$, complex $\sC$ and quanternion $\sH$ (see \cite[Section C]{cesa2022a} for accessible explanations). Based on this, any real irrep $\psi$ can be classified in three categories:
\begin{itemize}
    \item \textbf{real-type}: $\exists \psi \text { real } : \psi \cong \sigma \cong \conj{\sigma}$ 
    \item \textbf{complex-type}: $\exists \psi \text { real } : \psi = \sigma_\sR \cong \sigma \oplus \conj{\sigma}$, with $\sigma \ncong \conj{\sigma}$ 
    \item \textbf{quaternionic-type}: $\exists \psi \text { real } : \psi = \sigma_\sR \cong \sigma \oplus \conj{\sigma} \cong \sigma \oplus \sigma$, as $\sigma \cong \conj{\sigma}$ 
\end{itemize}
where $\overline{\sigma}$ is a representation such that $\overline{\sigma}: g \mapsto \overline{\sigma(g)}$ and $\overline{\sigma(g)}$ is the complex-conjugation of the entries of the matrix ${\sigma(g)}$. We also have defined
\begin{align*}
\sigma_{\sR}(g) := 
    \begin{bmatrix} 
        \Real{\sigma(g)} & -\Imag{\sigma(g)} \\
        \Imag{\sigma(g)} & \Real{\sigma(g)} \\
    \end{bmatrix} \ \in \sR^{2d \times 2d}
\end{align*}
where $\Real{X}$ (respectively, $\Imag{X}$) is a matrix containing the \textit{real} (respectively, \textit{imaginary}) part of the complex entries of the matrix $X$, i.e.:
\begin{align*}
    \Real{X} &= {\frac{1}{2}} \paran{X + line{X}} \\
    \Imag{X} &= -i {\frac{1}{2}} \paran{X - \overline{X}} \\
    X &= \Real{X} + i \Imag{X}
\end{align*}
Using these definitions, one can also verify that for any $d$-dimensional complex irrep $\sigma$:
\begin{align*}
    \sigma_{\sR}(g) &= D_d \begin{bmatrix} \sigma(g) & 0 \\ 0 &  \conj{\sigma}(g) \end{bmatrix} D_d^* = D_d (\sigma(g) \oplus \conj{\sigma}(g)) D_d^* \\
    \intertext{where}
    D_d &= {\frac{1}{\sqrt{2}}} \begin{bmatrix} iI_d & -iI_d \\ I_d & I_d \end{bmatrix}
\end{align*}
where $I_d$ is the $d\times d$ identity matrix and $^*$ indicates the conjugate transpose. Note that $D_d$ is a unitary matrix.

We can then distinguish three cases.

\paragraph{Real type}
If $\psi \cong \sigma \cong \conj{\sigma}$, we have $\psi = A \sigma A^*$, with $A$ unitary matrix ($A^{-1} = A^*$).
An intertwiner matrix $M$ satisfies $M \psi(g) = \psi(g) M$. It can be seen that the matrix $\tilde{M} := A^* M A$, $\tilde{M}$ is an intertwiner of $\sigma$:
\[
\tilde{M}\sigma(g)=A^* M A\sigma(g) = A^*M\psi(g)A =A^*\psi(g)MA=\sigma(g)A^*MA=\sigma(g)\tilde{M},
\]
and, therefore, has form $\lambda I$ for $\lambda \in \sC$ from complex-valued Schur's lemma. It follows that $M = \lambda I$, allowing only real intertwiners, 
\[
    M_\sR = \lambda I
\]
with $\lambda \in \sR$.

\paragraph{Complex type}
If $\psi \cong \sigma \oplus \conj{\sigma}$, we have $\psi = D_d (\sigma \oplus \conj{\sigma}) D_d^*$ 
with $d$ being the dimension of irrep $\sigma$. If $M$ satisfies $M \psi(g) = \psi(g) M$, then the matrix $\tilde{M} = D_d^* M D_d$ is an intertwiner of $\sigma \oplus \conj{\sigma}$:
\begin{align*}
    \tilde{M} (\sigma(g) \oplus \conj{\sigma}(g))& = D_d^* M D_d (\sigma(g) \oplus \conj{\sigma}(g)) \\
    & =D_d^* M \psi D_d \\
    & =D_d^* M D_d (\sigma(g) \oplus \conj{\sigma}(g)) =(\sigma(g) \oplus \conj{\sigma}(g))   \tilde{M}.
\end{align*}
We can apply complex-valued Schur's lemma again. The matrix $\tilde{M}$ needs to have form $\alpha I \oplus \beta I$ for $\alpha, \beta \in \sC$.
Re-applying the change of basis $D_d$ and requiring only real entries, one finds that $M$ needs to have the following form:
\[
    M_\sC = \begin{bmatrix} aI_d & -bI_d \\ bI_d & aI_d    \end{bmatrix}
\]
with $a, b \in \sR$.

\paragraph{Quaternionic type}
If $\psi \cong \sigma \oplus \sigma$, there is a basis where $\psi = D_d (\sigma \oplus \conj{\sigma}) D_d^*$ and $\conj{\sigma} = T_d^T \sigma T_d$ with $T_d = \begin{bmatrix} 0 & -I_{d/2} \\ I_{d/2} & 0\end{bmatrix}$, that is $\psi = D_d (I_d \oplus T_d^T) (\sigma \oplus \sigma) (I_d \oplus T_d) D_d^*$.
If $M$ satisfies $M \psi(g) = \psi(g) M$, we define $\tilde{M} = D_d^* (I_d \oplus T_d) M (I_d \oplus T_d^T) D_d$, intertwiner of $\sigma \oplus \sigma$.
$\tilde{M}$ needs to have form 
$\tilde{M} = \begin{bmatrix} \alpha I_d & \gamma I_d \\ \beta I_d & \delta I_d \end{bmatrix}$
for $\alpha, \beta, \gamma, \delta \in \sC$.
Re-applying the change of basis $D_d (I_d \oplus T_d^T)$ and requiring only real entries, one can show that $M$ needs to have the following form:
\[
    M_\sH = \begin{bmatrix} aI_d+cT_d & -bI_d+dT_d \\ bI_d+dT_d & aI_d -cT_d \end{bmatrix} = 
    \begin{bmatrix}
        a I_d & -c I_d & -b I_d & -d I_d \\
        c I_d &  a I_d &  d I_d & -b I_d\\
        b I_d & -d I_d &  a I_d & c I_d \\
        d I_d &  b I_d & -c I_d & a I_d 
    \end{bmatrix}
    =
    \begin{bmatrix}
        a & -c & -b & -d \\
        c &  a &  d & -b\\
        b & -d &  a & c \\
        d &  b & -c & a 
    \end{bmatrix} \otimes I_d
\]
with $a, b, c, d \in \sR$.

~\\

\paragraph{Application to equivariant networks.} Real-valued version of Schur's lemma characterizes equivariant kernel $\mW$ parametrized by $M_{\R},M_{\sC},M_{\sH}$. The columns of $\mW$ are all orthogonal to each other and each contains only one copy of each of its $c_\psi$ free parameters, where
\[
c_\psi =
\begin{cases} 
    1 & \text{real type} \\
    2 & \text{complex type} \\
    4 & \text{quaternionic type} \\
\end{cases}
\]
It follows that all columns have the same norm, equal to the sum of the $c_\psi$ free parameters squared, and therefore that $\mW$ is a scalar multiple of an orthonormal matrix (with scale equal to the norm of any of its columns).
The matrix $\mW$ satisfies the following property:
\[
    \norm{\mW \vx}_2^2 
    = {\frac{1}{\dim_\psi}} \norm{\mW}_F^2 \norm{\vx}_2^2 \ .
\]
From the above identity, it follows that the spectral norm of $\mW$ is
\begin{equation}
    \label{eq:spectral_norm_intertwiner}
    \norm{\mW}_{2} = \frac{1}{\sqrt{\dim_\psi}} \norm{\mW}_F
\end{equation}

For any irrep $\psi$, we can express any matrix $\mW$ equivariant to $\psi$ as
\[
    \mW = \sum_{k=1}^{c_\psi} w_k \mB_{\psi, k}
\]
where $\{w_k\}_{k=1}^{c_\psi}$ are the $c_\psi$ free parameters.
When $\psi$ has real type, $c_\psi = 1$ and $\mB_{\psi, 1} = \mI$ is the identity matrix.
When $\psi$ has complex type, $c_\psi = 2$ and $\mB_{\psi, 1} = \mI_{2d}$ is the identity matrix while $\mB_{\psi,2} = \begin{bmatrix} 0 & -\mI_d \\ \mI_d & 0 \end{bmatrix}$, where $d = \dim_\psi / 2$.
Finally, if $\psi$ has quaternionic type, $c_\psi = 4$
\[
    \mB_{\psi,1} = \mI_{4d}, 
    \mB_{\psi,2} = \begin{bmatrix}
        0     &  0     & -\mI_d   &  0     \\
        0     &  0     &  0     & -\mI_d \\
        \mI_d   &  0     &  0     & 0      \\
        0     &  \mI_d   &  0     & 0     
    \end{bmatrix}, 
    B_{\psi,3} = \begin{bmatrix}
        0     & -  \mI_d &  0     &  0     \\
          \mI_d &  0     &  0     &  0    \\
        0     &  0     &  0     &   \mI_d \\
        0     &  0     & -  \mI_d & 0     
    \end{bmatrix}, 
    B_{\psi,4} = \begin{bmatrix}
        0     &  0     &  0     & -  \mI_d \\
        0     &  0     &    \mI_d &  0    \\
        0     & -  \mI_d &  0     & 0     \\
          \mI_d &  0     &  0     & 0     
    \end{bmatrix}
\]
where $d = \dim_\psi / 4$.
Note that, for any $\psi$ and $1 \leq k \leq c_\psi$, $\mB_{\psi,k}$ is an orthonormal matrix.

\section{Proof of Theorem \ref{thm:pac-bayesian-equivariant-simple}}
\label{apx:pac_proof_irreps}

In this section, we provide the detailed proof of Theorem \ref{thm:pac-bayesian-equivariant-simple}. The complete version of the theorem, including all consants and dependences, is given below.

\begin{theorem} [Homogeneous Bounds for Equivariant Networks] 
For any equivariant network, with probability $1-\delta$, we have:
\begin{align}
&\Ltrue(f_\sW) 
\leq \Lmarginemp(f_\sW) + \\
&\sqrt{
 \frac{
 32 e^4B^2 \beta^{2} }{\gamma^2 m\eta}
 \paran{
 \sum_{l=1}^\diml { 
          \sqrt{M(l, \eta)}
          }
 }^2
 \paran{
\sum_{l}\frac {\sum_{\psi,i,j}\norm{\widehat{\mW_l}(\psi,i,j)}_F^2 / \dim_\psi}{\norm{\mW_l}_2^2}
}+\frac{\log(\xi(m)\frac{L\dimS^{1+1/2L}}{\delta})
}
{2{\gamma^2}\dimS}
}
\end{align}
where $\beta=\paran{\prod_l\norm{\mW_l}_2}$, and
\begin{align}
    M(l, \eta) := \log\paran{
    \frac{\sum_{l=1}^L \sum_\psi m_{l, \psi}}{  1-\eta}
    } \max_\psi \paran{5 m_{l-1, \psi} m_{l, \psi} c_{\psi}}.
\end{align}
\label{thm:pac-bayesian-equivariant}
\end{theorem}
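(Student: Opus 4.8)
The plan is to follow the perturbation‑based PAC–Bayes argument of \cite{neyshabur_pac-bayesian_2018}, but carried out in the Fourier (irrep) domain, assembling Lemmas \ref{lem:pac_bayes}, \ref{lem:perturbation_bounds}, \ref{lem:tail_circular} together with a covering argument over prior variances to remove the data‑dependence of the posterior. First I would reduce to the case $\norm{\mW_l}_2 = \beta^{1/\diml}$ for every $l$, where $\beta := \prod_l \norm{\mW_l}_2$: since the activations are homogeneous, rescaling layer $l$ by $c_l>0$ and some other layer by $1/c_l$ leaves $f_\sW$, the margin, $\beta$, and each scale‑invariant ratio $\norm{\widehat{\mW_l}(\psi,i,j)}_F^2/(\dim_\psi\norm{\mW_l}_2^2)$ unchanged, so this is without loss of generality. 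Then I set up the Gaussian prior $P=\N(0,\sigma^2\mI)$ and posterior $Q=\N(\sW,\sigma^2\mI)$ on the free Fourier parameters $\widehat{\vw_{l,i,j}}(\psi)$; by \eqref{eq:pac_bound_kl}, $D(Q\|P)=\tfrac1{2\sigma^2}\sum_{l,\psi,i,j}\norm{\widehat{\vw_{l,i,j}}(\psi)}_2^2$, and the real‑Schur identity \eqref{eq:spectral_norm_intertwiner} (each equivariant block is a scalar multiple of an orthonormal matrix, so $\norm{\widehat{\vw_{l,i,j}}(\psi)}_2^2=\norm{\widehat{\mW_l}(\psi,i,j)}_F^2/\dim_\psi$) lets me rewrite $D(Q\|P)=\tfrac1{2\sigma^2}\sum_l\sum_{\psi,i,j}\norm{\widehat{\mW_l}(\psi,i,j)}_F^2/\dim_\psi$.

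Next I would choose $\sigma$. Drawing $\mU_l$ from $Q-\sW$, Lemma \ref{lem:tail_circular} with $t=\log(\sum_l\sum_\psi m_{l,\psi}/(1-\eta))$ and a union bound over layers (this is \eqref{eq:general_derandom_spectral_bound} restated for general $\eta$) gives, with probability at least $\eta$, $\norm{\mU_l}_2 < \sigma\sqrt{M(l,\eta)}$ for all $l$ simultaneously. Plugging this into \eqref{eq:perturbation_bound} and using $\norm{\mW_l}_2=\beta^{1/\diml}$ yields, for every $\vx$ with $\norm{\vx}_2\le B$, $\norm{f_{\sW+\sU}(\vx)-f_\sW(\vx)}_\infty \le \norm{f_{\sW+\sU}(\vx)-f_\sW(\vx)}_2 < eB\,\beta^{(\diml-1)/\diml}\sigma\sum_l\sqrt{M(l,\eta)}$; I then set $\sigma = \gamma/(4eB\,\beta^{(\diml-1)/\diml}\sum_l\sqrt{M(l,\eta)})$ so that this is at most $\gamma/4$, subject to $\sum_l\norm{\mU_l}_2/\norm{\mW_l}_2\le 1$, the regime in which the perturbation bound is valid (this constraint is what forces the covering step below). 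Substituting this $\sigma$ into the expression for $D(Q\|P)$, the powers of $\beta$ collect to $\beta^2$ and I obtain that $D(Q\|P)$ equals a universal constant times $\tfrac{e^2B^2\beta^2}{\gamma^2}\big(\sum_l\sqrt{M(l,\eta)}\big)^2\sum_l\tfrac{\sum_{\psi,i,j}\norm{\widehat{\mW_l}(\psi,i,j)}_F^2/\dim_\psi}{\norm{\mW_l}_2^2}$, i.e. exactly the quantity inside the square root of the theorem (up to the universal constant and a factor $1/\eta$ acquired below).

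For the de‑randomization I would use the standard margin argument. Since with $Q$‑probability at least $\eta$ the perturbed network lies within $\gamma/4$ (in $\ell_\infty$) of $f_\sW$, conditioning $Q$ on that event gives a posterior $Q'$ with $D(Q'\|P)\le \tfrac1\eta\bigl(D(Q\|P)+O(1)\bigr)$ supported on functions that shift every margin by at most $\gamma/2$, so $\Ltrue(f_\sW)\le \Ltrue_{\gamma/2}(Q')$ and $\Lemp_{\gamma/2}(Q')\le \Lmarginemp(f_\sW)$; applying Lemma \ref{lem:pac_bayes} to $Q'$ then gives $\Ltrue(f_\sW)\le \Lmarginemp(f_\sW)+\sqrt{(D(Q\|P)/\eta + O(1) + \log(\xi(\dimS)/\delta'))/(2\dimS)}$. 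The remaining issue is that $\sigma$ (hence $P$) depends on the learned $\beta$, which is not allowed; I resolve it as in \cite{neyshabur_pac-bayesian_2018} by fixing beforehand a grid of $\bigO{\diml\log \dimS}$ candidate values of $\beta$ covering the range in which the bound is non‑trivial (outside that range the bound holds vacuously or the perturbation is automatically controlled), running the above with confidence $\delta$ divided by the grid size, and taking a union bound — which replaces $\delta$ by a $\delta/(\diml\dimS^{1+1/2\diml})$‑type confidence and produces the $\log(\xi(\dimS)\diml\dimS^{1+1/2\diml}/\delta)$ term. Collecting the universal constants gives the stated inequality.

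The main obstacle I expect is not any single step but the bookkeeping that makes the Fourier‑domain quantities line up: one must check that the single $\beta$‑normalization simultaneously (i) collapses the norm factor in the perturbation bound to $\sum_l\sqrt{M(l,\eta)}$, (ii) keeps $D(Q\|P)$ expressible through the scale‑invariant ratios $\norm{\widehat{\mW_l}(\psi,i,j)}_F^2/(\dim_\psi\norm{\mW_l}_2^2)$, and (iii) is compatible with the perturbation lemma's regime $\sum_l\norm{\mU_l}_2/\norm{\mW_l}_2\le 1$ across the whole prior grid — and that the real‑versus‑complex irrep structure (the factor $c_\psi\in\{1,2,4\}$ and the $1/\dim_\psi$ scaling of \eqref{eq:spectral_norm_intertwiner}) enters Lemma \ref{lem:tail_circular} and the KL computation consistently. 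The genuinely novel technical content — the Fourier‑domain spectral perturbation inequality \eqref{eq:bound_spectral_repr} and the spectral‑norm tail bound for random direct sums of real irreps (Lemma \ref{lem:tail_circular}) — is already isolated in the cited lemmas, so the proof of the theorem itself is essentially this assembly plus constant‑tracking.
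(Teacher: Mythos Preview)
Your proposal is correct and follows essentially the same route as the paper's own proof: Fourier-domain perturbation via Lemmas \ref{lem:perturbation_bounds} and \ref{lem:tail_circular}, de-randomization by conditioning on the $\gamma/4$-event (the paper packages this as Lemma \ref{lem:pac_bayes_derandomized} together with the KL identity of Lemma \ref{lem:kl-identity}), homogeneity-based normalization of all layer spectral norms to a common value, and a covering over $\beta$ to eliminate the data-dependence of $\sigma$. The only cosmetic differences are the order in which you normalize versus perturb and your grid-size estimate for the $\beta$-cover; the argument and all key quantities otherwise coincide.
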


\subsection{Proof of Lemma \ref{lem:perturbation_bounds}: Perturbation Analysis}
\label{apx:proof_repr_perturbation}
The proof of perturbation bound uses standard inequalities and given in \cite{neyshabur_pac-bayesian_2018}. 
We derive a bound on the spectral norm of an equivariant matrix $\mW_l$.
\begin{align*}
    \norm{\mW_l\vx}_2^2 
    = \norm{ \widehat{\mW}_l \widehat{\vx}}_2^2 
    &= \sum_\psi \sum_{j=1}^{m_{l, \psi}} \norm{\widehat{\mW_l\vx}(\psi, j)}_2^2  \\
    &= \sum_\psi \sum_{j=1}^{m_{l, \psi}} \norm{\sum_{i=1}^{m_{l-1, \psi}} \widehat{\mW_l}(\psi, j, i) \widehat{\vx}(\psi, i)}_2^2  \\
    &\stackrel{(a)}{\leq} \sum_\psi \sum_{j=1}^{m_{l, \psi}} m_{l-1, \psi} \sum_{i=1}^{m_{l-1, \psi}} \norm{\widehat{\mW_l}(\psi, j, i) \widehat{\vx}(\psi, i)}_2^2  \\
    &\leq \sum_\psi \sum_{j=1}^{m_{l, \psi}} m_{l-1, \psi} \sum_{i=1}^{m_{l-1, \psi}} \norm{\widehat{\mW_l}(\psi, j, i)}_{2}^2 \norm{ \widehat{\vx}(\psi, i)}_2^2  \\
    &\stackrel{(b)}{=} \sum_\psi \sum_{j=1}^{m_{l, \psi}} m_{l-1, \psi} \sum_{i=1}^{m_{l-1, \psi}} {\frac{1}{\dim_\psi}}\norm{\widehat{\mW_l}(\psi, j, i)}_F^2 \norm{ \widehat{\vx}(\psi, i)}_2^2  \\
    &= \sum_\psi  \sum_{i=1}^{m_{l-1, \psi}} \paran{
        m_{l-1, \psi} {\frac{1}{\dim_\psi}} \sum_{j=1}^{m_{l, \psi}} \norm{\widehat{\mW_l}(\psi, j, i)}_F^2
    } \norm{ \widehat{\vx}(\psi, i)}_2^2  \\
    &\leq \paran{\max_\psi \max_{1\leq i \leq m_{l-1, \psi}}
        m_{l-1, \psi} {\frac{1}{\dim_\psi}} \sum_{j=1}^{m_{l, \psi}} \norm{\widehat{\mW_l}(\psi, j, i)}_F^2
    } \norm{\vx}_2^2
\end{align*}
where $(a)$ follows from Cauchy-Schwarz inequality, and $(b)$ from \eqref{eq:spectral_norm_intertwiner}, namely
that for equivariant matrices, we have:
\begin{equation}
    \norm{\mW}_{2} = \frac{1}{\sqrt{\dim_\psi}} \norm{\mW}_F
\end{equation}
Therefore, we get:
\begin{align}
    \label{eq:bound_spectral_repr_suppmat}
    \norm{\mW_l}_2 \leq \sqrt{\max_\psi \max_{1\leq i \leq m_{l-1, \psi}}
        m_{l-1, \psi} {\frac{1}{\dim_\psi}} \sum_{j=1}^{m_{l, \psi}} \norm{\widehat{\mW_l}(\psi, j, i)}_F^2
    }
\end{align}

\subsection{Proofs of Lemma \ref{lem:tail_circular} and Tail Bound for Spectral Norms of Equivariant Kernels}
\label{apx:proof_repr_spectral_tailbound}
\begin{proof}

Recall that the matrix $\widehat{\mU}_l(\psi, j, i)$ has only $c_\psi$ learnable entries and that ${\frac{1}{\dim_\psi}}\norm{\widehat{\mU}_l(\psi, j, i)}_F^2$ is equal to the sum of the squares (see Supplementary~\ref{apx:real_schur}):
\[
{\frac{1}{\dim_\psi}}\norm{\widehat{\mU}_l(\psi, j, i)}_F^2 =\sum_{k=1}^{c_\psi}
\card{{\widehat{u_{l,i,j}}(\psi)}_{k}}^2.
\]

To find tail bounds on the spectral norm, 
note that $ \sum_{j=1}^{m_{l,\psi}}\sum_k\card{{\widehat{u_{l,i,j}}(\psi)}_{k}}^2$ is a $\chi^2$-random variable with $m_{l,\psi}\times c_\psi$ degrees of freedom. We will use the following inequality for bounding the tail.
\begin{lemma}[{\cite[Lemma 1.]{laurent_adaptive_2000}}] Let $X_1,\dots, X_n$ be i.i.d. Gaussian random variable with zero mean and variance 1. For any vector $\va\in\R^n$, we have
\begin{equation}
    \P(\sum_{i=1}^n a_i(X_i^2-1) \geq 2\norm{\va}_2 \sqrt{x}+2\norm{\va}_\infty x)\leq \exp(-x).
\end{equation}
\label{lem:laurent-massart}
\end{lemma}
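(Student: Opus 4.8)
The plan is to prove this as a Bernstein-type tail inequality for a weighted sum of centered $\chi^2$ variables via the Cramér--Chernoff method. Throughout I take $a_i \geq 0$, which is the regime in which the lemma is actually applied (the $a_i$ arise as nonnegative coefficients of a $\chi^2$ sum); indeed, the deviation term $2\norm{\va}_\infty x$ is precisely the Bernstein scale factor associated with nonnegative weights, so I would state this restriction explicitly. The first step is to center and compute the moment generating function of a single term: setting $Y_i := X_i^2 - 1$, so $\E[Y_i]=0$, and using $\E[e^{sX_i^2}] = (1-2s)^{-1/2}$ for $s<\tfrac12$, I get $\log\E[e^{sY_i}] = -s - \tfrac12\log(1-2s)$. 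By independence, with $S := \sum_i a_i Y_i$ and $s\geq 0$,
\[ \log\E[e^{sS}] = \sum_{i=1}^n \paran{-sa_i - \tfrac12\log(1-2sa_i)}, \]
which is finite on $0 \leq s < 1/(2\norm{\va}_\infty)$.

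The second step is an elementary per-term bound: for $0 \leq u < \tfrac12$, $\;-u-\tfrac12\log(1-2u) \leq \frac{u^2}{1-2u}$. I would verify this by noting both sides vanish at $u=0$ and that the derivative of their difference equals $2u^2/(1-2u)^2 \geq 0$, so the difference is nondecreasing and hence nonnegative. Applying it with $u = sa_i$ and bounding each denominator below by $1-2s\norm{\va}_\infty$ yields the Bernstein-form estimate
\[ \log\E[e^{sS}] \;\leq\; \frac{s^2\norm{\va}_2^2}{1-2s\norm{\va}_\infty}. \]
Chernoff's inequality then gives, for every $s \in [0,1/(2\norm{\va}_\infty))$, the bound $\P(S\geq t) \leq \exp\paran{-st + \tfrac{s^2\norm{\va}_2^2}{1-2s\norm{\va}_\infty}}$, which matches the generic shape $\log\E[e^{sS}] \leq \tfrac{vs^2}{2(1-cs)}$ with variance proxy $v = 2\norm{\va}_2^2$ and scale $c = 2\norm{\va}_\infty$.

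The final, and most delicate, step is the optimization over $s$ and the verification that at $t = 2\norm{\va}_2\sqrt{x} + 2\norm{\va}_\infty x$ the Chernoff exponent is at most $-x$. I would do this through the Legendre transform of $B(s) := \tfrac{vs^2}{2(1-cs)}$, namely $B^*(t) = \tfrac{v}{c^2}\paran{1 + \tfrac{ct}{v} - \sqrt{1 + \tfrac{2ct}{v}}}$, so that $\P(S\geq t) \leq e^{-B^*(t)}$. The crucial observation is that with $t = \sqrt{2vx} + cx$ the expression $1 + \tfrac{2ct}{v}$ becomes the perfect square $(1+r)^2$ for $r := c\sqrt{2x/v}$, which collapses $B^*(t)$ to exactly $\tfrac{v}{c^2}\cdot\tfrac{r^2}{2} = x$. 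Substituting $v = 2\norm{\va}_2^2$ and $c = 2\norm{\va}_\infty$ turns $\sqrt{2vx} + cx$ into the stated $2\norm{\va}_2\sqrt{x} + 2\norm{\va}_\infty x$, yielding $\P(S \geq 2\norm{\va}_2\sqrt{x} + 2\norm{\va}_\infty x) \leq e^{-x}$.

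I expect the main obstacle to be this last step: the point is not merely to obtain \emph{some} sub-exponential tail but to arrange the variance/scale pairing so that the Bernstein deviation reproduces the constants $2\norm{\va}_2\sqrt{x} + 2\norm{\va}_\infty x$ \emph{exactly}, rather than up to unspecified factors. The perfect-square identity above is what makes the optimized exponent land precisely on $-x$; getting that bookkeeping right (and confirming the optimal $s$ lies in the admissible interval $[0,1/(2\norm{\va}_\infty))$) is the part that requires care, while the MGF computation and the scalar inequality are routine.
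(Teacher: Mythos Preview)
Your proof is correct and follows the standard Cram\'er--Chernoff route, which is exactly how the original reference \cite{laurent_adaptive_2000} establishes it; note however that the present paper does not prove this lemma at all but simply cites it, so there is no in-paper argument to compare against. Your explicit remark that the inequality is used under the restriction $a_i \geq 0$ is both accurate (this is the hypothesis in Laurent--Massart's Lemma~1) and consistent with how the paper applies it, namely with all weights equal to $\sigma^2$.
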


Then, using this theorem, we have:
\begin{align*}
    \P\paran{
        {\frac{1}{\dim_\psi}}\sum_{j=1}^{m_{l, \psi}} \norm{\widehat{\mU}_l(\psi, j, i)}_F^2
        \geq m_{l, \psi} c_{\psi} \sigma^2 + 2 m_{l, \psi} c_{\psi} \sigma^2\sqrt{t}+2\sigma^2 t
    } \leq  e^{-t} 
\end{align*}
Then:
\begin{align*}
    \P\paran{
        m_{l-1, \psi} {\frac{1}{\dim_\psi}} \sum_{j=1}^{m_{l, \psi}} \norm{\widehat{\mU}_{l}(\psi, j, i)}_F^2
        \geq
        m_{l-1, \psi} \paran{m_{l, \psi} c_{\psi} \sigma^2 + 2 m_{l, \psi} c_{\psi} \sigma^2\sqrt{t} + 2 \sigma^2 t}
    } \leq  e^{-t} 
\end{align*}
Using the union bound:
\begin{align*}
  \left(\sum_\psi m_{l, \psi}\right) e^{-t}
  & \geq \P\paran{
  \bigvee_\psi \bigvee_i^{m_{l, \psi}} \left[
        { \frac{m_{l-1, \psi}}{\dim_\psi}}  \sum_{j=1}^{m_{l, \psi}} \norm{\widehat{\mU}_{l}(\psi, j, i)}_F^2
        \geq 
        m_{l-1, \psi} \paran{m_{l, \psi} c_{\psi} \sigma^2 + 2 m_{l, \psi} c_{\psi} \sigma^2\sqrt{t} + 2 \sigma^2 t}
        \right]
    }\\
   & \geq \P\paran{
        \max_\psi \max_i^{m_{l, \psi}}  
            {\frac{m_{l-1, \psi}}{\dim_\psi}} \sum_{j=1}^{m_{l, \psi}} \norm{\widehat{\mU}_{l}(\psi, j, i)}_F^2
        \geq 
        \max_\psi m_{l-1, \psi} \paran{m_{l, \psi} c_{\psi} \sigma^2 + 2 m_{l, \psi} c_{\psi} \sigma^2\sqrt{t} + 2 \sigma^2 t
        }
    }
\end{align*}
We can combine this bound with the bound on the spectral norm in Supplementary~\ref{apx:proof_repr_perturbation} to obtain a bound similar to the one in Lemma \ref{lem:tail_circular}:
\begin{align}
    \P\paran{
        \norm{\mU_l}_2
        \geq  
        \sigma \sqrt{ \max_\psi m_{l-1, \psi} \paran{m_{l, \psi} c_{\psi} + 2 m_{l, \psi} c_{\psi}\sqrt{t} + 2t}}
    }
    & \leq
    \left(\sum_\psi m_{l, \psi}\right) e^{-t}
\intertext{Using $5 m_{l-1, \psi} m_{l, \psi} c_{\psi} t \geq m_{l-1, \psi} ( m_{l, \psi} c_{\psi} + 2 m_{l, \psi} c_{\psi}\sqrt{t} + 2 t)$:}
\P\paran{
        \norm{\mU_l}_2
        \geq  
        \sigma \sqrt{ \max_\psi \paran{5 m_{l-1, \psi} m_{l, \psi} c_{\psi} t}}
    }
    & \leq
    \left(\sum_\psi m_{l, \psi}\right) e^{-t}
\end{align}
Using an union bound over all layers, one finds that by choosing $t = \log \left(2 \sum_l \sum_\psi m_{l, \psi}\right)$, with probability at least ${\frac{1}{2}}$ the following inequality holds for every layer $l \in [L]$:
\begin{align}\label{eq:general_derandom_spectral_bound_suppmat}
    \norm{\mU_l}_2 
        &< \sigma \sqrt{\max_\psi  5 m_{l-1, \psi} m_{l, \psi} c_{\psi}}  \sqrt{\log \left(2 \sum_l \sum_\psi m_{l, \psi}\right)}
\end{align}

\end{proof}

\subsection{A KL Divergence Identity}
We use the following lemma later in context of PAC-Bayes bounds. 
\begin{lemma}
For any two probability measures $\mu,\rho$ defined on the probability space $(\Omega,\gF)$, assume that $\mu_{|A}$ is defined as the normalized probability measure restricted to the set $A\in\gF$. Then we have:
\begin{equation}
    D(\mu\|\rho) = \mu(A) D(\mu_{|A}\|\rho) + \mu(A^c)\KL{\mu_{|A^c}}{\rho}+h_b(\mu(A)),
\end{equation}
where $h_b(\cdot)$ is the binary entropy defined as $h_b(p)=-p\log p-(1-p)\log(1-p)$.
\label{lem:kl-identity}
\end{lemma}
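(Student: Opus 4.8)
The plan is to reduce the identity to a direct computation with Radon--Nikodym derivatives. First I would dispose of the non-absolutely-continuous case: if $\mu \not\ll \rho$ then $D(\mu\|\rho)=+\infty$, and one checks the right-hand side is infinite as well (the restriction carrying the singular mass inherits an infinite divergence), so assume $\mu \ll \rho$ and set $h := \frac{d\mu}{d\rho}$. The starting point is the defining integral $D(\mu\|\rho) = \int_\Omega h \log h \, d\rho$, which I would split along the partition $\Omega = A \cup A^c$ into $\int_A h\log h\,d\rho + \int_{A^c} h \log h\,d\rho$ and treat the two pieces symmetrically.

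The key step is to rewrite $h$ on $A$ through the density of the restricted measure. By definition $\mu_{|A}(B) = \mu(B\cap A)/\mu(A)$, so its density with respect to $\rho$ is $h_A := \frac{d\mu_{|A}}{d\rho} = \frac{1}{\mu(A)}\, h\, \mathbf{1}_A$; equivalently $h = \mu(A)\,h_A$ on $A$. Substituting and expanding $\log(\mu(A)h_A) = \log\mu(A) + \log h_A$, the $A$-piece factors as
\begin{align*}
\int_A h\log h \, d\rho
&= \mu(A)\log\mu(A)\int_A h_A\,d\rho + \mu(A)\int_A h_A \log h_A\,d\rho\\
&= \mu(A)\log\mu(A) + \mu(A)\, D(\mu_{|A}\|\rho),
\end{align*}
using $\int_A h_A\,d\rho = \mu_{|A}(A) = 1$ and the fact that $h_A$ vanishes off $A$, so its integral over $A$ is the full divergence integral. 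An identical computation on $A^c$ gives $\mu(A^c)\log\mu(A^c) + \mu(A^c)\,\KL{\mu_{|A^c}}{\rho}$.

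Summing the two pieces yields
\[
D(\mu\|\rho) = \mu(A)\,D(\mu_{|A}\|\rho) + \mu(A^c)\,\KL{\mu_{|A^c}}{\rho} + \big(\mu(A)\log\mu(A)+\mu(A^c)\log\mu(A^c)\big),
\]
and the bracketed term is exactly $-h_b(\mu(A))$ by the definition of the binary entropy. Here I would flag the genuine catch: the computation forces the binary-entropy term to appear with a \emph{minus} sign, so the clean identity reads $D(\mu\|\rho) = \mu(A)D(\mu_{|A}\|\rho) + \mu(A^c)\KL{\mu_{|A^c}}{\rho} - h_b(\mu(A))$, and the $+h_b$ in the stated lemma is a sign typo. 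The remaining points are routine: the degenerate cases $\mu(A)\in\{0,1\}$ are handled by the convention $0\log 0 = 0$, under which $h_b(0)=h_b(1)=0$ and one of the restricted divergences drops out harmlessly. Thus the only real obstacle is not analytic but bookkeeping --- getting the sign of the entropy correction right.
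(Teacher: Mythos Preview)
Your argument is correct and is precisely the ``standard integral manipulations'' the paper gestures at in lieu of a proof, so there is nothing to add on the method side. Your sign observation is also right: the computation forces $D(\mu\|\rho) = \mu(A)D(\mu_{|A}\|\rho) + \mu(A^c)D(\mu_{|A^c}\|\rho) - h_b(\mu(A))$, and the $+h_b$ in the stated lemma is a typo (a quick sanity check with $\mu=\rho$ uniform on two points confirms this); the downstream application then actually yields $\eta\,D(\hat Q\|P)\le D(Q\|P)+h_b(\eta)$, which only changes the bound by an additive $O(1)$ term.
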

The proof of this lemma follows from standard integral manipulations.

\subsection{PAC-Bayesian Generalization Bound - Derandomization Technique}
\label{apx:pac_proof_circulant}

The PAC-Bayesian generalization lemma of \ref{lem:pac_bayes} provides a bound on the generalization error of randomized classifiers drawn from the posterior distribution $Q$. To obtain a bound that holds for individual hypothesis, we need to de-randomize the bound. The following lemma, a reformulation of \cite{neyshabur_pac-bayesian_2018}, provides a way of achieving this goal.

\begin{lemma}
Let  $\gP$ be a finite set of priors $P$ over the hypothesis space. Let the set $\gS_{h_0}$ be defined as: 
\[
\gS_{h_0}=\{h\in\Shyp: \norm{h-h_0}_\infty \leq \gamma/4\}.
\]
If the hypothesis after training is chosen using the posterior $Q$, set $\eta\defeq Q(\gS_{h_0})$. Then with probability $1-\delta$, we have for all $Q$:
\begin{align}
    \Ltrue ({h_0})& \leq \Lemp_\gamma(h_0) +\sqrt{
    \frac{\min_{P\in\gP} D(Q\|P)+\eta\log(\xi(m)|\gP|/\delta)}{2\dimS\eta}
    },
\end{align}
where  $\xi(\dimS)\defeq \sum_{k=0}^\dimS\binom{\dimS}{k}(k/\dimS)^{k}(1-k/\dimS)^{\dimS-k}$. 
\label{lem:pac_bayes_derandomized}
\end{lemma}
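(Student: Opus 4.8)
The plan is to run the randomized PAC-Bayes bound of Lemma~\ref{lem:pac_bayes} on a posterior \emph{restricted} to $\gS_{h_0}$, and then to trade that restriction against a factor $\eta$ using the splitting identity of Lemma~\ref{lem:kl-identity}, after translating the true and empirical losses through two elementary margin comparisons. Throughout write $Q_0 \defeq Q_{|\gS_{h_0}}$ for the normalized restriction of $Q$ to $\gS_{h_0}$; this is again a probability distribution on $\Shyp$ (data-dependent, but that is harmless for PAC-Bayes, which holds uniformly over posteriors), so Lemma~\ref{lem:pac_bayes} applies to $Q_0$ verbatim.

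First I would record the margin comparisons. If $h\in\gS_{h_0}$, i.e.\ $\norm{h-h_0}_\infty\le\gamma/4$, then at every labeled point $(\vx,y)$ the margin $h(\vx)[y]-\max_{j\ne y}h(\vx)[j]$ differs from $h_0(\vx)[y]-\max_{j\ne y}h_0(\vx)[j]$ by at most $\gamma/2$, since each of the two coordinates involved moves by at most $\gamma/4$. Hence the event counted by $\Ltrue(h_0)=\Ltrue_0(h_0)$ is contained in the event counted by $\Ltrue_{\gamma/2}(h)$, and the event counted by $\Lemp_{\gamma/2}(h)$ is contained in the event counted by $\Lemp_\gamma(h_0)$; that is, $\Ltrue(h_0)\le\Ltrue_{\gamma/2}(h)$ and $\Lemp_{\gamma/2}(h)\le\Lemp_\gamma(h_0)$ for every $h\in\gS_{h_0}$. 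Averaging over $h\sim Q_0$, which is supported on $\gS_{h_0}$, yields $\Ltrue(h_0)\le\Ltrue_{\gamma/2}(Q_0)$ and $\Lemp_{\gamma/2}(Q_0)\le\Lemp_\gamma(h_0)$.

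Next, for each fixed $P\in\gP$ I would apply Lemma~\ref{lem:pac_bayes} with the $\gamma/2$-margin loss, prior $P$, confidence parameter $\delta/\card{\gP}$, and posterior $Q_0$, and then take a union bound over the finitely many $P\in\gP$. With probability at least $1-\delta$ this gives, simultaneously for all $P\in\gP$ and all $Q$,
\[
\Ltrue_{\gamma/2}(Q_0)\ \le\ \Lemp_{\gamma/2}(Q_0)+\sqrt{\tfrac{D(Q_0\|P)+\log(\xi(m)\card{\gP}/\delta)}{2m}}\ .
\]
To control $D(Q_0\|P)$ I would invoke Lemma~\ref{lem:kl-identity} with $\mu=Q$, $\rho=P$, $A=\gS_{h_0}$, so that $\mu(A)=\eta$: since $D(Q_{|\gS_{h_0}^c}\|P)\ge 0$ and $h_b(\eta)\ge 0$, the identity yields $D(Q\|P)\ge\eta\,D(Q_0\|P)$, i.e.\ $D(Q_0\|P)\le D(Q\|P)/\eta$.

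Finally, substituting the margin comparisons into the two sides of the displayed inequality and the KL bound under the square root gives
\[
\Ltrue(h_0)\ \le\ \Lemp_\gamma(h_0)+\sqrt{\tfrac{D(Q\|P)/\eta+\log(\xi(m)\card{\gP}/\delta)}{2m}}\ =\ \Lemp_\gamma(h_0)+\sqrt{\tfrac{D(Q\|P)+\eta\log(\xi(m)\card{\gP}/\delta)}{2m\eta}}\ ,
\]
valid for every $P\in\gP$; as only $D(Q\|P)$ depends on $P$, replacing it by $\min_{P\in\gP}D(Q\|P)$ yields the claimed bound. The proof is essentially bookkeeping around the three cited lemmas rather than containing a genuine obstacle; the points that require care are the legitimacy of applying PAC-Bayes to the data-dependent restricted posterior $Q_0$, getting the factor $2$ and the correct direction in the margin comparisons, and arranging the union bound so that $\card{\gP}$ lands inside the logarithm (and the $\eta$ inside, rather than outside, the square root).
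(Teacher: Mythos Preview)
Your proposal is correct and follows essentially the same route as the paper: union bound over the finite prior set $\gP$, apply the randomized PAC-Bayes bound to the restricted posterior $Q_0=\hat{Q}$, use the margin comparisons on $\gS_{h_0}$ to sandwich $\Ltrue(h_0)$ and $\Lemp_\gamma(h_0)$, and invoke the KL splitting identity to obtain $D(Q_0\|P)\le D(Q\|P)/\eta$. The only cosmetic difference is ordering (the paper first states the union-bounded inequality for all $Q$ and then specializes to $\hat{Q}$, whereas you apply it directly to $Q_0$); the substance is identical.
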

\begin{proof}
PAC-Bayesian bound in Lemma \ref{lem:pac_bayes} holds for a single $P$. Taking the union bound over all $P\in\gP$, we can see that with probability at least $1-\card{\gP}\delta$ for all $P\in\gP$ and for all $Q$ over $\Shyp$, the bound in Lemma \ref{lem:pac_bayes} holds. Choose the margin loss \eqref{eq:margin_loss} with margin $\gamma/2$ as the loss function.  After replacing $\delta$ with $\delta/|\gP|$, the following inequality holds with probability $1-\delta$:
\begin{equation}
 \Ltrue_{\gamma/2} (Q) \leq \Lemp_{\gamma/2}(Q) +\sqrt{
    \frac{\min_{P\in\gP}D(Q\|P)+\log(\xi(\dimS)|\gP|/\delta)}{2\dimS}
    }.
\label{eq:pac-bayes-multipleP}
\end{equation}
For each $Q$, define $\hat{Q}$ as:
\begin{equation}
\hat{Q}(h) \defeq
\begin{cases}
\frac{1}{\eta}Q(h) & h\in\gS_{h_0}\\
0 & \text{otherwise}.
\end{cases}
\end{equation}
Since \eqref{eq:pac-bayes-multipleP} holds for all $Q$, we can use $\hat{Q}$ in the bound. Let $h$ be drawn from $\hat{Q}$.  Then $h$ belongs to $\gS_{h_0}$. Knowing that, we follow the argument we sketched above to obtain bound on the risks of $h_0$. Since $h\in\gS_{h_0}$, $\norm{h-h_0}_\infty\leq \gamma/4$. We have:
\begin{align}
\paran{h(\vx)[y] - \max_{j\neq y}h(\vx)[j]} &\leq \paran{h_0(\vx)[y]+\gamma/4 - \max_{j\neq y}(h_0(\vx)[j] -\gamma/4)} \\
&
\leq\paran{h_0(\vx)[y] - \max_{j\neq y}h_0(\vx)[j] }+  \gamma/2,
\end{align}
and therefore:
\begin{align}
    \Ltrue(h_0) = \P_{(\vx,y)\sim \Pdata}\paran{h_0(\vx)[y]\leq \max_{j\neq y}h_0[j]}\leq 
        \Ltrue_{\gamma/2}(h) = \P_{(\vx,y)\sim \Pdata}\paran{h(\vx)[y]\leq \margin/2+\max_{j\neq y}h[j]}.
\label{ineq:derandom_true_risk}
\end{align}
Similarly we have: 
\begin{align}
\paran{h_0(\vx)[y] - \max_{j\neq y}h_0(\vx)[j]} \leq\paran{h(\vx)[y] - \max_{j\neq y}h(\vx)[j] }+  \gamma/2,
\end{align}
which provides a bound on the empirical risk as follows:
\begin{equation}
        \Lemp_{\gamma/2}(h) \leq 
        \Lemp_{\gamma}(h_0).
\label{ineq:derandom_emp_risk}
\end{equation}
Since inequalities \ref{ineq:derandom_true_risk} and \ref{ineq:derandom_emp_risk} hold for all $h\in\gS_{h_0}$, we get:
\begin{equation}
 \Ltrue (h_0) \leq \Ltrue_{\gamma/2}(\hat{Q})   \quad\text{ and }\quad 
         \Lemp_{\gamma/2}(\hat{Q}) \leq 
        \Lemp_{\gamma}(h_0),
\end{equation}
from which we immediately obtain
the following de-randomized bound:
\begin{equation}
 \Ltrue (h_0) \leq \Lemp_\gamma(h_0) +\sqrt{
    \frac{\min_{P\in\gP}D(\hat{Q}\|P)+\log(\xi(\dimS)|\gP|/\delta)}{2\dimS}
    }.
\label{eq:pac-bayes-derandom}
\end{equation}

Applying the lemma \ref{lem:kl-identity}, we get $\eta \KL{\hat{Q}}{P} \leq \KL{Q}{P}$. Using this inequality, the lemma is obtained from \eqref{eq:pac-bayes-derandom}.

\end{proof}

\subsection{Proof of Theorem \ref{thm:pac-bayesian-equivariant} and  Generalization Bounds for Equivariant Networks}

\begin{proof}

We consider the perturbation bound in Eq.~\ref{eq:perturbation_bound}:
\begin{align}
    \norm{f_{\sW+\sU}-f_{\sW}}_\infty 
    \leq \norm{f_{\sW+\sU}-f_{\sW}}_2 
  & \leq eB\paran{\prod_{i=1}^\diml{\norm{\mW_i}_2}}\sum_{i=1}^\diml \frac{\norm{\mU_i}_2}{\norm{\mW_i}_2}.
\end{align}
By applying the results found in Supplementary~\ref{apx:proof_repr_spectral_tailbound} together with a union bound argument, with probability $1 - \left(\sum_{l=1}^L \sum_\psi m_{l, \psi}\right) e^{-t}$, for all $l$, it holds (for $t>1$):
\begin{equation}
    \norm{\mU_l}_2
    \leq  
    \sigma \sqrt{ \max_\psi \paran{5 m_{l-1, \psi} m_{l, \psi} c_{\psi} t}} \ .
\end{equation}
Therefore, with probability $1 - \left(\sum_{l=1}^L \sum_\psi m_{l, \psi}\right) e^{-t}$, 
\begin{align}
     \norm{f_{\sW+\sU}-f_{\sW}}_\infty & \leq 
          eB\paran{\prod_{i=1}^\diml{\norm{\mW_i}_2}}\sum_{l=1}^\diml \frac{ 
          \sigma \sqrt{ \max_\psi \paran{5 m_{l-1, \psi} m_{l, \psi} c_{\psi} t}}
          }{\norm{\mW_l}_2}.
\end{align}
Choosing $t=\log\paran{
\frac{\sum_{l=1}^L \sum_\psi m_{l, \psi} }{1-\eta}
}$, with probability $\eta$:
\begin{align}
     \norm{f_{\sW+\sU}-f_{\sW}}_\infty & \leq 
          eB\paran{\prod_{i=1}^\diml{\norm{\mW_i}_2}}\sum_{l=1}^\diml \frac{ 
          \sigma \sqrt{ \max_\psi \paran{5 m_{l-1, \psi} m_{l, \psi} c_{\psi} \log\paran{
          \frac{\sum_{l=1}^L \sum_\psi m_{l, \psi}}{ 1-\eta}
          }
          }
          }
          }{\norm{\mW_l}_2}.
\end{align}
To reduce clutter in the next equations, we define
\begin{align}
    M(l, \eta) := \log\paran{
    \frac{\sum_{l=1}^L \sum_\psi m_{l, \psi}}{ 1-\eta}
    } \max_\psi \paran{5 m_{l-1, \psi} m_{l, \psi} c_{\psi}}
\end{align}
such that the previous bound becomes:
\begin{align}
     \norm{f_{\sW+\sU}-f_{\sW}}_\infty & \leq 
          eB\paran{\prod_{i=1}^\diml{\norm{\mW_i}_2}}\sum_{l=1}^\diml \frac{ 
          \sigma \sqrt{M(l, \eta)}}{\norm{\mW_l}_2}.
\end{align}
We can then consider a set $\gP$ of possible values for $\sigma$, such that for any $\sigma_0$ there exists a $\sigma \in \gP$ satisfying
\begin{equation}
    |\sigma - \sigma_0|\leq \kappa\sigma_0
\label{ineq:covering}
\end{equation}
with
\begin{align}
\sigma_0 \defeq \frac{\gamma}{
 4eB(1+\kappa)\paran{\prod_{i=1}^\diml{\norm{\mW_i}_2}}\sum_{i=1}^\diml \frac{ 
          \sqrt{ M(l, \eta)}
          }{\norm{\mW_i}_2}
}.
\end{align}
By choosing the covariance of the distribution $Q$ from this set $\gP$ of values for $\sigma$, we get
\begin{equation}
    Q(\gS_{f_\sW})\geq \eta.
\end{equation}
Again, suppose that such a set $\gP$ of $\sigma$ values can be chosen and that it has cardinality $|\gP|$.
We can then use the PAC-Bayesian KL-bound in Eq.~\ref{eq:pac_bound_kl} and choose a $\sigma$ satisfying the above inequality to obtain:
\begin{align}
 D(Q\|P)&\leq \frac{(1+\kappa)^2
 32 e^2B^2\paran{\prod_{i=1}^\diml{\norm{\mW_i}_2^2}}}{(1-\kappa)^2\gamma^2}
 \paran{
 \sum_{l=1}^\diml \frac{ 
          \sqrt{ M(l, \eta)}
          }{\norm{\mW_l}_2}
 }^2
 \paran{
\sum_{l,\psi, i,j}\norm{\widehat{\mW_l}(\psi, i,j)}_F^2 / \dim_\psi
} \label{eq:gen_bonud_non_homogeneous_repr}
\\
&\leq \frac{(1+\kappa)^2
 32\diml e^2B^2\paran{\prod_{i=1}^\diml{\norm{\mW_i}_2^2}}}{(1-\kappa)^2\gamma^2}
 \paran{
 \sum_{l=1}^\diml \frac{ 
          M(l, \eta)
          }{\norm{\mW_l}^2_2}
 }
 \paran{
\sum_{l,\psi, i,j}\norm{\widehat{\mW_l}(\psi, i,j)}_F^2/ \dim_\psi
}.
\end{align}

We need to construct a set $\gP$ such that for any $\sigma_0$ there exists a $\sigma \in \gP$ satisfying inequality \ref{ineq:covering}.
Using a similar argument to \cite{neyshabur_pac-bayesian_2018}, it can be seen that the bound is non-vacuous if $\prod_{l=1}^\diml{\norm{\mW_l}_2}\in[\gamma/2B, \gamma\sqrt{m}/2B]$. If the product of norms after training is outside the interval, the bound holds trivially, and any $(P,Q)$ can be chosen. Therefore, we cover only this interval.
Details follow below.

In homogeneous networks, we can always rescale the weights such that the margin is not touched, and thereby change all the norms.
The above generalization bound is not invariant to rescaling of weights. Here, we derive a scaling-invariant version of it. 

Define $\beta^\diml = \prod_{i=1}^\diml{\norm{\mW_i}_2}$ and normalize every layer to have norm $\beta$.
Then, $\sigma_0$ can be written as:
\begin{align}
\sigma_0 = \frac{\gamma}{
 4eB(1+\kappa) \beta^{\diml-1}
 \sum_{l=1}^\diml { 
          \sqrt{M(l, \eta)}
          }
}.
\end{align}
To cover the set $\sigma_0$, we require only to cover $\beta$ effectively. 
Again, we consider only $\beta^\diml> \frac{\margin}{2B}$ and  $\beta^{2\diml} < \paran{\frac{\gamma^2m}{4B^2}}$, such that the bound is not vacuous.
Note that the generalization bound in Eq.~\ref{eq:gen_bonud_non_homogeneous_repr} is simplified to (removing $\kappa$ as we cover our set differently):
\begin{align}
    &\frac{
 32 e^2B^2 \beta^{2L} }{\gamma^2}
 \paran{
 \sum_{l=1}^\diml \frac{ 
          \sqrt{M(l, \eta)}
          }{\beta}
 }^2
 \paran{
        \sum_{l} \frac{\beta^2 }{\norm{\mW_l}_2^2} \sum_{\psi, i,j}\norm{\widehat{\mW_l}(\psi,i,j)}_F^2/ \dim_\psi
 }
\\
=&    \frac{
 32 e^2B^2 \beta^{2L} }{\gamma^2}
 \paran{
 \sum_{l=1}^\diml { 
          \sqrt{ M(l, \eta)}
          }
 }^2
 \paran{
        \sum_{l} \frac{ \sum_{\psi, i,j}\norm{\widehat{\mW_l}(\psi,i,j)}_F^2/ \dim_\psi }{\norm{\mW_l}_2^2}
}.
\end{align}

Therefore, if $\beta> (\frac{\margin\sqrt{m}}{2B})^{1/\diml}$ all the remaining terms are bigger than one.
Specifically note that:
\begin{equation}
    \norm{\mW_l}_2^2 \leq \sum_{\psi, i, j} \norm{\widehat{\mW_l}(\psi, i, j)}_2^2 = \sum_{\psi, i, j} \norm{\widehat{\mW_l}(\psi, i, j)}_F^2 / \dim_\psi
\end{equation}
where we first used the fact that the spectral norm of a block matrix is upperbounded by the sum of the spectral norm of each block and then we used Eq.~\ref{eq:spectral_norm_intertwiner} to express the spectral norm of each block.

Therefore we need only to cover $\beta$ within the interval  $(\frac{\margin}{2B})^{1/\diml}\leq\beta\leq (\frac{\margin\sqrt{m}}{2B})^{1/\diml}$ with radius $1/d(\frac{\margin}{2B})^{1/\diml}$.

This choice guarantees that there is a $\tilde{\beta}$ such that $\norm{\beta-\tilde{\beta}}\leq \beta/d$. From which, it can be concluded that $1/e\beta^{L-1}\leq\hat{\beta}^{\diml-1}\leq e\beta^{L-1}$. Using a cover of size $Lm^{1/2L}$ provides the desired cover.  
\end{proof}

\section{Derivations for Group Convolutional Networks}

In this section, we provide a special case of Theorem \ref{thm:pac-bayesian-equivariant} for group convolutional networks.

In a group convolution architecture, we assume the features at the $l$-th layer are $C_l = \chan{l} |H|$-dimensional vectors representing a $\chan{l}$-channels signal over $H$.
This means $\rho_l$ contains $\chan{l}$ copies of the regular representation.
The linear layer consists of a matrix $\mW_l$ with a block structure as in Eq.~\ref{eq:equivariant_layer}, where the block $\mW_{l, (i, j)}$ is an $H$-circulant matrix mapping the $i$-th input channel to the $j$-th output channel via group convolution with a filter $\vw_{l, i, j} \in \R^{|H|}$.
The parametrization of this block is in term of the Fourier transform of $\vw_{l, i, j}$, i.e. the coefficient in the matrices $\widehat{\vw_{l, i, j}}(\psi)$, indexed by the irreps $\psi \in \hat{H}$.
Since we use real irreps, we only need the first $\dim_\psi / c_\psi$ columns of $\widehat{\vw_{l, i, j}}(\psi)$, so we assume it is a $\dim_\psi \times \frac{\dim_\psi}{c_\psi}$ matrix.
Similarly, the Fourier transform of a single channel $\vx_{l, i} \in \R^{|H|}$ only contains the first $\dim_\psi / c_\psi$ columns of $\widehat{x}_{l,i}(\psi)$.
In the notation used before, this means that $\psi$ is contained in each channel $\dim_\psi / c_\psi$ times and, therefore, 
for each (intermediate) layer $l$ and irrep $\psi$, it holds that:
\[
m_{l, \psi} = c_{l} \dim_\psi / c_\psi.
\]
Note also that, for each block $(i, j)$, the matrix $\widehat{\vw_{l, i, j}}(\psi) \in \R^{\dim_\psi \times \frac{\dim_\psi}{c_\psi}}$ contains the $c_\psi$ coefficients in $\widehat{\vw_{l, \cdot, \cdot}}(\psi)$ associated to each of the $\frac{\dim_\psi}{c_\psi} \times \frac{\dim_\psi}{c_\psi}$ pairs of input and output occurrences of $\psi$ in $\vx_{l-1, i}$ and $\vx_{l, j}$.

Assuming a group-convolutional architecture, we can use the identity $m_{l, \psi} = c_l {\frac{\dim_\psi}{c_\psi}}$.  Define:
\begin{align}
    D_H := \max_\psi{
    \frac{\dim_\psi^2}{c_{\psi}}
    }\\
    E_H := \sum_\psi \frac{\dim_\psi}{c_\psi}.
\end{align}
In case of a group convolution architecture, we get: 
\begin{align}
    M(l, \eta) = \log\paran{
    \frac{E_H \sum_{l=1}^L \chan{l}}{1-\eta}
    } 5 \chan{l}\chan{l-1} D_H
\end{align}
and the bound in Theorem \ref{thm:pac-bayesian-equivariant} becomes:
\begin{align*}
&\Ltrue_\Pdata(f_\sW) 
\leq \Lmarginemp(f_\sW) + \\
&\sqrt{
 \frac{
 32 e^4B^2 \beta^{2} }{\gamma^2 m\eta}
 \paran{
 \sum_{l=1}^\diml { 
          \sqrt{5 \chan{l}\chan{l-1}}
          }
 }^2
 D_H
 \log\paran{
 \frac{E_H \sum_{l=1}^L \chan{l}}{1-\eta}
 }
 \paran{
\sum_{l}\frac {\sum_{\psi,i,j}\norm{\widehat{\vw_{l, j, i}}(\psi)}_F^2}{\norm{\mW_l}_2^2}
}+\frac{\log(\xi(m)\frac{L\dimS^{1+1/2L}}{\delta})
}
{2{\gamma^2}\dimS}
}
\end{align*}
By choosing $\eta = 1/2$ and by defining
$Q_H = \paran{\sum_{l=1}^\diml \sqrt{c_{l-1}c_l}}^2  D_H \log 2 E_H \sum_{l=1}^{\diml}c_{l-1}$,
this bound corresponds to Theorem~\ref{thm:pac-bayesian-group-equivariant}.

\begin{theorem}[Generalization Bounds for Group Convolutional Networks] 
\label{thm:pac-bayesian-group-equivariant}
For any group convolutional network, with high probability, we have:
\begin{align*}
&\Ltrue_\Pdata(f_\sW)\leq \Lmarginemp(f_\sW)+
\bigOapprox{
        \sqrt{
            \frac{
                \paran{\prod_{l=1}^\diml{\norm{\mW_l}_2^2}}
                Q_H
                \paran{
                    \sum_l^L \frac{\sum_{\psi,i,j}\norm{\widehat{\vw_{l,i,j}}(\psi)}_F^2}{\norm{\mW_l}^2_2}
                }
            }
            {{\gamma^2}\dimS}
        }
        } 
\end{align*}
with $Q_H = \paran{\sum_{l=1}^\diml \sqrt{c_{l-1}c_l}}^2  D_H \log 2 E_H \sum_{l=1}^{\diml}c_{l-1}$. 
\end{theorem}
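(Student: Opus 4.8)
The plan is to derive this statement as a direct specialisation of Theorem~\ref{thm:pac-bayesian-equivariant}: a group-convolutional network is a particular equivariant MLP in which every intermediate layer carries $c_l$ copies of the regular representation of $H$, so it suffices to substitute the corresponding irrep data into the general bound and then repackage the architectural factors into $Q_H$, pushing everything else into $\bigOapprox{\cdot}$. The first step is to invoke the real-irrep decomposition of the regular representation recalled in Supplementary~\ref{apx:group_theory}--\ref{apx:real_schur}: a single real channel over $H$ contains each real irrep $\psi$ exactly $\dim_\psi/c_\psi$ times, so a layer with $c_l$ channels has multiplicities $m_{l,\psi} = c_l\,\dim_\psi/c_\psi$ for all $\psi\in\hat H$. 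This is exact for the intermediate layers; the input layer and the invariant output layer carry other representations but are already covered by the general form of $m_{l,\psi}$ in Theorem~\ref{thm:pac-bayesian-equivariant} and contribute only lower-order terms.

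Plugging this identity into the factor $M(l,\eta)$, the combinatorial part becomes
\[
\max_\psi\bigl(5\, m_{l-1,\psi}\, m_{l,\psi}\, c_\psi\bigr)
  = 5\, c_{l-1}\,c_l\,\max_\psi\tfrac{\dim_\psi^2}{c_\psi}
  = 5\, c_{l-1}\,c_l\,D_H ,
\]
while the logarithmic part uses $\sum_{l}\sum_\psi m_{l,\psi} = \bigl(\sum_l c_l\bigr)\sum_\psi \dim_\psi/c_\psi = E_H\sum_l c_l$, so that $M(l,\eta) = 5\,c_{l-1}c_l\,D_H\log\!\bigl(E_H\sum_l c_l/(1-\eta)\bigr)$ and hence
\[
\Bigl(\sum_{l=1}^{\diml}\sqrt{M(l,\eta)}\Bigr)^{2}
  = 5\,D_H\,\log\!\Bigl(\tfrac{E_H\sum_l c_l}{1-\eta}\Bigr)\,\Bigl(\sum_{l=1}^{\diml}\sqrt{c_{l-1}c_l}\Bigr)^{2}.
\]
Fixing $\eta = 1/2$ collapses this to $5\,D_H\log\!\bigl(2E_H\sum_l c_l\bigr)\bigl(\sum_l\sqrt{c_{l-1}c_l}\bigr)^{2}$, which up to the harmless constant $5$ and the difference between $\sum_l c_l$ and $\sum_l c_{l-1}$ inside the logarithm is exactly $Q_H$; both differences are absorbed by $\bigOapprox{\cdot}$.

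The remaining ingredient is to rewrite the weight-norm factor $\sum_{\psi,i,j}\norm{\widehat{\mW_l}(\psi,i,j)}_F^2/\dim_\psi$ in terms of the convolution filters. Here I would use the convolution theorem $\widehat{w\gconv x}(\psi) = \hat{x}(\psi)\hat{w}(\psi)^T$: the irrep block $\widehat{\mW_l}(\psi,i,j)$ linking two occurrences of $\psi$ inside channels $i$ and $j$ is the intertwiner assembled from the $c_\psi$ Fourier coefficients of the filter $\vw_{l,i,j}$ associated to that pair of occurrences, and the block identity $\norm{\mW}_F^2/\dim_\psi = \sum_k|w_k|^2$ of Supplementary~\ref{apx:real_schur} then lets one sum over the $\dim_\psi/c_\psi$ occurrences per channel to obtain $\sum_{\psi,i,j}\norm{\widehat{\mW_l}(\psi,i,j)}_F^2/\dim_\psi = \sum_{\psi,i,j}\norm{\widehat{\vw_{l,i,j}}(\psi)}_F^2$, with the right-hand indices now ranging over channels $i\in[c_{l-1}]$, $j\in[c_l]$. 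Substituting all of the above into Theorem~\ref{thm:pac-bayesian-equivariant}, and absorbing the numerical constant, the $\log\xi(m)$ term, and the additive covering term $\log(\xi(m)Lm^{1+1/2L}/\delta)/(2\gamma^2 m)$ into $\bigOapprox{\cdot}$, yields the claimed bound with $Q_H = \bigl(\sum_l\sqrt{c_{l-1}c_l}\bigr)^2 D_H\log\!\bigl(2E_H\sum_l c_{l-1}\bigr)$.

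I expect the only non-mechanical step to be this last weight-norm identification: one has to keep careful track of how the (unnormalised) Fourier coefficients of a convolution filter distribute across the $\dim_\psi/c_\psi$ copies of each real irrep in a channel and of how the three intertwiner types $c_\psi\in\{1,2,4\}$ enter, so that the block-wise Frobenius norms of the irrep parametrisation aggregate exactly into $\norm{\widehat{\vw_{l,i,j}}(\psi)}_F^2$. Everything else is substitution of the multiplicity identity and routine constant-chasing.
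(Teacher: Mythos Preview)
Your proposal is correct and follows essentially the same approach as the paper: specialise Theorem~\ref{thm:pac-bayesian-equivariant} by substituting the multiplicity identity $m_{l,\psi}=c_l\dim_\psi/c_\psi$ for regular representations, simplify $M(l,\eta)$ in terms of $D_H$ and $E_H$, set $\eta=1/2$, and repackage the architectural constants into $Q_H$. You are in fact more explicit than the paper about the weight-norm identification $\sum_{\psi,i,j}\norm{\widehat{\mW_l}(\psi,i,j)}_F^2/\dim_\psi = \sum_{\psi,i,j}\norm{\widehat{\vw_{l,i,j}}(\psi)}_F^2$, which the paper states without elaboration.
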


First, note that the constant factor $D_H$ is at most $2$ for commutative groups (all subgroups of $\SO2$) and at most $4$ for other subgroups of $\O2$. In practical networks, the number channels are inversely scaled with the group size. This means that the $c_l|H|$ are kept constant for different values of $|H|$. In terms of generalization error, this implies an improvement of order $1/{|H|}$ not withstanding the impact of group size on the ratio of Frobenius and Spectral norms.

It is worth mentioning an intermediate result for proving the above theorem. 
We can adapt the tail bound of \eqref{eq:general_derandom_spectral_bound} to perturbations for group convolutional networks:
\begin{align*}
    \norm{\mU_l}_2 
        &< \sigma \sqrt{ 5 c_{l-1} c_{l} \paran{\max_\psi{
        \frac{\dim_\psi^2}{c_{\psi}}
        }} \log \left(2 \sum_l \chan{l} \sum_\psi \frac{\dim_\psi}{c_\psi}\right) }.
\end{align*}
which is equal to Eq.~\ref{eq:general_derandom_spectral_bound} when defining $D_H = \max_\psi{
\frac{\dim_\psi^2}{c_{\psi}}
}$ and $E_H = \sum_\psi \frac{\dim_\psi}{c_\psi}$. The bound can be further simplified by noting that $|H| = \sum_\psi \dim_{\psi}^2/c_\psi \geq \sum_\psi \dim_{\psi}/c_\psi = E_H$:
\begin{align*}
    \norm{\mU_l}_2 
        &< \sigma \sqrt{ 5 c_{l-1} c_{l} D_H \log \left(2|H| \sum_l c_l\right)}.
\end{align*}
This tail bound can be of independent interest.

\section{Comparison to Neyshabur et al. [2018]
}
\label{apx:comparison_neyshabur}
 
In this section, we try to directly adapt the proof of \cite{neyshabur_pac-bayesian_2018} to show the benefit of working in space of irreps.

First, note that the Forbenius norm of kernels is an upper-bound on the norm of parameters. To see this, similar to \cite{neyshabur_pac-bayesian_2018}, define 
$h = \max_l C_l$, where $C_l = c_l |H|$ is the effective number of channels,
as the upper bound on the number of channels, and similarly
$h_{|H|} = \max_l c_l = h / |H|$. In the hidden layers and the input layer, 
\[
    \sum_\psi \norm{\widehat{w_{l, i, j}}(\psi)}_F^2 \leq \sum_\psi \dim_\psi \norm{\widehat{w_{l, i, j}}(\psi)}_F^2 = |H| \norm{\vw_{l, i, j}}_2^2 = \norm{\mW_{l, (j, i)}}_F^2
\]
and, therefore, $\sum_{i,j, \psi} \norm{\widehat{w_{l,i,j}}(\psi)}_F^2 \leq \norm{\mW_l}_F^2$.
By also upperbounding $c_l = C_l / |H| \leq h / |H|$ and ignoring the constant factors, the result in Theorem~\ref{thm:pac-bayesian-group-equivariant} becomes
\begin{align*}
& \Ltrue_\Pdata(f_\sW) \leq \Lmarginemp(f_\sW) +
\bigOapprox{
        \frac{1}{|H|}
        \sqrt{
        \frac{
            \paran{\prod_{l=1}^\diml{\norm{\mW_l}_2^2}}
            \diml^2 h^2 D_H \log (2L h \frac{E_H}{|H|})
            \paran{
                \sum_l^L \frac{\norm{\mW_{l}}_F^2}{\norm{\mW_l}^2_2}
            }
        }
        {{\gamma^2}\dimS}
    }
}
\end{align*}
which can be further simplified to
\begin{align*}
& \Ltrue_\Pdata(f_\sW) \leq \Lmarginemp(f_\sW) +
\bigOapprox{
        \frac{1}{|H|}
        \sqrt{
        \frac{
            \paran{\prod_{l=1}^\diml{\norm{\mW_l}_2^2}}
            \diml^2 h^2 \log (2L h)
            \paran{
                \sum_l^L \frac{\norm{\mW_{l}}_F^2}{\norm{\mW_l}^2_2}
            }
        }
        {{\gamma^2}\dimS}
    }
}
\end{align*}
by assuming $D_H$ is approximately constant and by noting $E_H \leq |H|$. 

This bound looks very similar to the one proposed in \cite{neyshabur_pac-bayesian_2018}, with the difference that it contains 1) $h^2$ instead of $h$ and 2) an additional term $\frac{1}{|H|}$.
The additional $h$ factor is constant in our experiments as we always consider fixed architectures when changing the equivariance group $H$, so we ignore it in this discussion.
Instead, we want to drive the attention to the $\frac{1}{|H|}$ term.

The presence of this $\frac{1}{|H|}$ in our bound implies that, when changing the equivariance group $H$ for a fixed architecture, the bound from \cite{neyshabur_pac-bayesian_2018} scales at least  $|H|$ times worse than ours.
In the experiments, we empirically observe that our bound is approximately scaling like $\frac{1}{\sqrt{|H|}}$. This can be explained by the fact that with larger group size, the ratio of  Frobenious norm and spectral norm increases similar to $O(|H|)$, thereby increasing the generalization error by $O(\sqrt{|H|})$. This leads to an overall scaling of $O(1/\sqrt{|H|})$ observed by our experiments too. On the other hand, it follows that the bound from \cite{neyshabur_pac-bayesian_2018} scales like $\sqrt{H}$, which is undesirable.

However, it is still possible to adapt the method in \cite{neyshabur_pac-bayesian_2018} by considering, as done in this work, perturbations only in the equivariant subspace of the weights.
The main difference with respect to our method is how the spectral norm of the perturbation matrices $\norm{\mU_l}_2$ is bounded.
In the rest of this section, we follow this strategy to derive an alternative generalization bound.

The strategy used in \cite{neyshabur_pac-bayesian_2018} can be adapted to any parametrization of the linear layers.
In particular, assume $\chan{l-1}$ and $\chan{l}$ input and output channels and $\mW_l = \sum_k w_{l, k} \mB_{l, k}$ for some set of matrices $\{\mB_{l, k} \in \R^{\chan{l} \times \chan{l-1}}\}$ forming a basis for the linear layer $\mW_l$.
\begin{theorem}[\cite{tropp2012user}]
Assume $\{u_{l, k}\}_k$ are i.i.d. random variables with a standard gaussian distribution $\N(0, 1)$.
Define the variance parameter 
\[
    v_l^2 = \max\left\{\norm{\sum_k \mB_{l,k}^* \mB_{l, k}}_2 , \ \ \norm{\sum_k \mB_{l, k} \mB_{l, k}^*}_2\right\} \ .
\]
Then, for any $t \geq 0$:
\[
    \P\paran{\norm{\sum_k u_{l, k} \mB_{l, k}}_2 \geq t} \leq (\chan{l} + \chan{l-1}) e^{-\frac{t^2}{2v^2}}
\]
\label{thm:tail_bound_spectral_gaussian_matrices}
\end{theorem}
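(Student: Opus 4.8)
The plan is to recognize this as Tropp's tail bound for a Gaussian series of fixed matrices \cite{tropp2012user} and to reproduce the argument via the \emph{matrix Laplace transform} method, after reducing to the self-adjoint case. First I would pass from the (possibly rectangular) matrices $\mB_{l,k} \in \sR^{\chan{l}\times\chan{l-1}}$ to their Hermitian dilations $\Phi(\mB_{l,k}) := \left(\begin{smallmatrix} 0 & \mB_{l,k} \\ \mB_{l,k}^* & 0 \end{smallmatrix}\right)$, which are symmetric of size $d := \chan{l}+\chan{l-1}$. Two standard identities do the bookkeeping: $\lambda_{\max}(\Phi(\mM)) = \norm{\mM}_2$ for any $\mM$, and $\Phi(\mM)^2 = \mM\mM^* \oplus \mM^*\mM$. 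Since $\Phi$ is linear, $\norm{\sum_k u_{l,k}\mB_{l,k}}_2 = \lambda_{\max}(\mX)$ where $\mX := \sum_k u_{l,k}\Phi(\mB_{l,k})$, and $\norm{\sum_k \Phi(\mB_{l,k})^2}_2 = \max\{\norm{\sum_k \mB_{l,k}\mB_{l,k}^*}_2,\ \norm{\sum_k \mB_{l,k}^*\mB_{l,k}}_2\} = v_l^2$. So the claim reduces to a tail bound on $\lambda_{\max}$ of a Gaussian series $\sum_k u_{l,k}\mA_k$ of symmetric matrices with $\norm{\sum_k \mA_k^2}_2 = v_l^2$ in dimension $d$.

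Next I would invoke the matrix Chernoff bound $\P(\lambda_{\max}(\mX)\geq t) \leq \inf_{\theta>0} e^{-\theta t}\,\E\,\Tr\exp(\theta\mX)$ and control the matrix moment generating function. For a fixed symmetric $\mA$ and $u\sim\N(0,1)$, diagonalizing $\mA$ reduces the computation to the scalar Gaussian MGF and gives $\E\exp(\theta u\mA) = \exp(\tfrac{\theta^2}{2}\mA^2)$; thus the matrix cumulant of the $k$-th summand is $\tfrac{\theta^2}{2}\mA_k^2$. By the subadditivity of matrix cumulant generating functions — Tropp's master inequality, itself a consequence of Lieb's concavity theorem — $\E\,\Tr\exp\paran{\sum_k \theta u_{l,k}\mA_k} \leq \Tr\exp\paran{\tfrac{\theta^2}{2}\sum_k\mA_k^2}$. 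Bounding $\Tr\exp(\cdot) \leq d\exp\paran{\lambda_{\max}(\cdot)} = d\exp\paran{\tfrac{\theta^2}{2}v_l^2}$ and optimizing over $\theta$ (take $\theta = t/v_l^2$) yields $\P(\lambda_{\max}(\mX)\geq t) \leq d\,e^{-t^2/(2v_l^2)}$; substituting $d = \chan{l}+\chan{l-1}$ gives the stated bound.

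The only genuinely deep ingredient is the subadditivity step, which rests on Lieb's theorem that $A\mapsto\Tr\exp(H+\log A)$ is concave on the positive-definite cone; everything else — the dilation identities, the scalar Gaussian MGF computation, and the final scalar optimization in $\theta$ — is routine. Since the theorem is quoted essentially verbatim from \cite{tropp2012user}, in the writeup I would simply cite that reference for the subadditivity inequality rather than reprove Lieb's theorem, and present explicitly only the Hermitian-dilation reduction (which is what identifies the variance proxy with $v_l^2$ and the ambient dimension with $\chan{l}+\chan{l-1}$) together with the closing optimization.
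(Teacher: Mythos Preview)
Your proof sketch is correct and faithfully reproduces Tropp's argument: the Hermitian dilation reduces the rectangular case to a self-adjoint Gaussian series, the per-summand MGF is $\exp(\tfrac{\theta^2}{2}\mA_k^2)$, Lieb-based subadditivity of matrix cumulants gives the master bound, and the trace--$\lambda_{\max}$ inequality plus scalar optimization in $\theta$ finishes it. There is nothing to fault in the mathematics.

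However, note that the paper does \emph{not} prove this statement at all: it is quoted verbatim as a black-box result from \cite{tropp2012user} and immediately applied. So there is no ``paper's own proof'' to compare against --- your writeup goes strictly beyond what the paper does. If you want to match the paper's treatment, a one-line citation suffices; if you want to include the argument, your plan (present the dilation reduction explicitly, cite Tropp for the subadditivity step) is exactly right.
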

If perturbation is done on the weights $\{w_{l, k}\}_k$ with i.i.d. $\{u_{l, k} \sim \N(0, \sigma^2)\}_k$ and $h= \max{\chan{l}, \chan{l-1}}$, this result can be used to bound the spectral norm of the perturbation matrix as
\[
    \norm{\mU_l}_2 \leq \sigma \sqrt{2 v_l^2 \sigma^2 \log (4h)}
\]
Using an union bound over all layers $l \in [L]$, one gets:
\[
    \norm{\mU_l}_2 \leq \sqrt{2 v^2 \sigma^2 \log (4hL)}
\]
where $v^2 = \max_l v_l^2$.

In particular, in \cite{neyshabur_pac-bayesian_2018}, the parametrization $\mW_l = \sum_{i, j} w_{l, i, j} E_{i,j}$ was considered, where $w_{l, i, j}$ is the entry of $\mW_l$ at row $j$ and column $i$ while $E_{i,j}$ is a matrix of the same shape containing $0$ everywhere but $1$ at position $(i, j)$.
In that case, $v^2 = h$.

We can express the basis considered in this work as
\[\mW_l = \sum_{\psi, i, j, k} \widehat{\vw_{l, i, j}}(\psi)_k \mB_{l, \psi, i, j, k} \ , \]
where $\widehat{\vw_{l, i, j}}(\psi)_k$ is the $k$-th entry of the vector $\widehat{\vw_{l, i, j}}(\psi)$ containing the $c_\psi$ learnable coefficients which parametrize the block $\widehat{\mW_{l}}(\psi, i, j)$.
Indeed, recall that a block $\widehat{\mW_{l}}(\psi, i, j)$ always has one of the three forms described in Supplementary~\ref{apx:real_schur} and can be written as $\widehat{\mW_{l}}(\psi, i, j) = \sum_{k=1}^{c_\psi} \widehat{\vw_{l}}(\psi, i, j)_k B_{\psi, k}$.
The matrix $\mB_{l, \psi, i, j, k}$ is simply a zero matrix containing the matrix $B_{\psi, k}$ at the same position of the corresponding block $\widehat{\mW_{l}(\psi, i, j)}$.
By using the structure of the matrices $\mB_{l, \psi, i, j, k}$ and the orthonormality of all matrices $B_{\psi, k}$, one can show that in this case $v^2_l = \max_{\psi} c_\psi \max_l m_{l, \psi}$.
When using a group convolution architecture, because $m_{l, \psi} = \chan{l} \frac{\dim_\psi}{c_\psi}$, this becomes $v^2_l = \max_{\psi} \dim_\psi h_{|H|} = \max_\psi \dim_\psi h / |H|$.
We can use this result in a similar manner to obtain a bound on the generalization error:
Assuming again that $\max_\psi \dim_\psi$ is a constant factor, it follows that the bound on the spectral norm in our parametrization is $|H|$ times tighter than the one obtained in \cite{neyshabur_pac-bayesian_2018}.
\begin{align}
& \Ltrue_\Pdata(f_\sW) \leq \Lmarginemp(f_\sW) + 
\bigOapprox{
        \frac{1}{\sqrt{|H|}}
        \sqrt{
        \frac{
            \paran{\max_\psi \dim_\psi} \diml^2 h \log (2L h)
            \paran{\prod_{l=1}^\diml{\norm{\mW_l}_2^2}}
            \paran{
                \sum_l^L \frac{\norm{\mW_{l}}_F^2}{\norm{\mW_l}^2_2}
            }
        }
        {{\gamma^2}\dimS}
    }
\label{eq:pac-bayesian-alternative}
}
\end{align}
This bound explicitly shows the scaling of the generalization error as $\frac{1}{\sqrt{|H|}}$ which we also verified experimentally. A similar scaling was reported in \cite{sokolic_generalization_2017}, however it was mainly based on the assumption that the transformations change the input space in a considerable way. 
However, in our experiments we observe the bound in Theorem~\ref{thm:pac-bayesian-equivariant} empirically scales like $\frac{1}{\sqrt{|H|}}$.
Because the bound in Eq.~\ref{eq:pac-bayesian-alternative} is at least $\frac{1}{\sqrt{|H|}}$ times worst, we do not expect it to  correlate significantly with the equivariance group $H$.
We also verify this hypothesis empirically in our experiments.
See Fig.~\ref{fig:hom_o2} and~\ref{fig:hom_so2} for a comparison of the two bounds on different continuous synthetic datasets.

A similar strategy could also be applied by considering bases for group convolution layers directly in terms of the filters parametrizing each $H$-circulant matrix (i.e. by performing the perturbation analysis in the group domain).
More precisely, one could consider a basis 
\[\mW_l = \sum_{i, j, k} \vw_{l, i, j, k} \mC_{l, i, j, k} \ , \]
where $\vw_{l, i, j} \in \R^{|H|}$ and $\vw_{l, i, j, k}$ is its $k$-th entry.
$\mC_{l, i, j, k}$ is then a zero matrix containing the a base circulant matrix $C_{H, k}$ in the block $(i, j)$.
The matrix $C_{H, k}$ is an $H$-circulant matrix generated by the vector $\ve_k = (0, \dots, 0, 1, 0, \dots 0)^T \in \R^{|H|}$ (with $1$ in the $k$-th entry).
For instance, for $H= \C3$
\[
\{C_{\C3, k}\}_{k=1}^3 = \left\{ %
    \begin{bmatrix} 1 & 0 & 0 \\ 0 & 1 & 0 \\ 0 & 0 & 1\end{bmatrix}, %
    \begin{bmatrix} 0 & 1 & 0 \\ 0 & 0 & 1 \\ 1 & 0 & 0\end{bmatrix}, %
    \begin{bmatrix} 0 & 0 & 1 \\ 1 & 0 & 0 \\ 0 & 1 & 0\end{bmatrix} %
\right\} \ .
\]
In this parametrization, $v^2_l = |H| \max\{\chan{l}, \chan{l-1}\}$ and, therefore, $v^2 = h$ as in \cite{neyshabur_pac-bayesian_2018}.
Indeed, this leads exactly to their same bound, which we have already discussed at the beginning of this section.

\section{Experiments}
\label{apx:experiments}

We experiment on synthetic datasets characterized by discrete or continuous symmetries as well as with transformed \href{https://sites.google.com/a/lisa.iro.umontreal.ca/public_static_twiki/variations-on-the-mnist-digits}{MNIST~12K} and CIFAR10 datasets. 

We choose a \textit{discrete} subgroup $H < G$ and build a shallow MLP classifier equivariant to $H$.
In particular, denoting as $\rho_0$ the representation of $G$ acting on the samples in a $G$ symmetric dataset, the input representation of the MLP is chosen to be $\Res{H}{G}\rho_0$ while in the hidden layers we use multiple copies of the regular representation $\rho_\text{reg}^H$ of $H$, making the linear layers effectively group convolutions over $H$.
For different choices of $H$, we preserve the size of each layer, which means that the number of parameters of the model is proportional to $\frac{1}{|H|}$.
Note that a larger group $H$ results in a higher level of symmetry for the network but also in a loss in capacity, as less channels are allocated per each group element.

We train each model on a fixed training set until the model correctly classifies $99\%$ of it with a margin greater than $\gamma$.
We use $\gamma = 10$ in the synthetic datasets and $\gamma = 2$ on the image datasets.
In the synthetic datasets, we test the models on a testset consisting of $10K$ fixed samples from the full distribution $\P_G$ in order to measure the generalization on the whole symmetry group $G$.

The MLP used in the experiments on the synthetic datasets consists of 3 linear layers, alternated with ReLU non-linearities, and has $2048$ and $512$ channels in the intermediate features.
In the MNIST experiments, we use 4 linear layers with $1024$, $512$ and $512$ channels in the intermediate features.
Finally, in the CIFAR experiments we use 4 layers with $2048$, $1536$ and $512$ channels.
The first linear layer of the last two models uses steerable filters from \cite{Weiler2019} to process the input images.
The models are trained with \textit{Adam} without any weight decay or additional regularization.
Neither batch normalization nor dropout are used in the models.

To handle input images, we use an $E(2)$-convolution layer as described in \cite{Weiler2019}: with a filter as large as the input image, the output produced is a single vector which is only $H$ equivariant.
The implementation of the kernel constraint from \cite{Weiler2019} automatically performs the necessary band-limiting and implicitly decomposes the image as a signal over the group $H$ considered.

\subsection{Synthetic Datasets}
\label{sec:synthetic_data}

\paragraph{Continuous Symmetries}
For $G=\SO2$, the data distribution is defined over a $D$-dimensional torus $\gT^D$ embedded in a $2D$ dimensional Euclidean space.
The action of an element $r_\theta \in \SO2$ simultaneously rotates the $D$ circles in $\gT^D$, potentially at different integer frequencies in each of them.
Note that each circle is isomorphic to $\SO2$.
We generate one point in the first circle and two points in all the others.
Then, we generate $2^{D-1}$ representative points $\mX$ by tacking every combination of points from each circle.
We assign a random label in $\{-1, 1\}$ to each of them.
The quotient distribution $\Strain_0$ is defined by sampling a random representative in $\mX$ and adding some noise.
In practice, we add Gaussian noise on each of the $D$ circles and, then, project each of them on the unit circle.
This is the training augmentation $\P_S$.
We finally add additional small Gaussian noise.
Fig.~\ref{fig:tori} shows a projection of the dataset for $D=2$.
In our experiments we use $D=6$.
\begin{figure}[th!]
\centering
\subfloat[$\SO2$ rotates both circles with frequency $1$]{
    \includegraphics[trim={4cm 4cm 4cm 4cm},clip,width=0.4\linewidth]{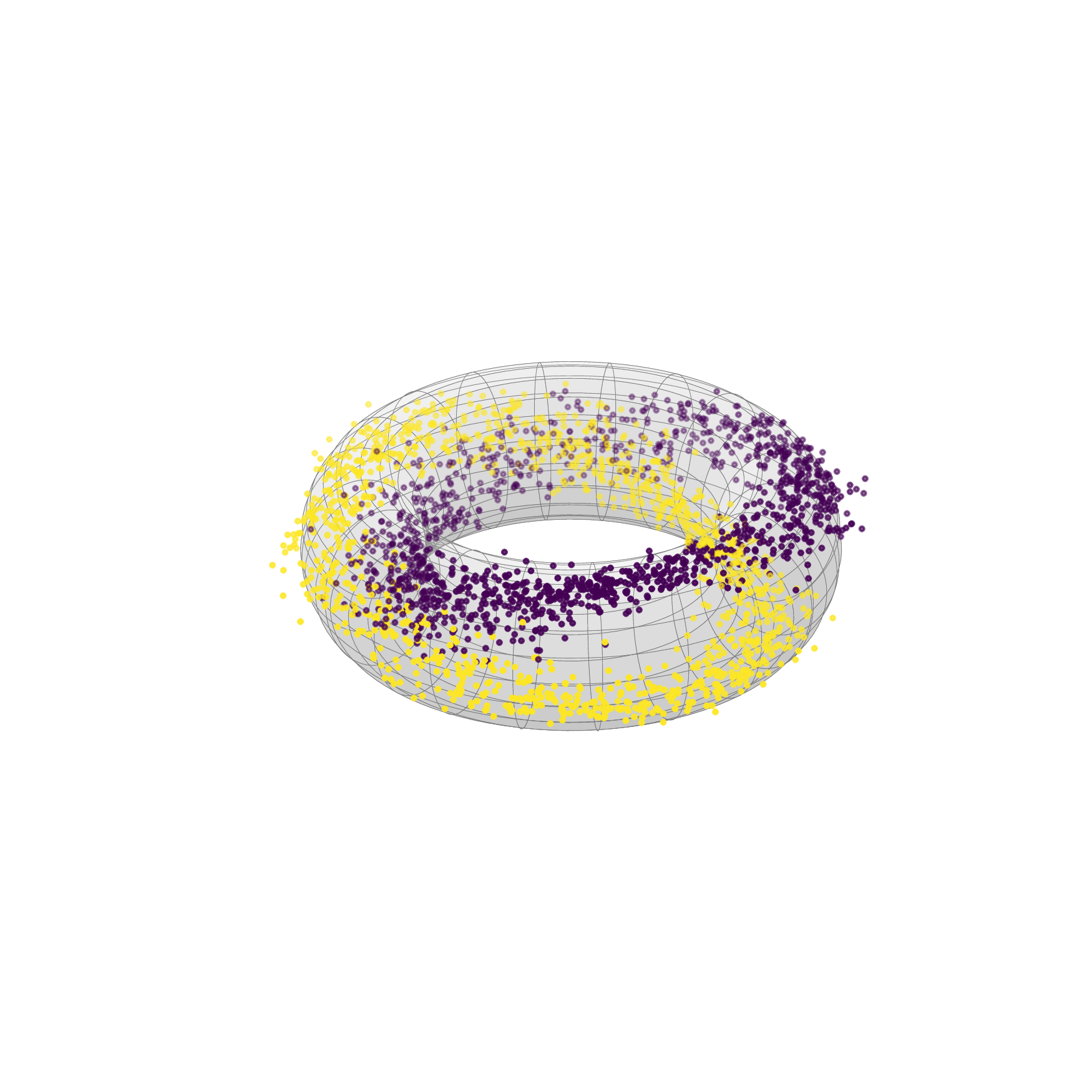}
}\hfill\subfloat[$\SO2$ rotates the largest circle (horizontal) with frequency $1$ and the smallest (vertical) with frequency $2$.]{
    \includegraphics[trim={4cm 4cm 4cm 4cm},clip,width=0.4\linewidth]{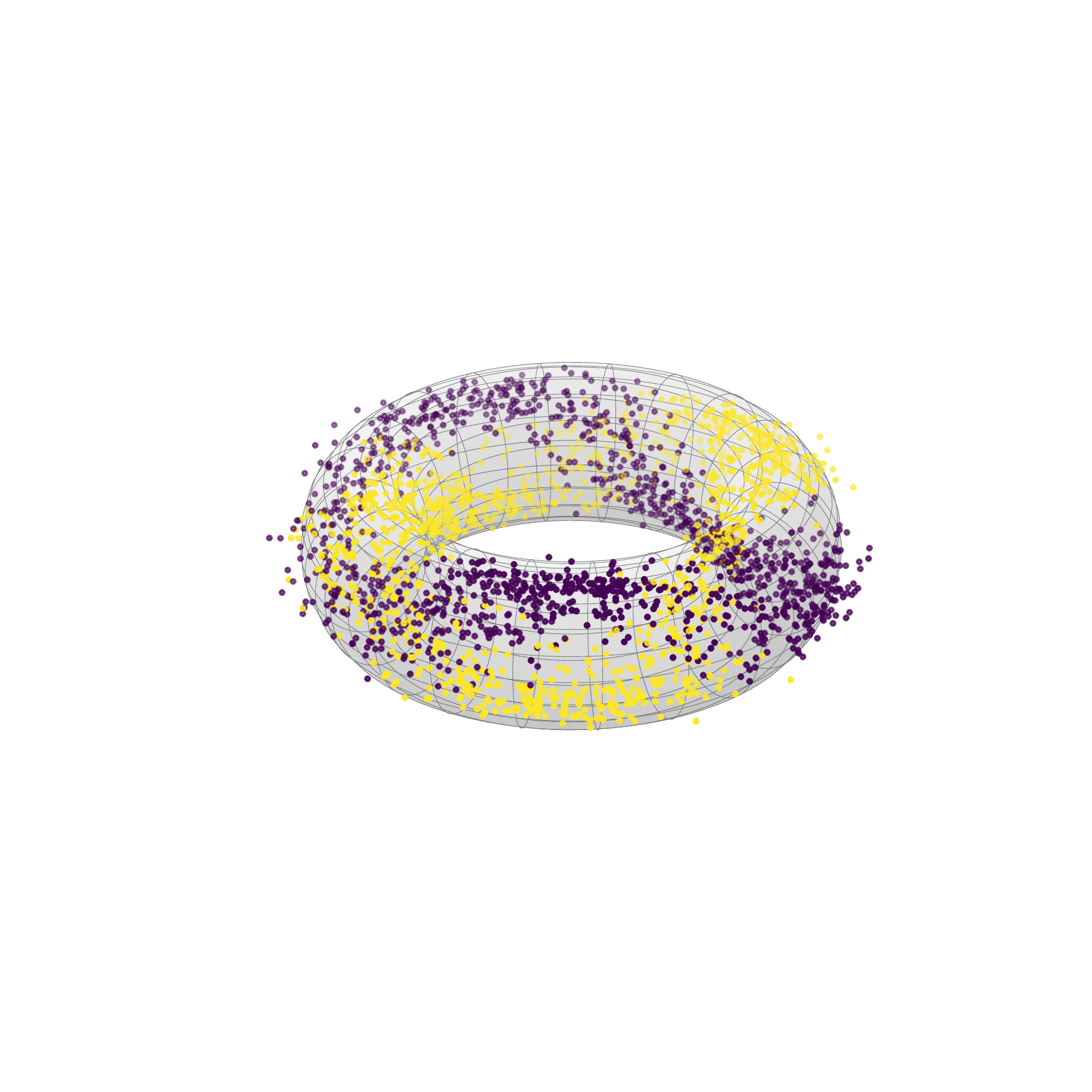}
}
\caption{$3D$ projection of $\SO2$ synthetic datasets~for~$D=2$.}
\label{fig:tori}
\end{figure}
We also build a similar dataset for $\O2$.
Here, each circle is replaced by a pair of circles such that the action of the reflection moves the points from one to the other.
Each pair of circles is isomorphic to the group $\O2$ itself.
We fix two points per pair such that we still have $2^{D-1}$ representative points but embedded in a $4D$-dimensional space.

\paragraph{Discrete Symmetries}
We consider a group $G$ of $N$ discrete rotations and, possibly, reflections.
We build a dataset as described for $\O2$ using $D=N$ pairs, each associated with a different frequency from $1$ to $F=\floor{N/2}$.
We associate all the $2^{F-1}$ representative points in $\mX$ with the label $+1$.
If we do not require symmetry to reflections (i.e., $G = \CN$), we generate $2^{F-1}$ new representative points by rotating those in $\mX$ by $\pi/N$ and associating them with the label $-1$.
If symmetry to reflections is necessary ($G = \DN$), we generate $3 \cdot 2^{F-1}$ new representative points by i) rotating $\mX$ by $\pi/N$, ii) by mirroring it or iii) by doing both.
In the first two cases, we associate the points with the label $-1$, in the last with $+1$.
Note that a model equivariant to $D_{2N}$ or $C_{2N}$ will be invariant to (all or part of) the transformations we used to generate the labels and, therefore, will not be able to distinguish the two classes.

\subsection{Dependency on the group size $|H|$}
\label{apx:experiments:ge_group_size}
\begin{figure}[ht]
\centering
\subfloat[$G = \SO2$, $H = C_N$]{
	\includegraphics[width=.45\linewidth]{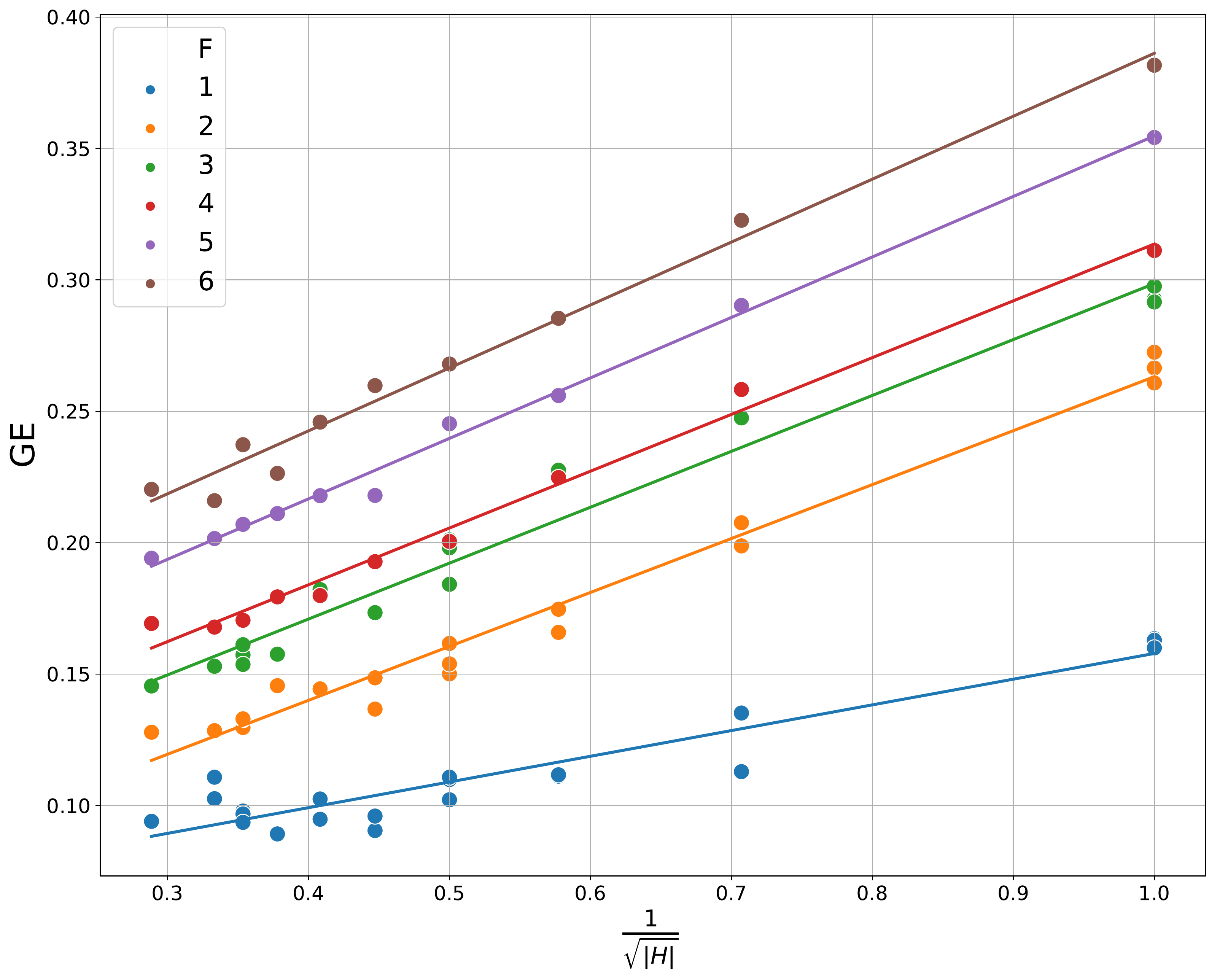}
}\hfill
\subfloat[$G=\O2$, $H = D_N$ or $C_N$]{
	\includegraphics[width=.45\linewidth]{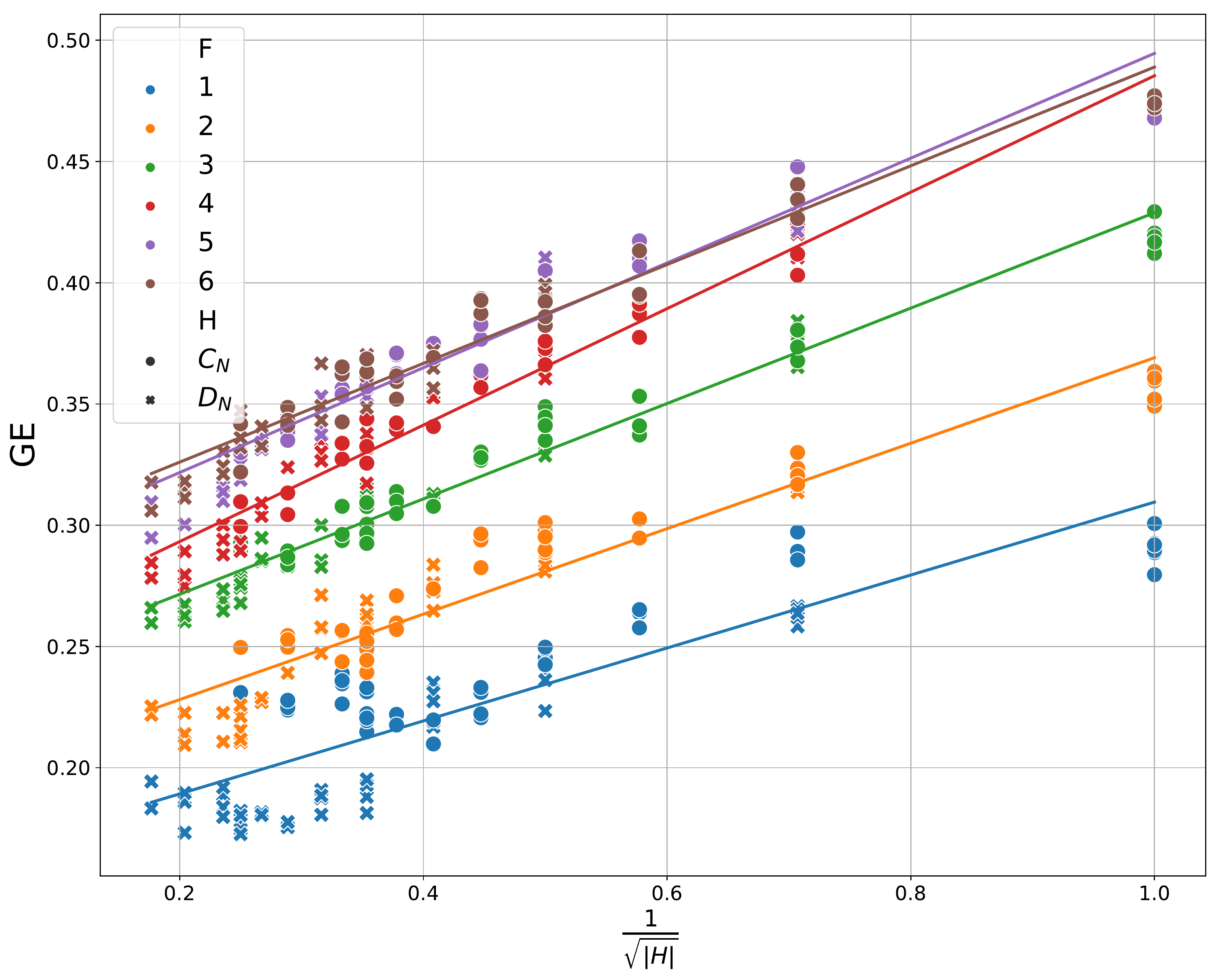}
}
\\
\subfloat[$G = C_M$, $H = C_N$]{
	\includegraphics[width=.45\linewidth]{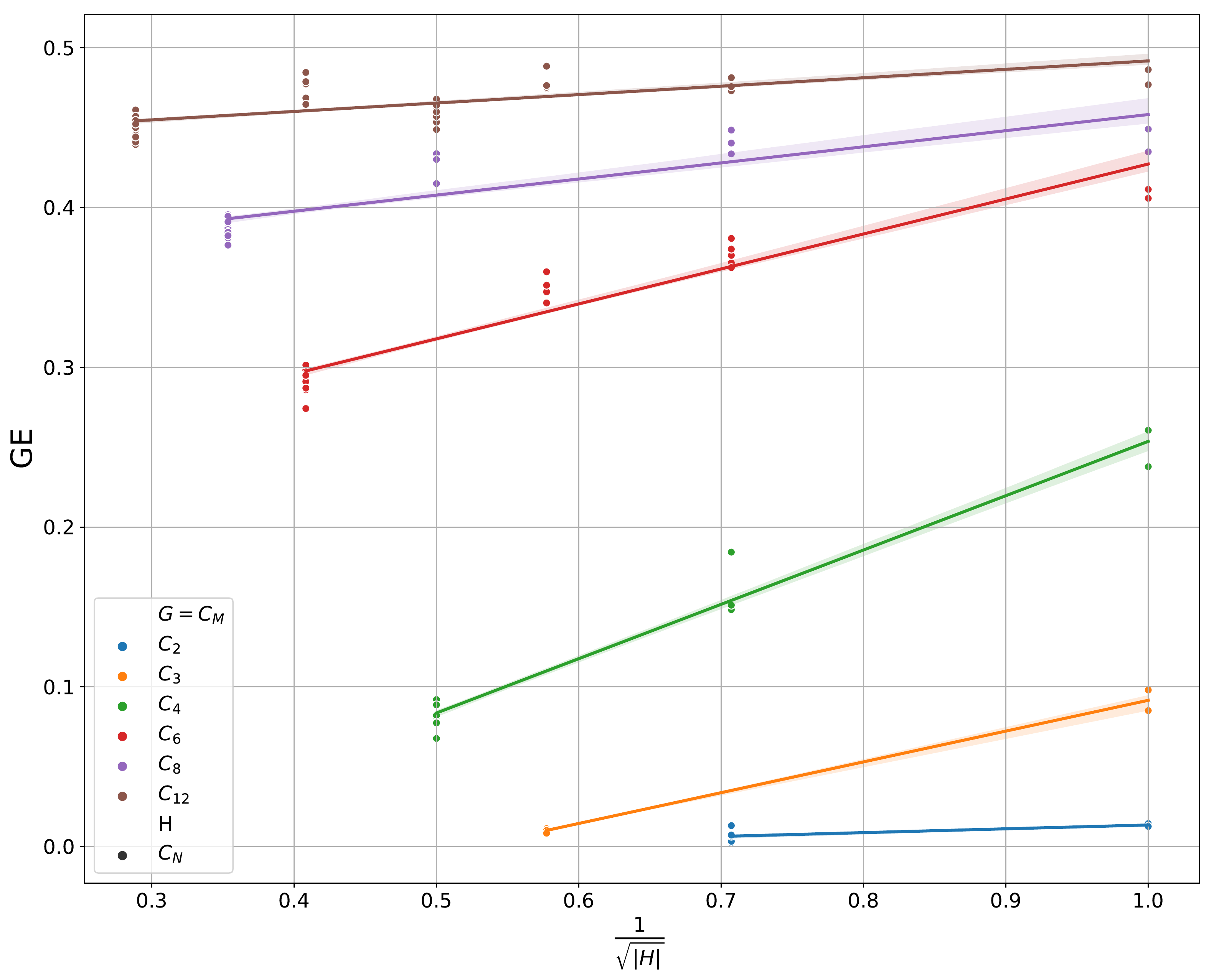}
}\hfill
\subfloat[$G= D_M$, $H = D_N$ or $C_N$]{
	\includegraphics[width=.45\linewidth]{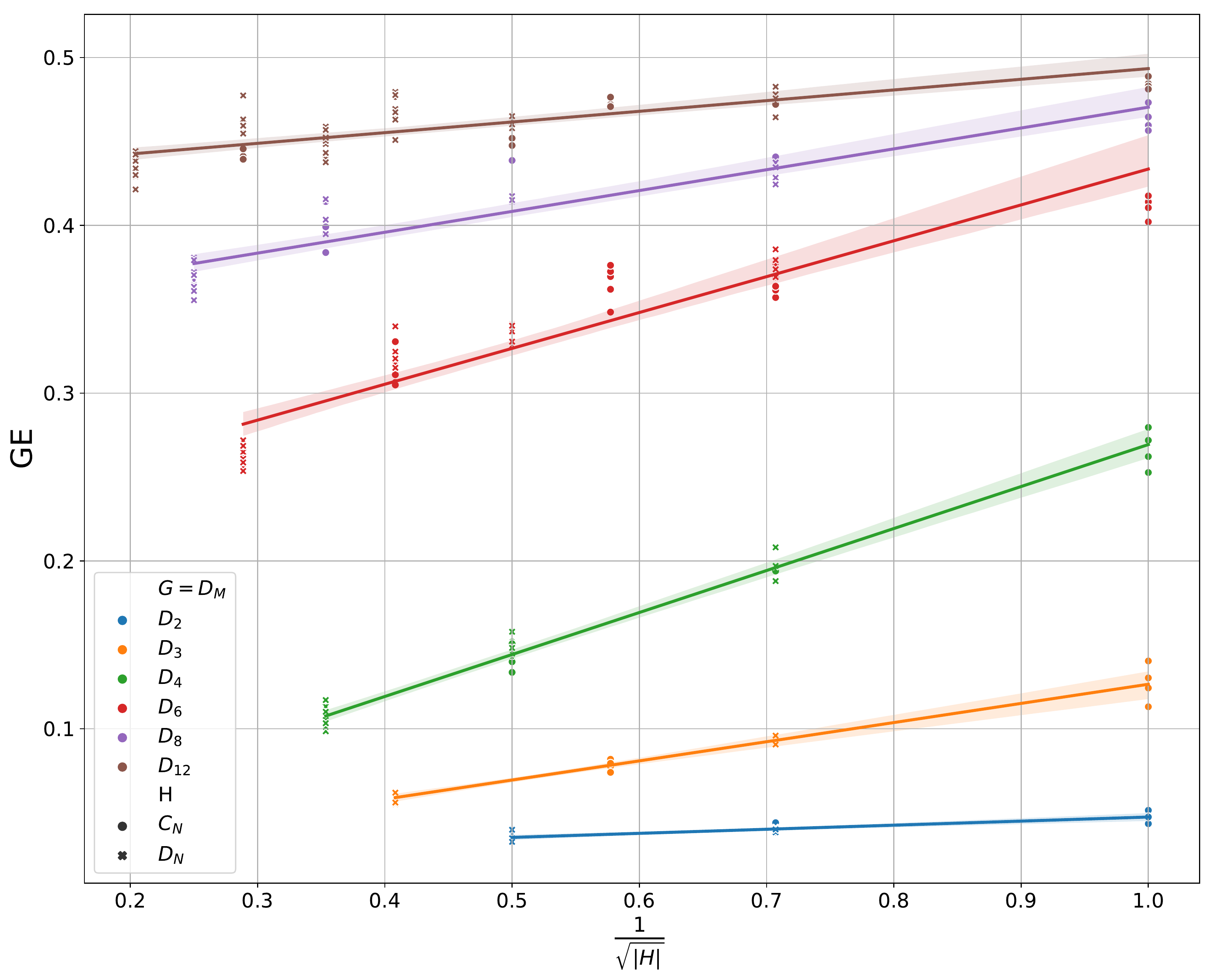}
}
\caption{
    Empirical generalization error (GE) vs $\frac{1}{\sqrt{|H|}}$ for different synthetic datasets, with continuous or discrete symmetries $G$.
    In the continuous symmetries case (first row), $F$ indicates the maximum rotational frequency in the data and, therefore, each color ($F$) corresponds to a different dataset.
    In the discrete symmetries case (second row), $M$ indicates the the number of rotations in the discrete symmetry.
    The visualizations shows a strong correlation between the two terms.
    The correlation is weaker on low frequency $\O2$ datasets because of the different behaviours of $\DN$ and $C_{2N}$.
}
\label{fig:sqrt_relation_bound_synthetic}
\end{figure}
\begin{figure}[ht]
\centering
\subfloat[MNIST, $G = \SO2$]{
	\includegraphics[width=.4\linewidth]{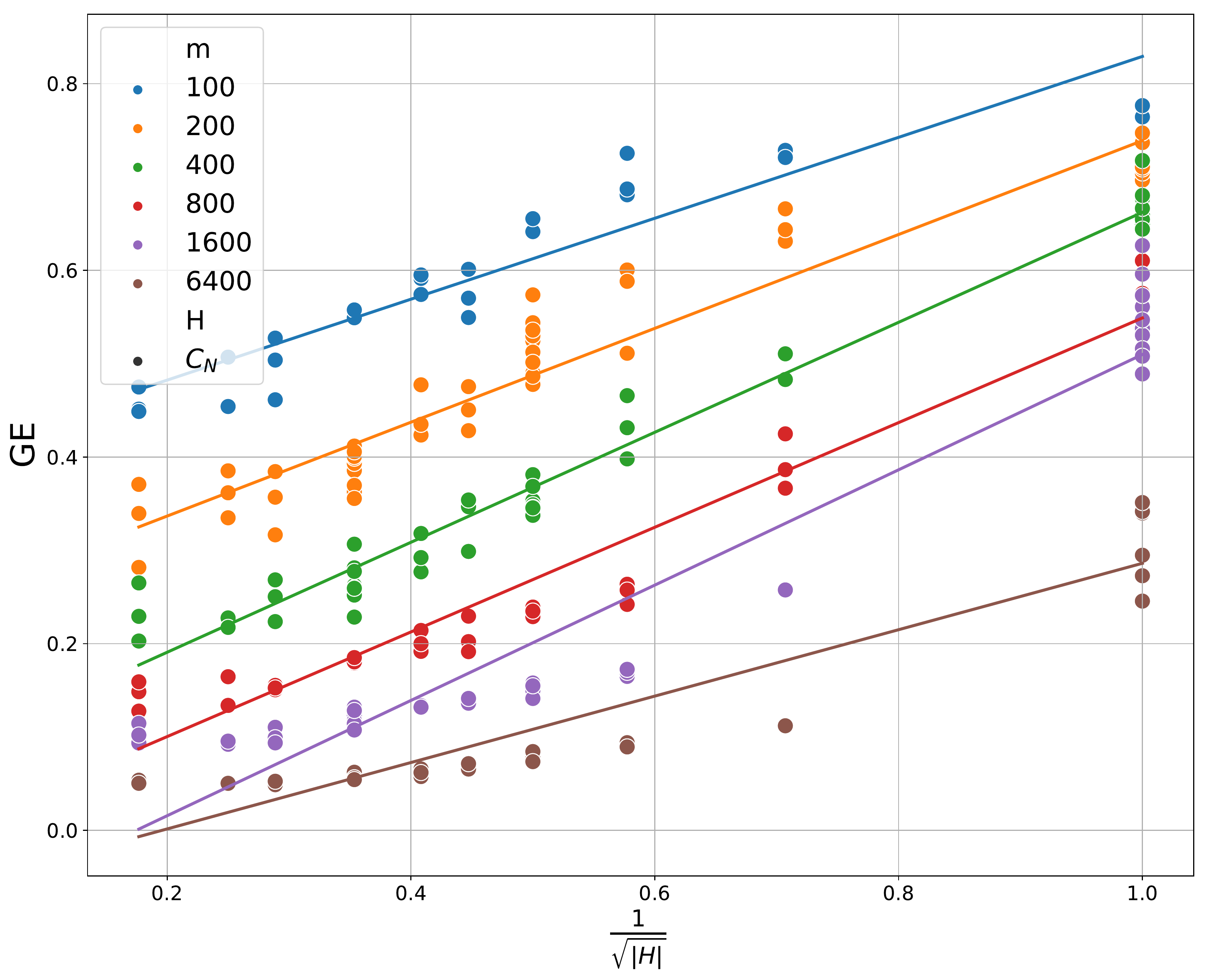}
}\hfill
\subfloat[MNIST, $G=\O2$]{
	\includegraphics[width=.4\linewidth]{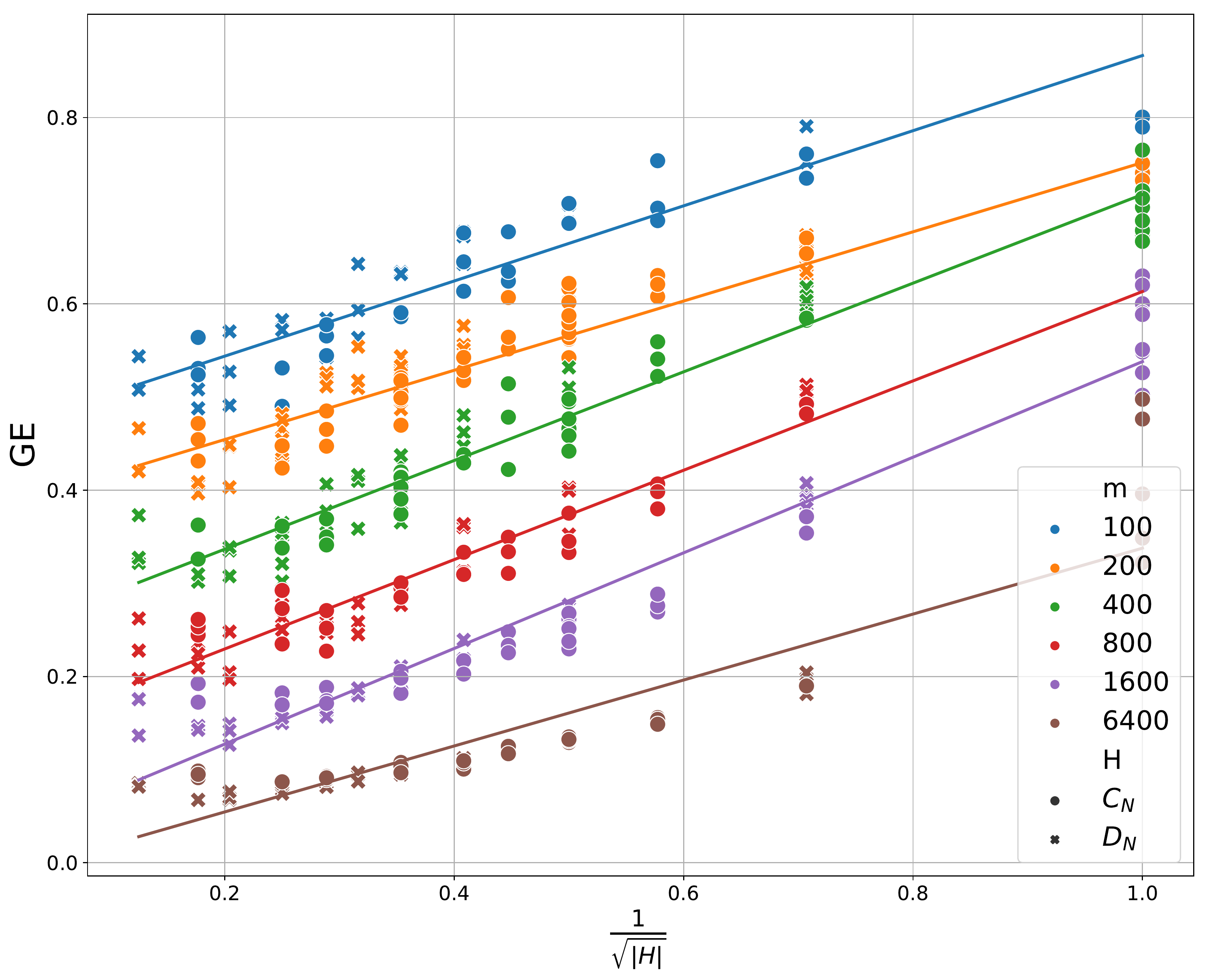}
}
\\
\subfloat[CIFAR10, $G = \SO2$]{
	\includegraphics[width=.4\linewidth]{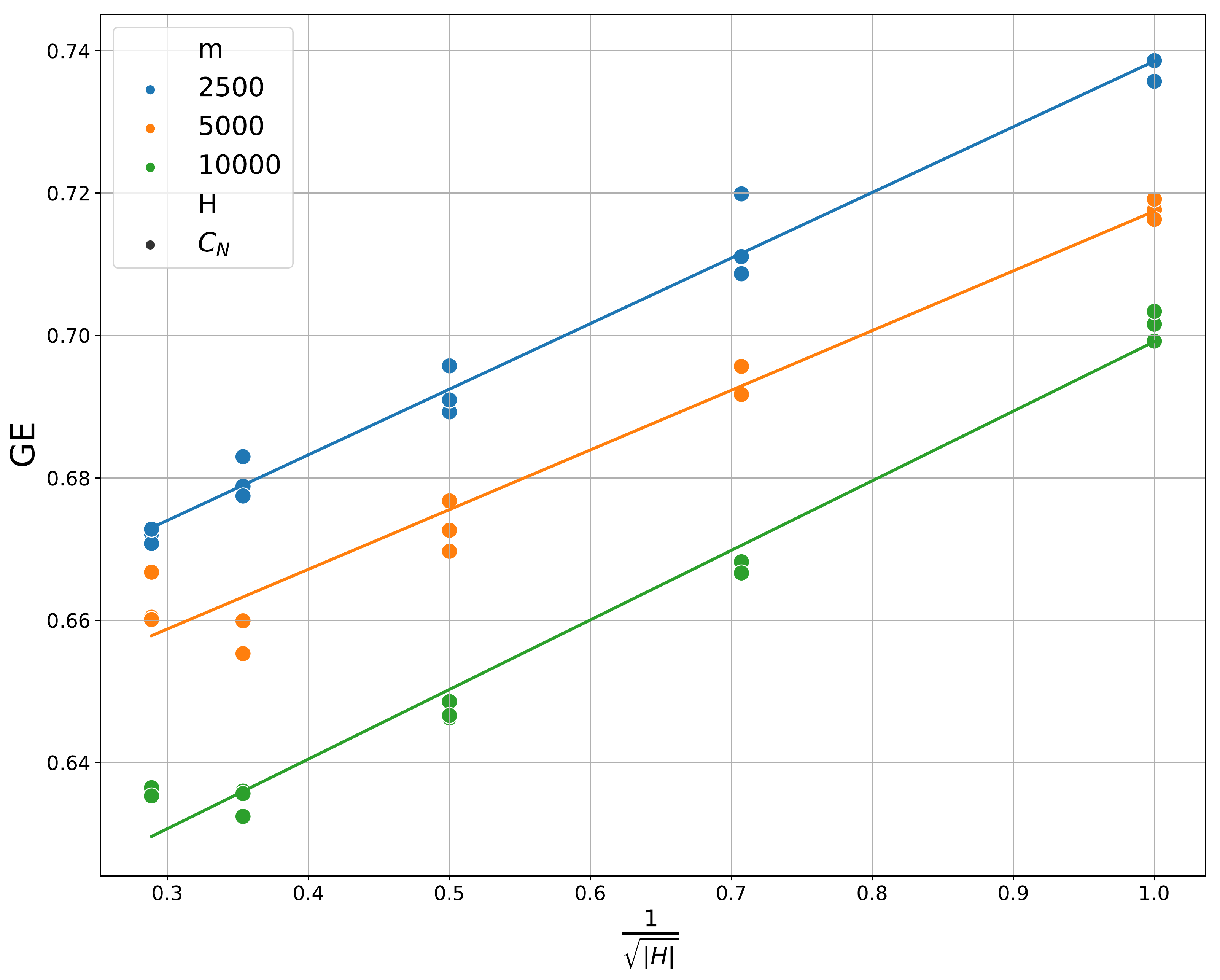}
}\hfill
\subfloat[CIFAR10, $G=\O2$]{
	\includegraphics[width=.4\linewidth]{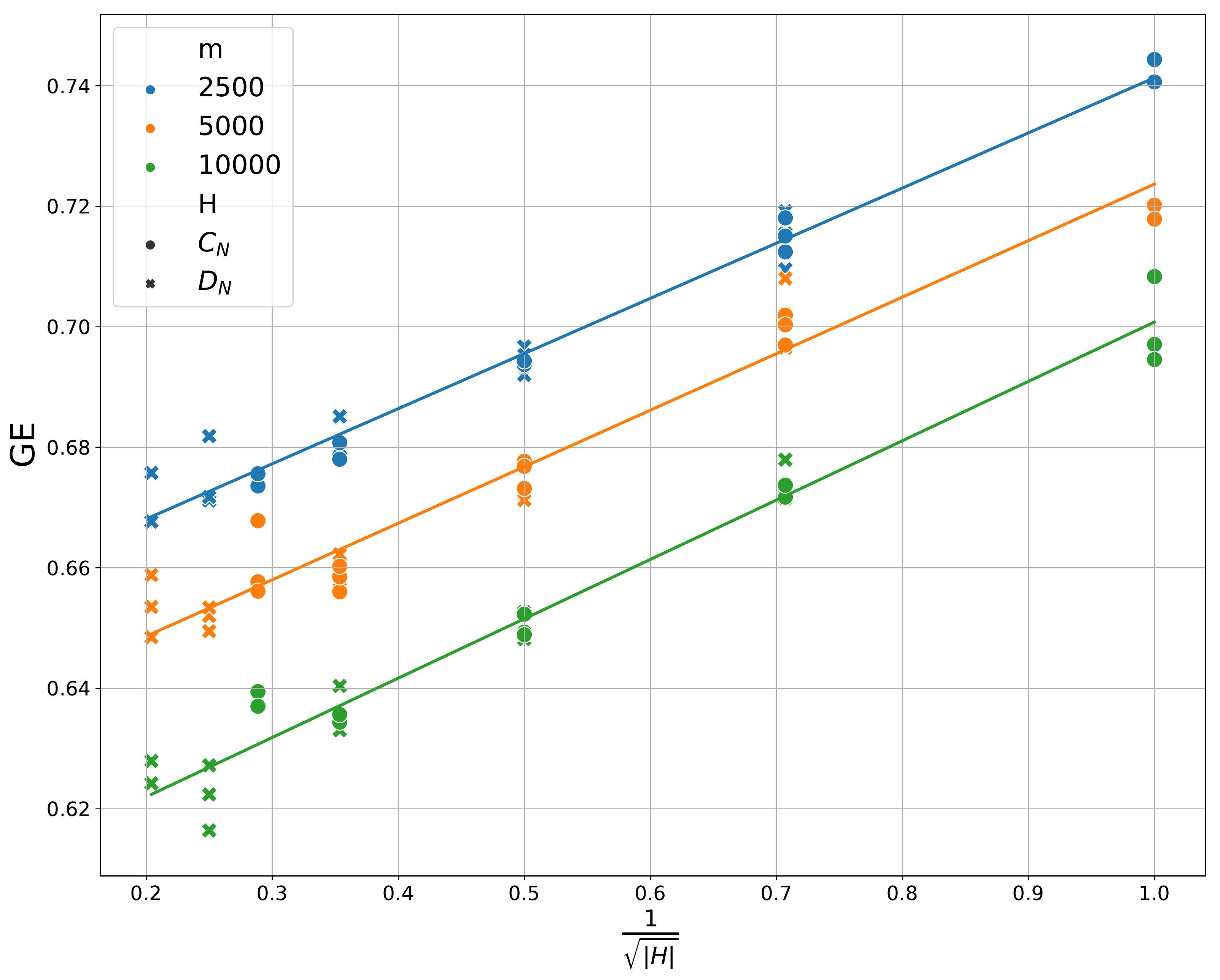}
}
\caption{
    Empirical generalization error (GE) vs $\frac{1}{\sqrt{|H|}}$ on transformed MNIST and CIFAR10 datasets.
    All settings show a strong correlation between the two terms.
}
\label{fig:sqrt_relation_bound_image}
\end{figure}
In this section we study the effect of the size $|H|$ of the equivariance group on the generalization error.
In particular, we hypothesize that the generalization errors is proportional to the quantity $\frac{1}{\sqrt{|H|}}$ as previously observed in \cite{sokolic_generalization_2017}, so we investigate the correlation between these two terms in different datasets.

We consider the synthetic datasets with continuous and discrete symmetries (for different frequencies $F$ or rotation orders $M$) in Fig.~\ref{fig:sqrt_relation_bound_synthetic} and the images datasets (MNIST and CIFAR10) in Fig.~\ref{fig:sqrt_relation_bound_image}.
In all cases, we observe a strong correlation.
However, we note that the slope of the lines varies over different versions of the same dataset or when changing the training set size $m$.
In particular, in Fig.~\ref{fig:sqrt_relation_bound_synthetic}(first line), the slope grows when increasing the frequency $F$ in the data.
In the discrete symmetries case, Fig.~\ref{fig:sqrt_relation_bound_synthetic}(second line), the slope increases with the symmetry group's size $|G|$ but then decreases at the largest values of $|G|$.
In the $\O2$ synthetic data (Fig.~\ref{fig:sqrt_relation_bound_synthetic}(b)), the linear correlation is weaker for low values of $F$ as the generalization does not depend only on the size $|H|$ of the group anymore.
Indeed, the groups $\C{2N}$ and $\DN$ have both size $|H| = 2N$. 
However, when the data only contains low frequency features, considering $2$ times more rotations ($\C{2N}$) becomes eventually unnecessary while introducing reflection equivariance ($\DN$) allows the model to generalize over a whole new set of transformations. Overall, the

\subsection{PAC-Bayesian Bound}

In this section, we focus on the study of the de-randomized PAC-Bayes bound from Sec.~\ref{sec:pac_bayes_bound} in different contexts.
As reported in \cite{nagarajan_uniform_2019}, bounds based on the spectral norm of the weight matrices tend to grow with the dataset size $m$ in practice.
Our perturbation bound falls in this same category and, therefore, we observe a similar behaviour.
Moreover, such bounds are generally vacuous, i.e. greater than $1$.
For these reasons, we do not expect the bound derived to accurately predict the generalization error or describe the effect of different training sizes.
Instead, we are interested in how well the bound correlates with the generalization error and explains the effect of equivariance.
Therefore, we study the correlation between the estimated bound and the generalization error in different settings and datasets when different equivariance groups $H$ are used.

\begin{figure}
\centering
\subfloat[$\SO2$ Synthetic Data]{
    \includegraphics[width=0.45\linewidth]{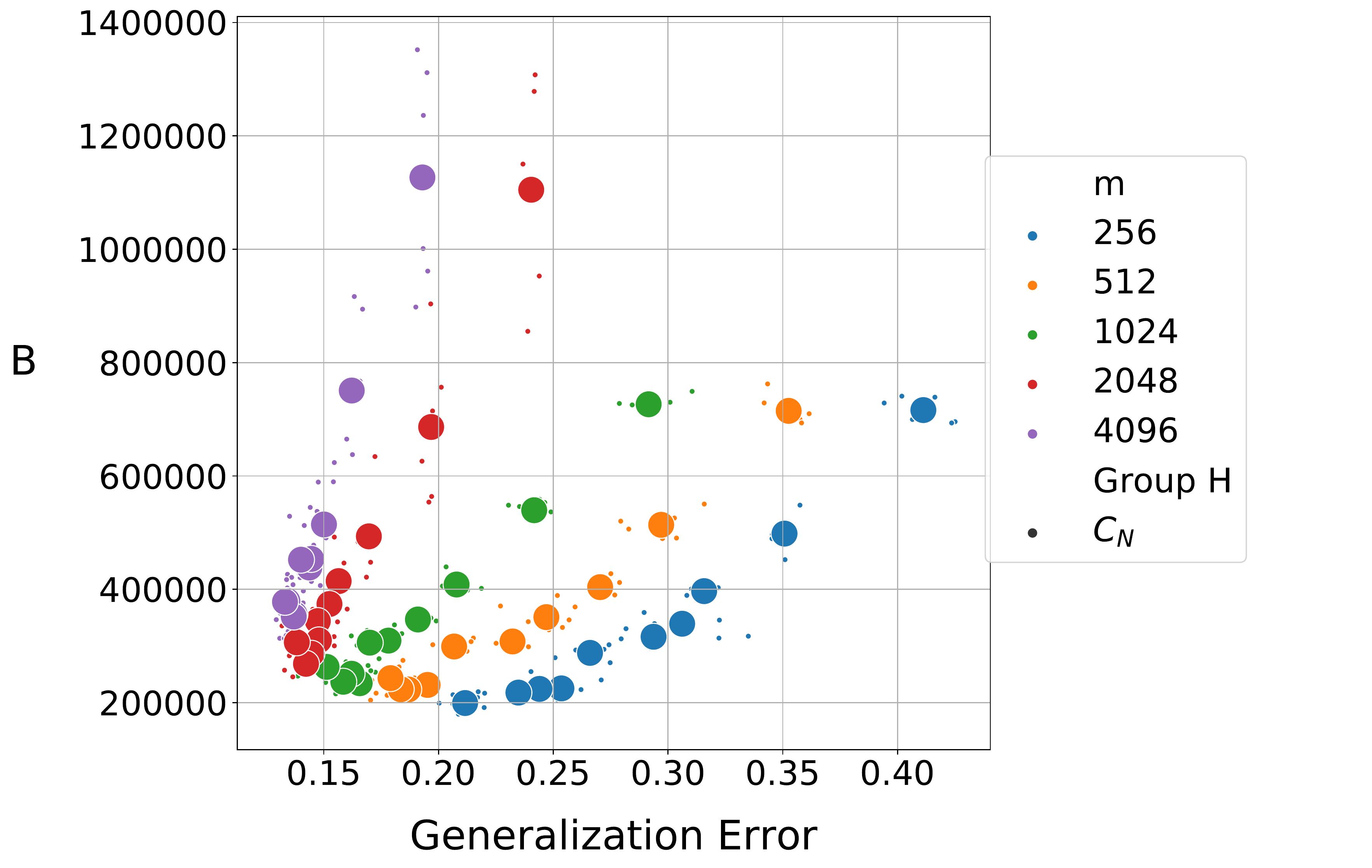}
}\hfill
\subfloat[$\O2$ MNIST]{
    \includegraphics[width=0.45\linewidth]{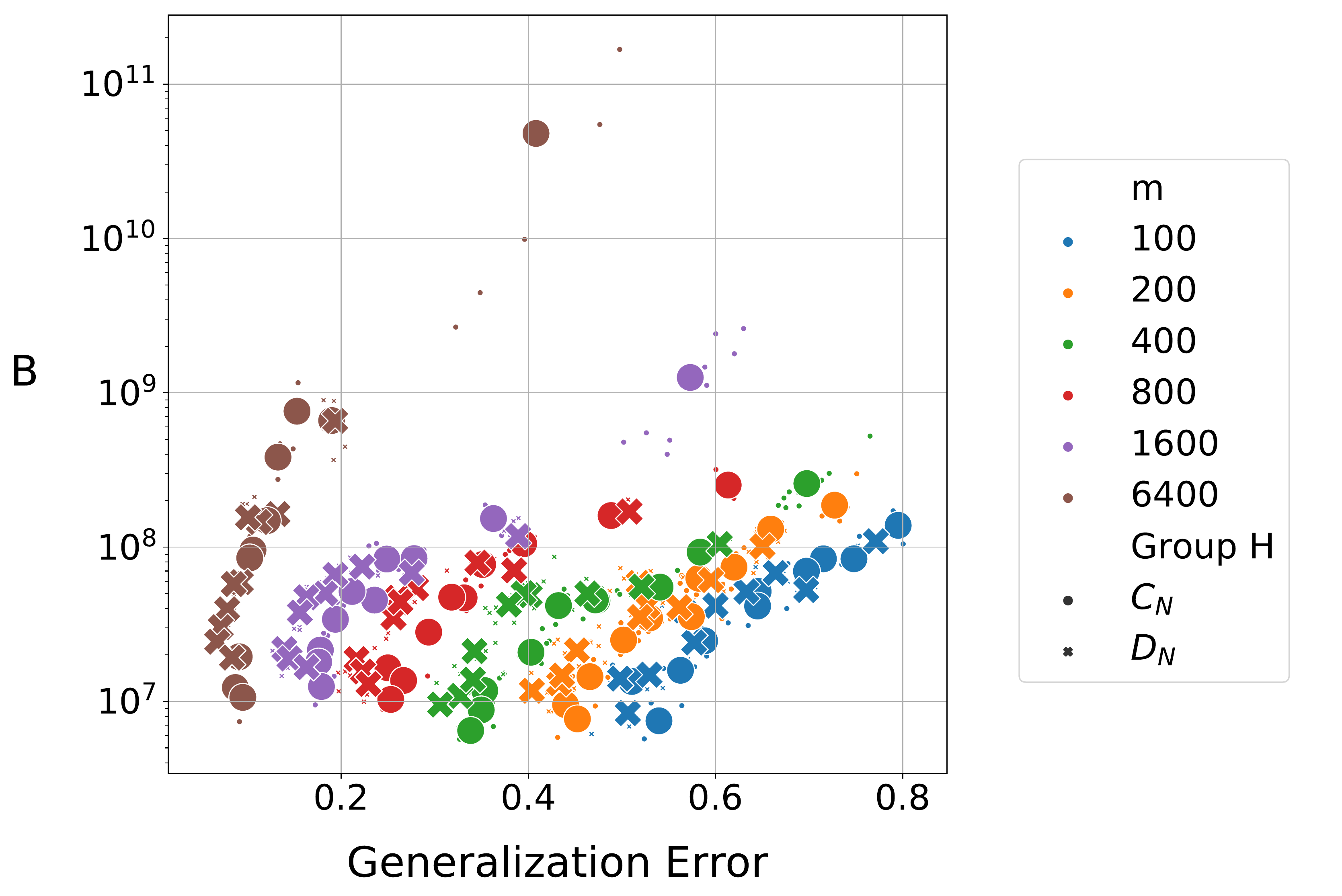}
}
\caption{
    Bound ($B$) from Theorem~\ref{thm:pac-bayesian-equivariant} vs Generalization Error (GE) of different $H$-equivariant models ($H = C_N$ or $H = D_N$) on $\O2$-MNIST and on the $\SO2$ synthetic dataset.
    Different colors represent different training set sizes $m$.
    As expected, the bound does not explain the effect of the training set size and as it grows with $m$.
}
\label{fig:train_size_effect}
\end{figure}
In Fig.~\ref{fig:train_size_effect} we look at the correlation between our bound and the generalization error on $\O2$-MNIST and on the $\SO2$ symmetric synthetic dataset for different training set sizes.
Nevertheless, we observe a good correlation between them when considering a fixed training set size $m$, i.e. by looking at each colored sequence independently.
This is the relation we are most interested in and which we now explore further.

In Fig.~\ref{fig:hom_o2}(first row) and~\ref{fig:hom_so2}(first row) we have already observed this correlation on $6$ versions of the $\O2$ and the $\SO2$ symmetric synthetic datasets.
In Fig.~\ref{fig:hom_bounds_others}, we repeat a similar study on the other three types of datasets.

We notice that the results on the synthetic datasets (both with continuous symmetries as in Fig.~\ref{fig:hom_o2} and~\ref{fig:hom_so2} and with discrete symmetries as in the first row of Fig.~\ref{fig:hom_bounds_others}) show a better linear correlation with the generalization error.
Conversely, the results on the image datasets are in log scale on the $Y$ axis, suggesting a superlinear scaling of the bound with the generalization error.
In light of the observations in Supplementary~\ref{apx:experiments:ge_group_size}, this also implies that the bound does not scale as $\frac{1}{\sqrt{|H|}}$ on these two datasets.

\begin{figure}[th]
\centering
\subfloat[$C_M$ Synthetic Data]{
    \includegraphics[width=0.45\linewidth]{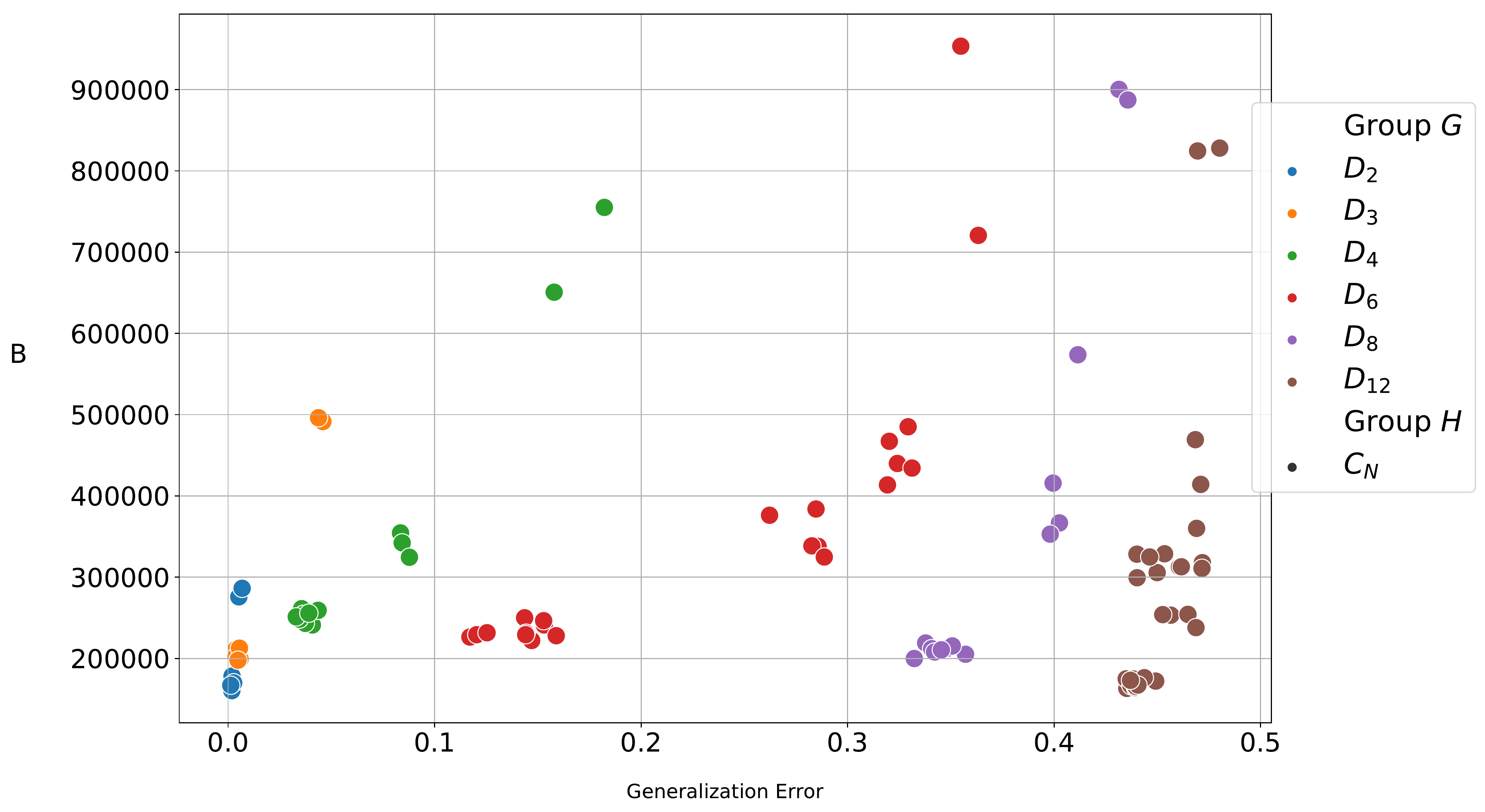}
}\hfill
\subfloat[$D_M$ Synthetic Data]{
    \includegraphics[width=0.45\linewidth]{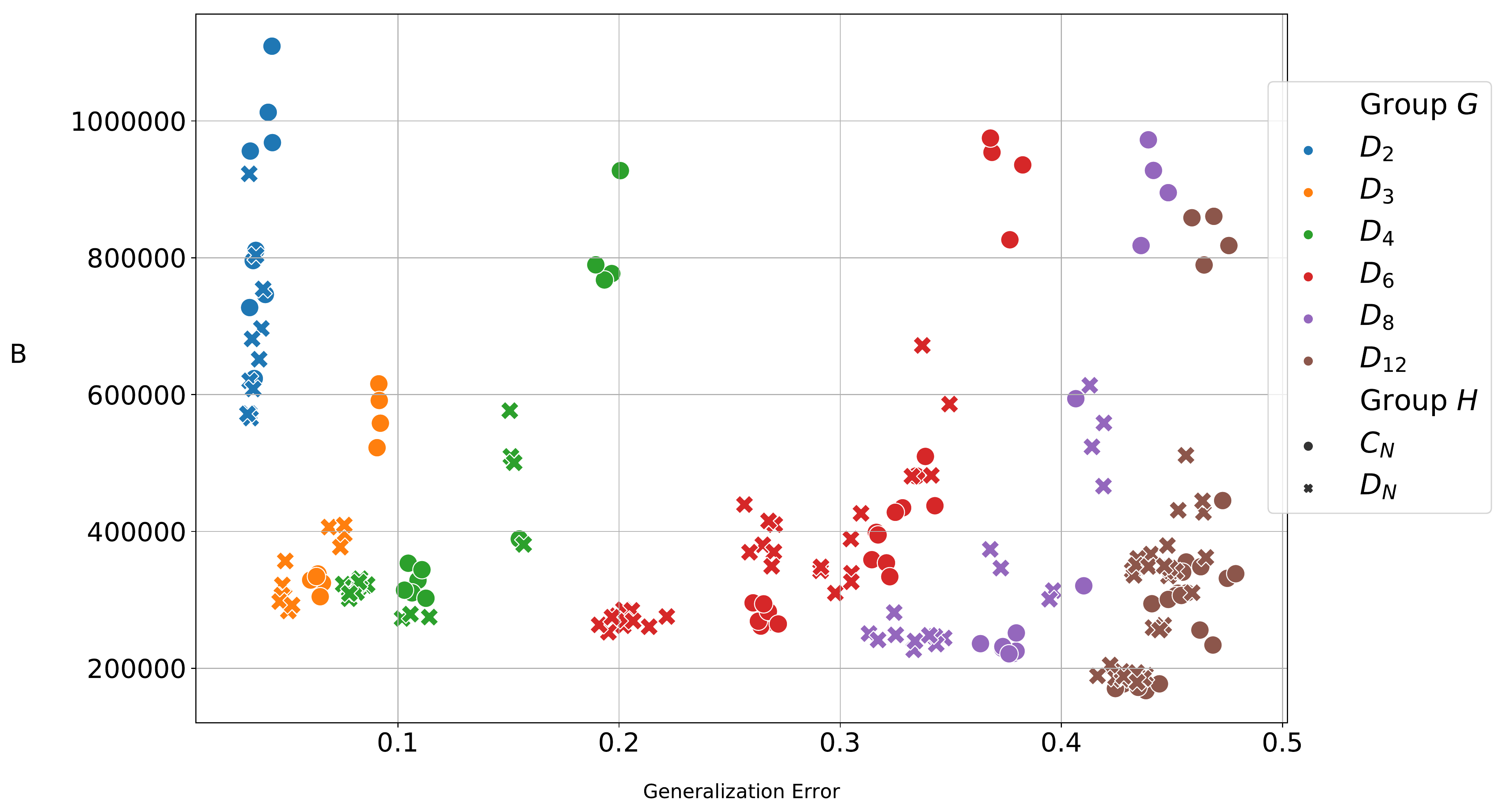}
}

\subfloat[$\SO2$ MNIST]{
    \includegraphics[width=0.45\linewidth]{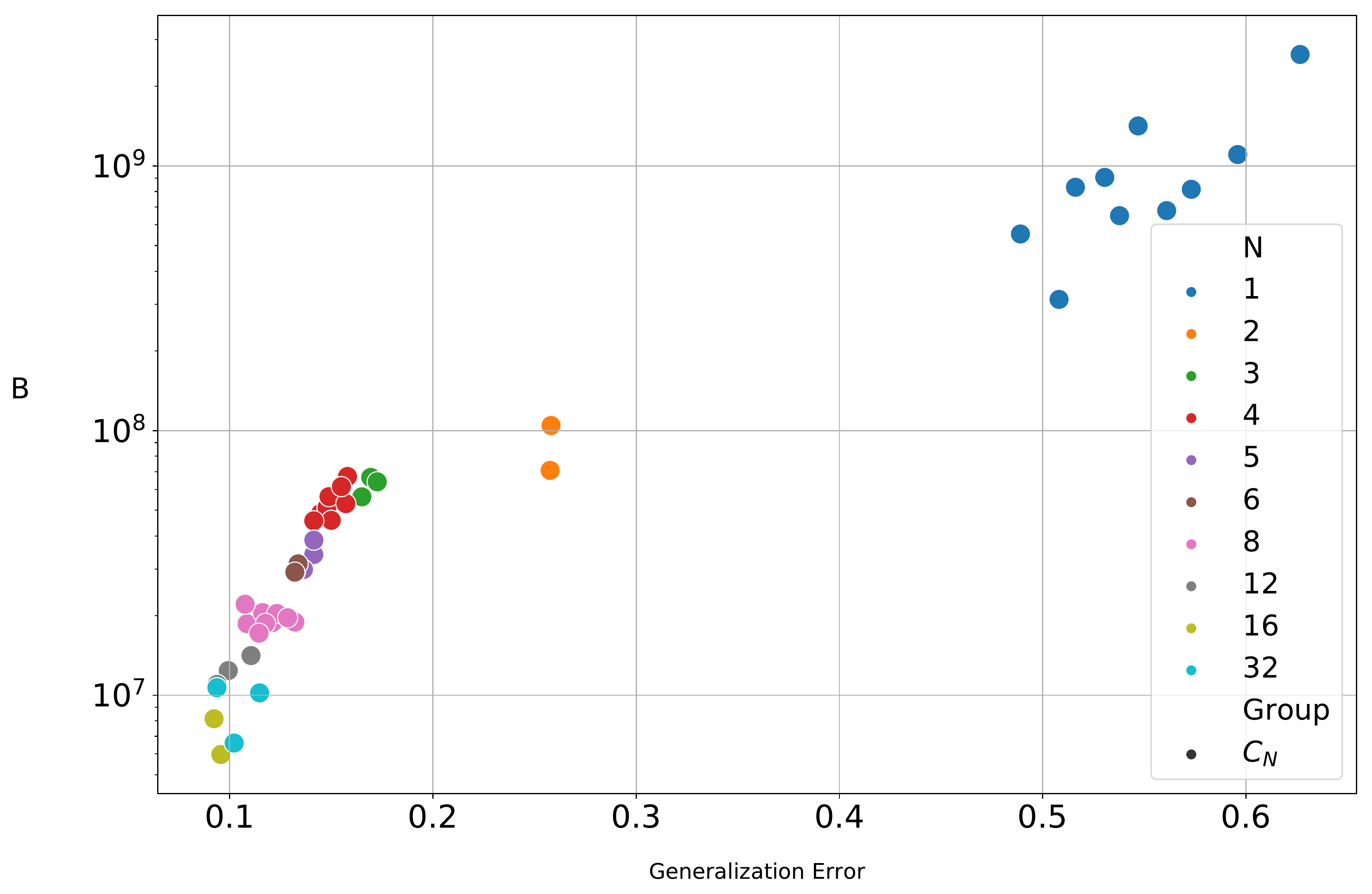}
}\hfill
\subfloat[$\O2$ MNIST]{
    \includegraphics[width=0.45\linewidth]{mnist_hom_so2_aug_T1600.pdf}
}

\subfloat[$\SO2$ CIFAR10]{
    \includegraphics[width=0.45\linewidth]{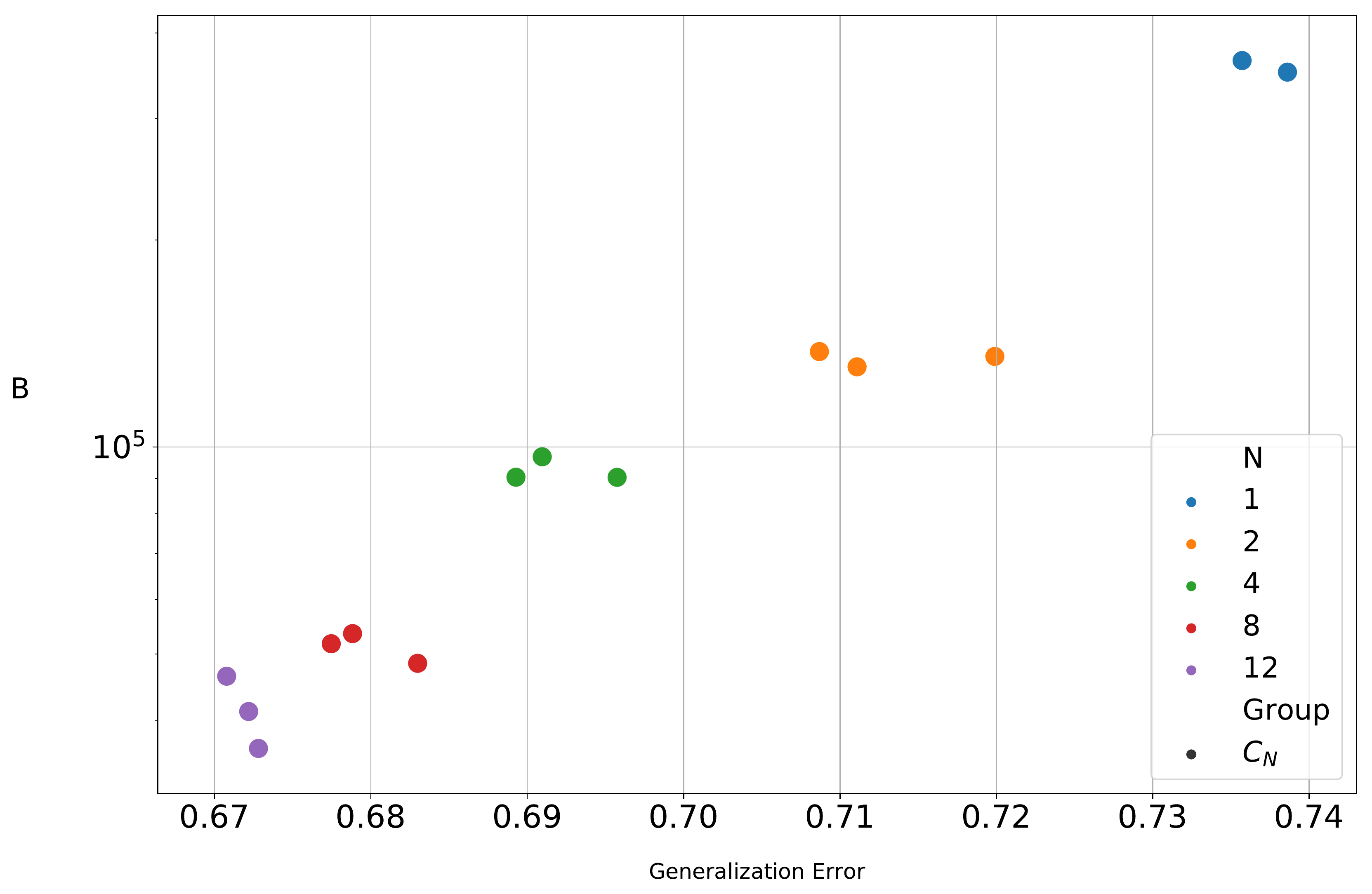}
}\hfill
\subfloat[$\O2$ CIFAR10]{
    \includegraphics[width=0.45\linewidth]{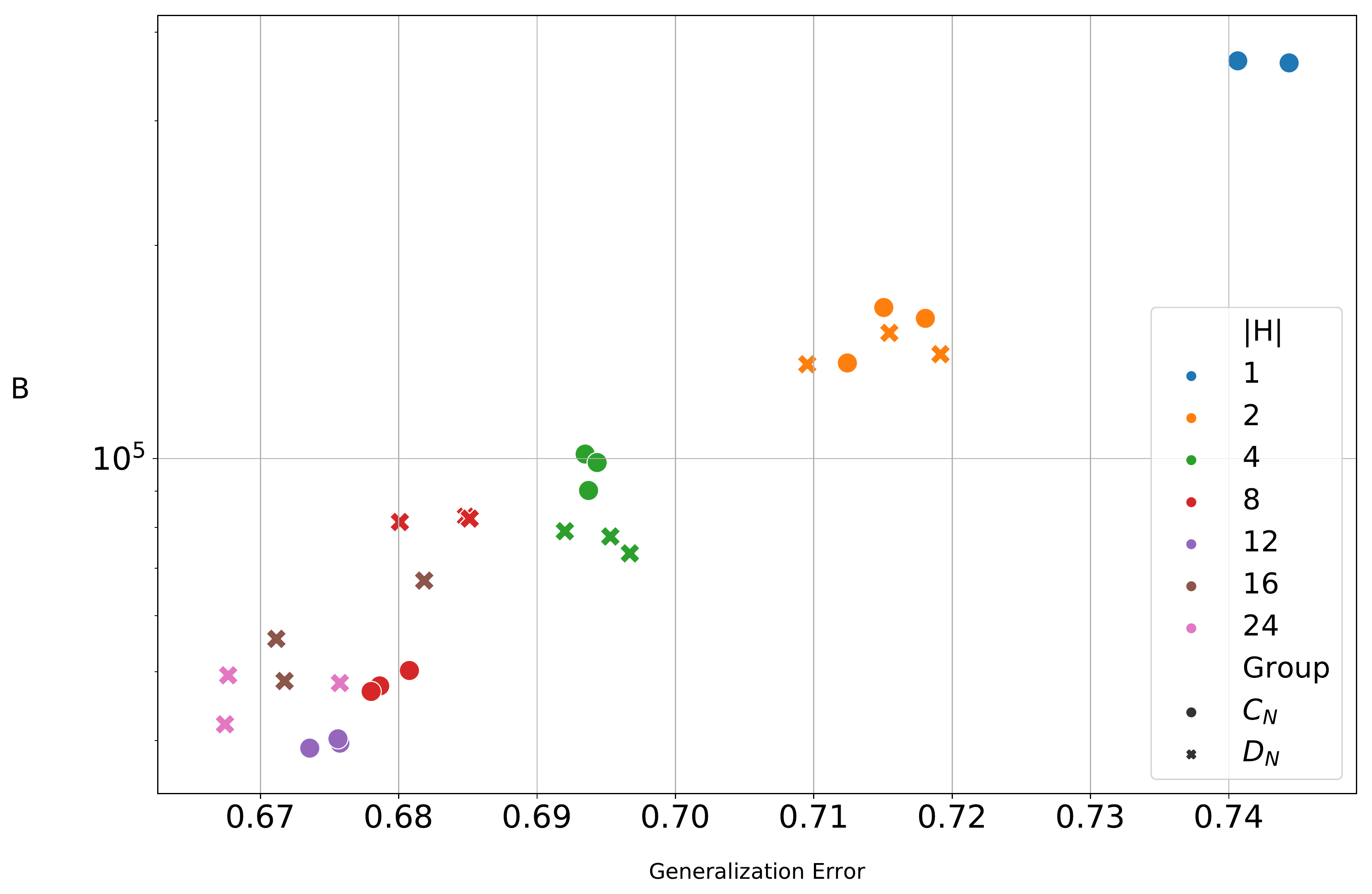}
}
\caption{
    Bound ($B$) from Theorem~\ref{thm:pac-bayesian-equivariant} vs Generalization Error (GE) of different $H$-equivariant models on different datasets.
    We always use $\P_S = \P_G$.
    In the synthetic ones, different colors represent different discrete symmetry groups $G$, where $M$ is the number of rotations in $G=\D{M}$ or $G=\C{M}$; higher values of $M$ define more complex datasets.
}
\label{fig:hom_bounds_others}
\end{figure}

It also follows that the alternative bounds considered in Supplementary~\ref{apx:comparison_neyshabur} are not worse here.
However, in the synthetic dataset with discrete symmetries, the same observation from Section~\ref{sec:experiments} apply.
Indeed, in Fig.~\ref{fig:hom_bound_dm_comparison}, we repeat the experiments in Fig.~\ref{fig:hom_bounds_others}(b) but use the alternative bound from Supplementary~\ref{apx:comparison_neyshabur}.
As already shown in Fig.~\ref{fig:hom_o2}(second row) and~\ref{fig:hom_so2}(second row) for the continuous symmetry synthetic datasets, this bound does not capture the effect of different equivariance groups $H$.

\begin{figure}[hp]
    \centering
    \includegraphics[width=0.45\linewidth]{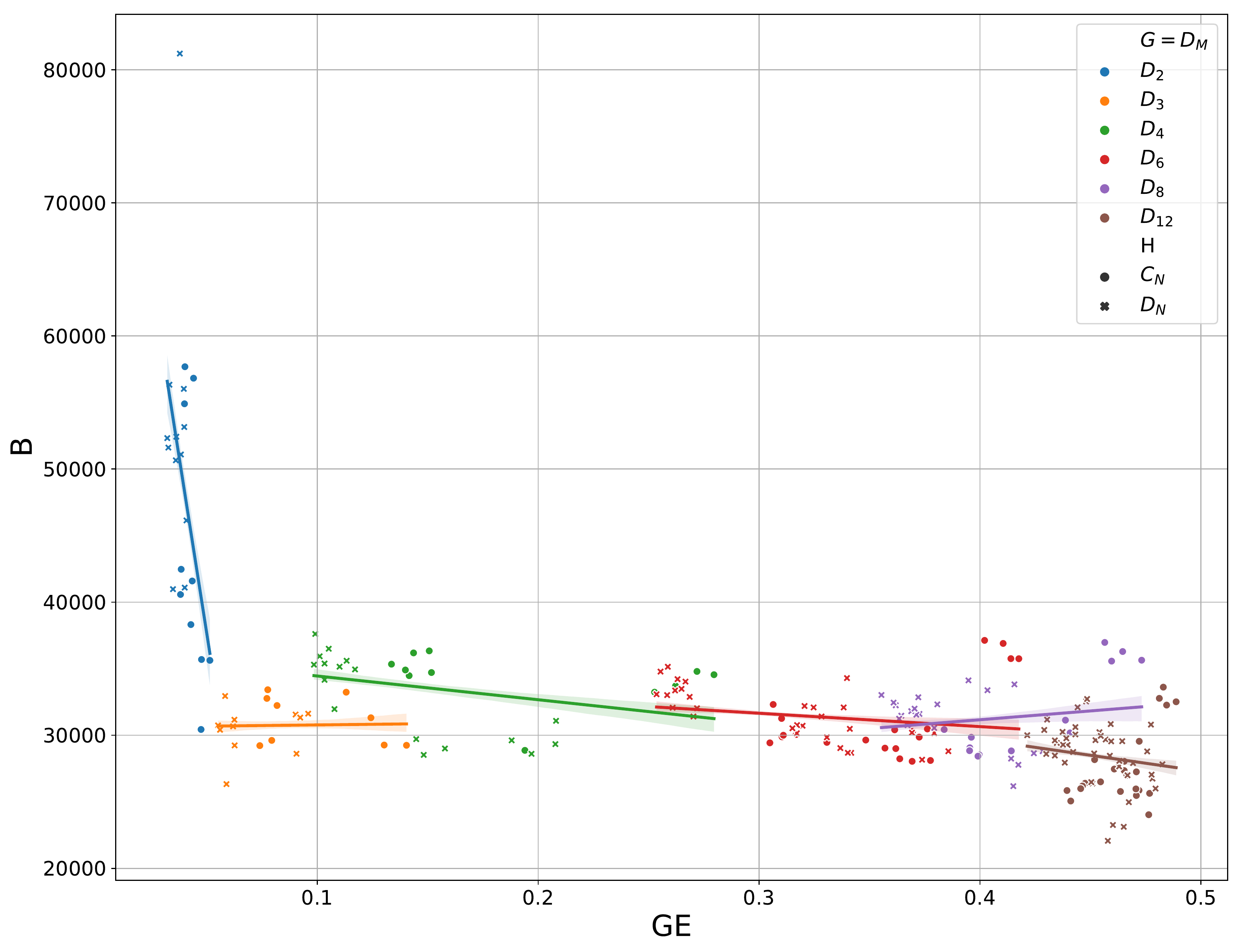}
    \caption{
    Alternative bound ($B$) from Supplementary~\ref{apx:comparison_neyshabur} vs Generalization Error (GE) of different $H$-equivariant models on synthetic dataset with discrete $\D{M}$ symmetries.
    As in the continuous symmetry cases, this bound does not capture the effect of $H$ equivariance.
    }
    \label{fig:hom_bound_dm_comparison}
\end{figure}
\subsection{PAC-Bayes Bound During Training}
\label{apx:perturbation_train}

In this section, we study how the bound in Section~\ref{subsec:pac_bayes_perturbation} evolves during training.
In particular, we consider a few different equivariance groups $H$, and we train them on the original labels as well as on random labels.

\begin{figure}[!h]
\subfloat[$G = \SO2$]{
    \includegraphics[width=0.48\linewidth]{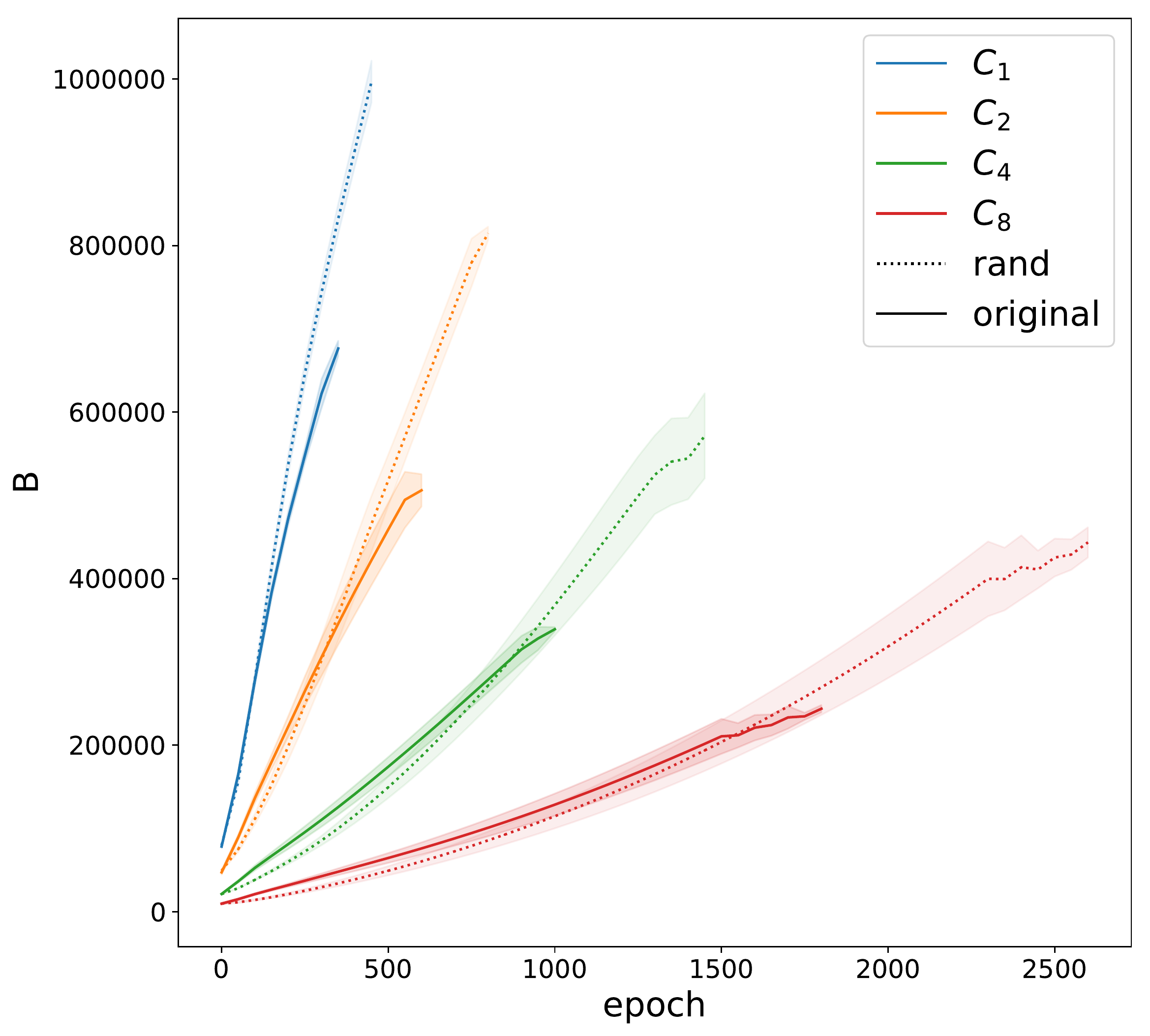}
}\hfill
\subfloat[$G = \O2$]{
    \includegraphics[width=0.48\linewidth]{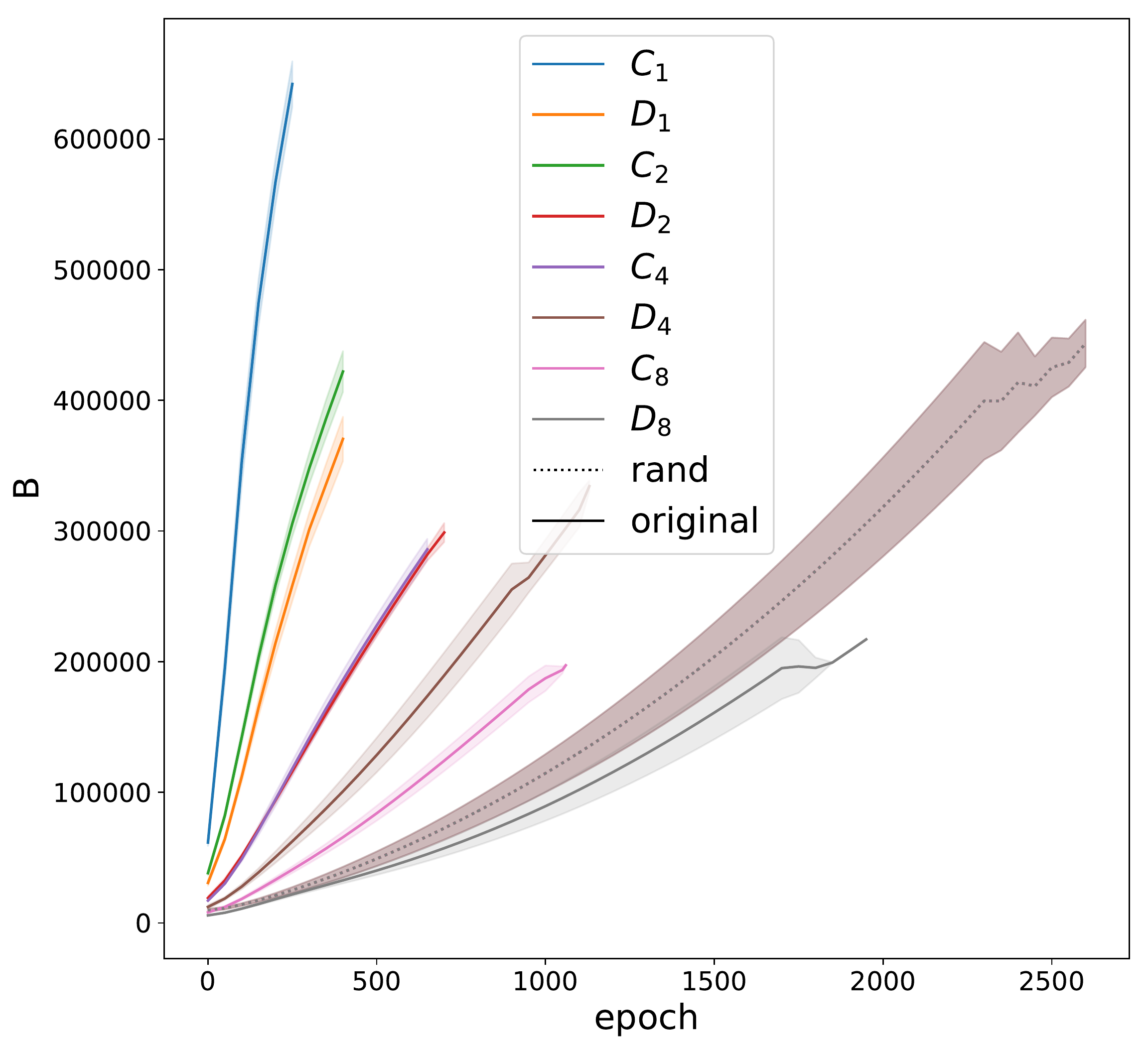}
}
\caption{
    Evolution of the PAC-Bayes Bound ($B$) during the training of $H$-equivariant models on the $\SO2$ and $\O2$ datasets with $F=3$ and $m=1024$ examples.
    Dashed lines represent the models trained on random labels.
}
\label{fig:train_synthetic}
\end{figure}

\Figref{fig:train_synthetic} reports the results of these experiments on the two of the continuous synthetic datasets.
We observe that the bounds grow during training for all models.
This happens using both the original and the random labels.
Moreover, the bound computed with the randomly labeled training sets grows larger than the one computed on models trained on the original labels.

\subsection{PAC-Bayes Bound on Randomly Labelled Data}
\label{apx:perturbation_rand}

\cite{zhang_understanding_2017} has shown that neural networks can fit randomly labelled training sets, obtaining arbitrarily bad generalization error.
We expect a useful generalization bound to be consistent with this result and, therefore, be larger when computed on models trained on random labels.
We have already observed in Supplementary~\ref{apx:perturbation_train} that the bound tends to grow faster while training on random datasets.
Here, we perform a more extensive comparison on the two variations of the MNIST dataset.
\begin{figure}[h!]
\centering
\subfloat[$H = C_N$ and $G = \SO2$]{
    \includegraphics[width=0.5\linewidth]{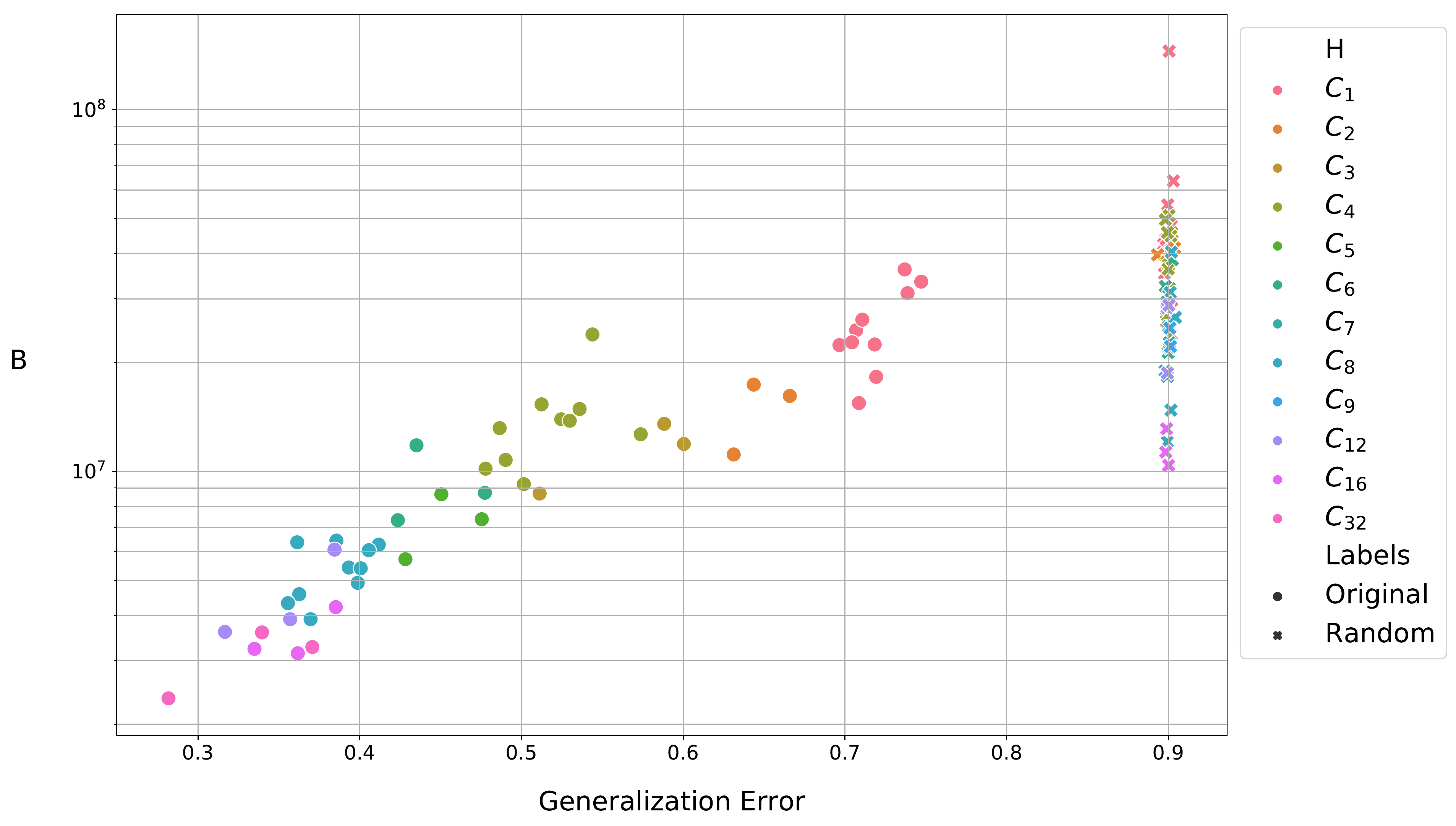}
}

\subfloat[$H = C_N$ and $G = \O2$]{
    \includegraphics[width=0.5\linewidth]{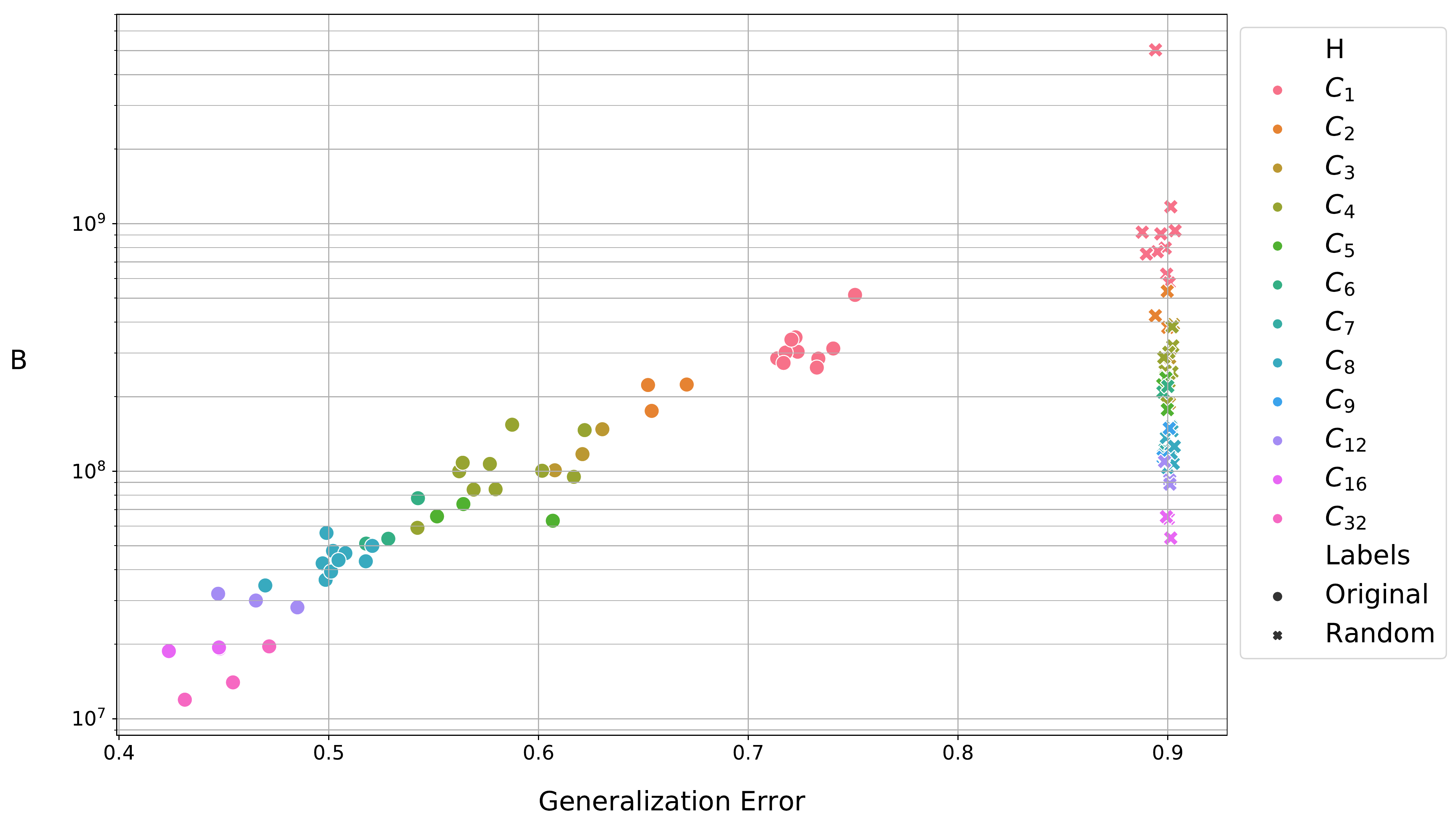}
}
\subfloat[$H = D_N$ and $G = \O2$]{
    \includegraphics[width=0.5\linewidth]{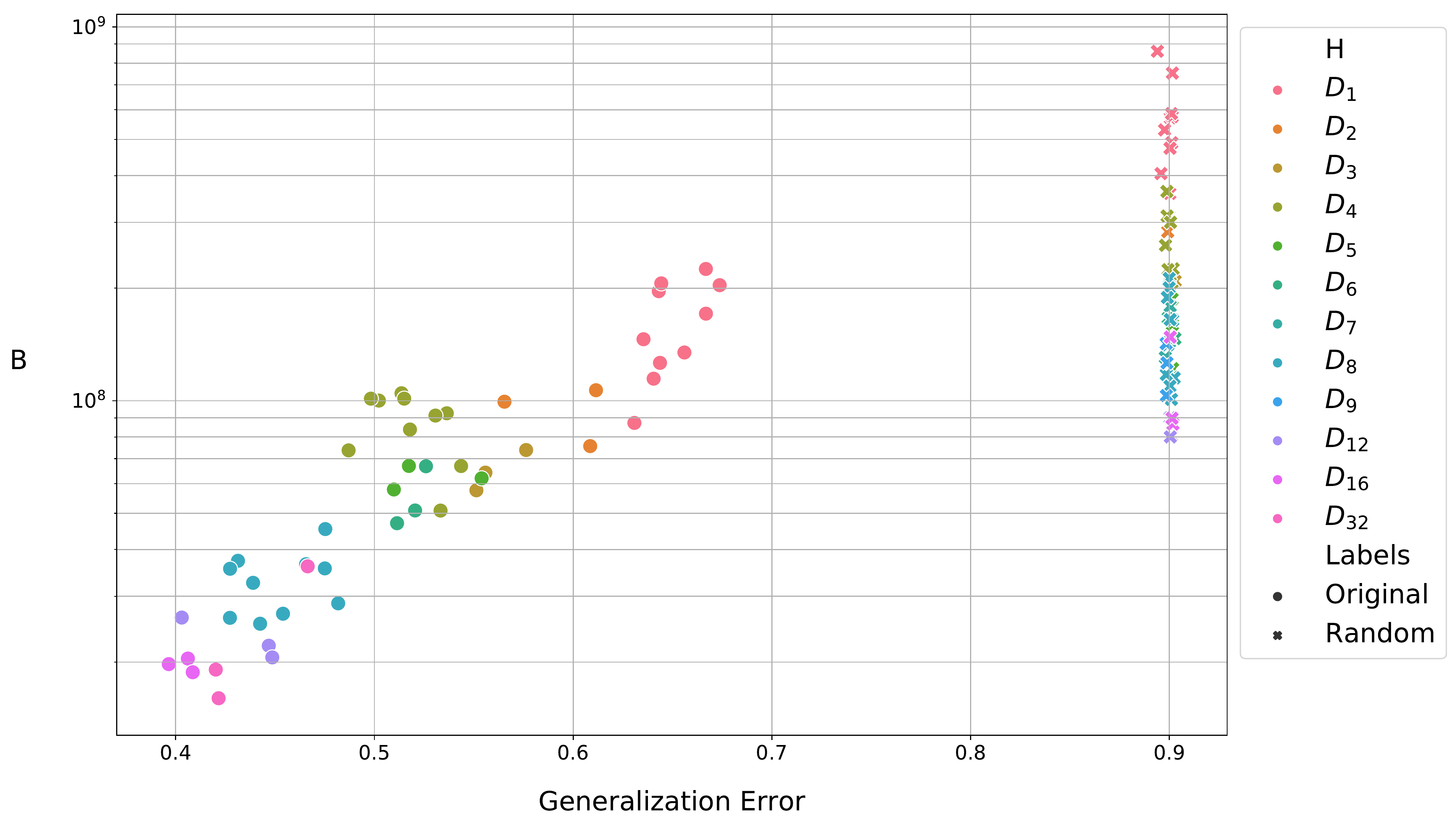}
}
\caption{
    PAC-Bayes Bound ($B$) from Theorem~\ref{thm:pac-bayesian-equivariant} versus Generalization Error (GE) of different $H$-equivariant models on different $G$-MNIST datasets ($G = \SO2$ or $G=\O2$) when the models are trained with the real labels or with random labels.
    Crosses represent models trained on random labels, while circles represent models trained on the original labels.
    Note that for the same training setting, the bound computed on the models trained on random labels is always higher.
}
\label{fig:rand_mnist}
\end{figure}
In Fig.~\ref{fig:rand_mnist}, we compare the bound of different models trained on the original labels (circles) or on random ones (crosses).
Training is performed on an augmented dataset, which means that each image is also transformed with a random element $g \in \group$.
As expected all models trained on the random labels have generalization error equals to $0.9$ (like a random classifier).
Fixing a model (same $H$, i.e. same color), we observe that the bound computed over models trained on random labels is always higher.
On larger training sets, we obtain higher train errors; 
we think this is due to the limited size, and therefore, capacity of the model used.

\begin{figure}[h!]
\centering
\includegraphics[width=0.5\linewidth]{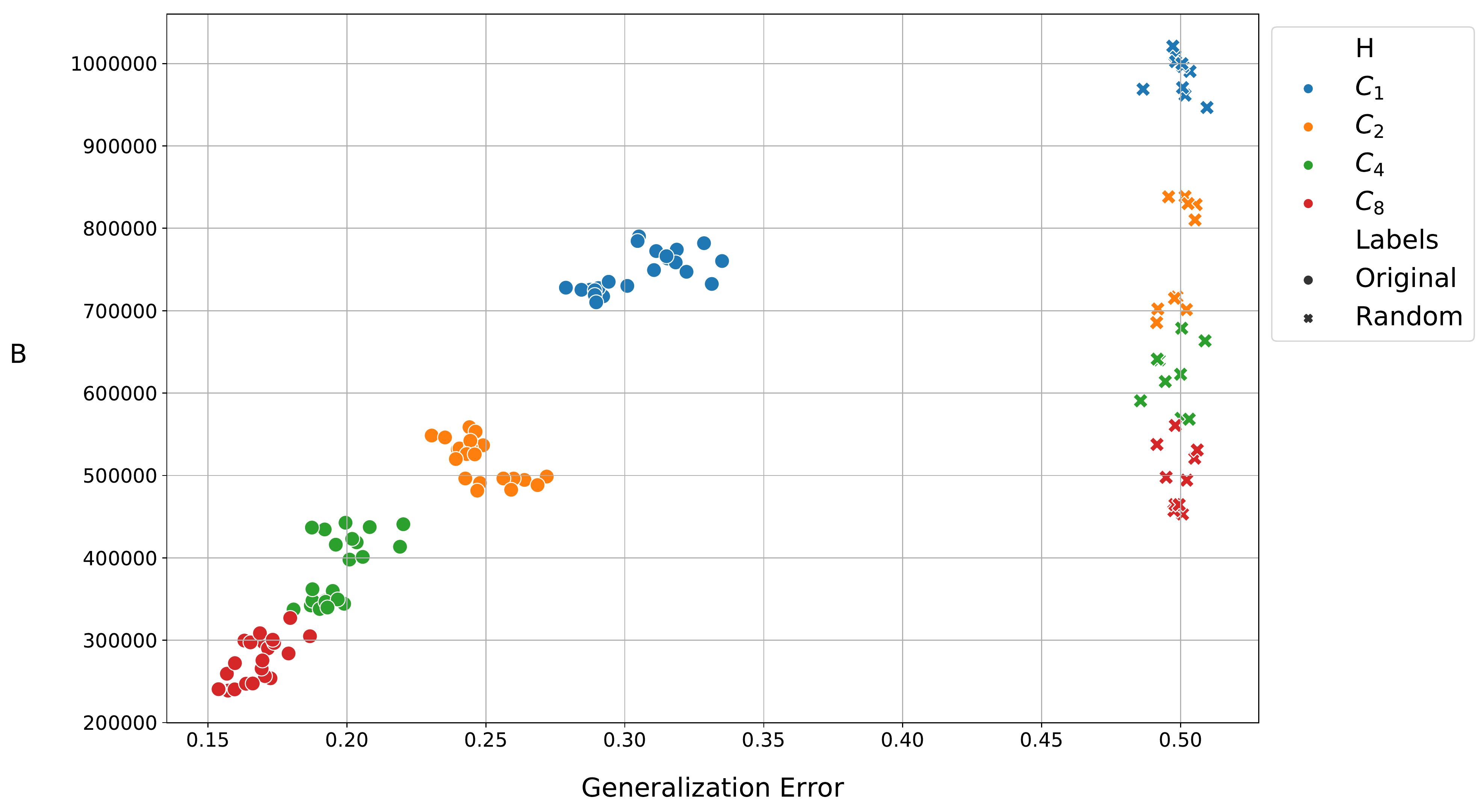}
\caption{
    PAC-Bayes Bound ($B$) versus Generalization Error (GE) of different $\CN$-equivariant models on the synthetic continuous $G=\SO2$ datasets when trained with the real labels or with random labels.
    Crosses represent models trained on random labels, while circles represent models trained on the original labels.
    We used frequency $F=3$ and $F=4$ and $m=1024$.
}
\label{fig:rand_continuous}
\end{figure}
We perform a similar experiment on the synthetic $G=\SO2$ dataset in Fig.~\ref{fig:rand_continuous}.
Again, we are able to fit a randomly labeled training set with equivariant models, given sufficiently large models.
Since this is a binary classification task, the models trained on the random labels have $GE=0.5$ generalization error.

\end{document}